\newcommand{\bG}{\text{\boldmath{$G$}}}
\newcommand{\bB}{\text{\boldmath{$B$}}}
\newcommand{\bx}{\boldsymbol{x}}
\newcommand{\barx}{\bar{\boldsymbol{x}}}
\newcommand{\bxi}{\boldsymbol{\xi}}
\newcommand{\ba}{\boldsymbol{a}}
\newcommand{\bu}{\boldsymbol{u}}
\newcommand{\bb}{\boldsymbol{b}}
\newcommand{\by}{\boldsymbol{y}}
\newcommand{\beps}{\boldsymbol{\epsilon}}
\newcommand{\bI}{\boldsymbol{I}}
\newcommand{\bh}{\boldsymbol{h}}
\newcommand{\bA}{\boldsymbol{A}}
\newcommand{\bbR}{\mathbb{R}}
\newcommand{\grad}{\mathrm{grad}}
\newcommand{\bv}{\boldsymbol{v}}
\newcommand{\sW}{\mathsf{W}}
\DeclareMathOperator*{\argmin}{arg\,min}
\newtheorem{assumption}{\textbf{Assumption}}
\newtheorem{lemma}{\textbf{Lemma}}
\newtheorem{theorem}{\textbf{Theorem}}
\newtheorem{remark}{\textbf{Remark}}
\newtheorem{example}{\textbf{Example}}
\newcommand{\mE}{\mathbb{E}}
\newcommand{\Var}{\mathsf{Var}}
\newcommand{\cX}{\mathcal{X}}
\newcommand{\cL}{\mathcal{L}}
\newcommand{\cN}{\mathcal{N}}
\newcommand{\cP}{\mathcal{P}}
\newcommand{\cT}{\mathcal{T}}
\newcommand{\cO}{\mathcal{O}}
\newcommand{\tr}{\mathrm{tr}}
\newcommand{\Exp}{\mathrm{Exp}}
\newcommand{\hbx}{\hat{\bx}}
\title{Continuous-time Riemannian SGD and SVRG Flows on Wasserstein Probabilistic Space}
\begin{document}

	\author{Mingyang Yi$^{1}$\thanks{Corresponding to: Mingyang Yi and Bohan Wang}, Bohan Wang$^{2}$\\
	$^{1}$ Renmin University of China\\
	$^{2}$ Alibaba Group\\
	\texttt{yimingyang@ruc.edu.cn} \\
	\texttt{bhwangfy@gmail.com}
}
	\maketitle
	\begin{abstract}
		Recently, optimization on the Riemannian manifold have provided valuable insights to the optimization community. In this regard, extending these methods to to the Wasserstein space is of particular interest, since optimization on Wasserstein space is closely connected to practical sampling processes. Generally, the standard (continuous) optimization method on Wasserstein space is Riemannian gradient flow (i.e., Langevin dynamics when minimizing KL divergence). In this paper, we aim to enrich the family of continuous optimization methods in the Wasserstein space, by extending the gradient flow on it into the stochastic gradient descent (SGD) flow and stochastic variance reduction gradient (SVRG) flow. 
		By leveraging the property of Wasserstein space, we construct stochastic differential equations (SDEs) to approximate the corresponding discrete Euclidean dynamics of the desired Riemannian stochastic methods. Then, we obtain the flows in Wasserstein space by Fokker-Planck equation. Finally, we establish convergence rates of the proposed stochastic flows, which align with those known in the Euclidean setting.  
	\end{abstract}
	\section{Introduction}\label{sec:intro}
	As a valuable extension of Euclidean space optimization, generalizing the parameter space to a Riemannian manifold—such as a matrix manifold \citep{absil2009optimization} or a probability measure space \citep{chewi2023optimization}—has greatly enriched the toolbox of the optimization community. Technically, optimization on a manifold can be derived from Euclidean techniques by defining corresponding Riemannian gradients and transport rules \citep{absil2009optimization}. For example, several widely used gradient-based optimization methods have been generalized to this setting, including gradient descent (GD) \citep{zhang2016first}, stochastic gradient descent (SGD) \citep{bonnabel2013stochastic}, and stochastic variance reduced gradient (SVRG) \citep{zhang2016riemannian}.
	\par
	On the other hand, optimization over probability measure spaces has attracted considerable attention due to its connection to sampling processes \citep{dwivedi2018log}. Interestingly, for a specific type of probability measure space—the second-order Wasserstein space—techniques from Riemannian optimization can be directly applied \citep{chewi2023log}, owing to its manifold-like geometric structure. For example, minimizing the KL divergence via Riemannian gradient flow is equivalent to employing Langevin diffusion \citep{parisi1981correlation}, a standard method for sampling from a target distribution. Therefore, advancing Riemannian optimization in the Wasserstein space may lead to the development of novel sampling techniques. 
	\par
	To this end, we focus on Riemannian SGD \citep{li2017stochastic,hu2019diffusion} and SVRG \citep{orvieto2019continuous} in Wasserstein space, which are standard stochastic optimization methods known for their lower computational complexity in Euclidean settings \citep{bottou2018optimization}. Specifically, we investigate the continuous counterparts (flows) of these two previously unexplored Riemannian methods, \textbf{\emph{as continuous formulations provide a powerful framework for analyzing the properties of optimization algorithms}} \citep{du2019gradient,du2018algorithmic}. In Euclidean space, such continuous stochastic flows are characterized by SDEs \citep{li2017stochastic,hu2019diffusion,orvieto2019continuous}. However, these SDEs do not readily generalize to Riemannian manifolds, as their definitions rely on Brownian motion—a concept that is considerably more complex to define on Riemannian manifolds \citep{ren2024ornstein}.
	\par
	In general, continuous optimization methods are derived by taking the limit of the step size in corresponding discrete optimization dynamics (e.g., transitioning from GD to gradient flow \citep{santambrogio2017euclidean}). Naturally, we seek to apply this approach to discrete Riemannian SGD and SVRG \citep{bonnabel2013stochastic,zhang2016riemannian}. Unfortunately, such an extension is nontrivial for two reasons: (1) the linear structure that underpins the aforementioned continuous dynamics does not necessarily exist on manifolds; (2) describing the randomness inherent in stochastic methods is challenging in a manifold setting. Fortunately, the dynamics of probability measures in Wasserstein space can be equivalently described by the dynamics of random vectors in Euclidean space. By taking the step size to zero, the corresponding discrete dynamics in Euclidean space converge to an SDE, which characterizes the evolution of probability measures in Wasserstein space through the Fokker-Planck (F-P) equation \citep{oksendal2013stochastic}.
	\par
	Through this approach, we successfully establish continuous Riemannian SGD and SVRG flows in the Wasserstein space for minimizing the KL divergence as intended. Notably, the existing MCMC methods—stochastic gradient Langevin dynamics \citep{welling2011bayesian} and stochastic variance reduction Langevin dynamics \citep{zou2018subsampled,chatterji2018theory}—are precisely the discrete counterparts of the two proposed flows. Furthermore, under appropriate regularity conditions, we prove convergence rates for these continuous stochastic flows. Specifically, for non-convex problems, the convergence rates (measured by the first-order stationarity criterion \citep{boumal2019global}) of the Riemannian SGD flow and Riemannian SVRG flow are $\mathcal{O}(1 / \sqrt{T})$ and $\mathcal{O}(N^{2/3} / T)$, respectively, where $N$ denotes the number of component functions. Additionally, under an extra Riemannian Polyak-Łojasiewicz (PL) inequality \citep{karimi2016linear,chewi2022analysis}—equivalent to the log-Sobolev inequality \citep{van2014probability}—the two methods achieve global convergence rates of orders $\mathcal{O}(1 / T)$ and $\mathcal{O}(e^{-\gamma T / N^{2/3}})$, respectively, matching their Euclidean counterparts \citep{orvieto2019continuous}. 
	\section{Related Work}\label{sec:related work}
	\paragraph{Riemannian Optimization.} Unlike continuous methods on Riemannian manifolds \citep{absil2009optimization}, discrete methods have been extensively studied. Examples include Riemannian GD \citep{absil2009optimization,boumal2019global,zhang2016first}, Riemannian Nesterov-type methods \citep{zhang2018towards,liu2017accelerated,liu2019understanding}, Riemannian SGD \citep{bonnabel2013stochastic,patterson2013stochastic}, Riemannian SVRG \citep{zhang2016riemannian}, and some other Algorithms \citep{becigneul2018riemannian,zhang2018r,yi2022accelerating,cho2017riemannian}.
	As established in the literature, the standard approach in Euclidean space for linking discrete dynamics to their continuous counterparts involves taking the limit to the step size in the discrete dynamics, which yields a differential equation \citep{su2014differential,santambrogio2017euclidean,liu2017stein}. However, this derivation does not extend directly to arbitrary Riemannian manifolds. Such extrapolation has only been achieved in specific spaces, such as the Wasserstein space, where the resulting curve satisfies the Fokker–Planck equation \citep{santambrogio2017euclidean}, thereby establishing a connection between the Wasserstein and Euclidean spaces. Nevertheless, only a limited number of discrete optimization methods in the Wasserstein space have been generalized to their continuous counterparts—for example, gradient flow \citep{chewi2022analysis}, Nesterov accelerated flow \citep{wang2022accelerated}, and Newton flow \citep{wang2020information}. Unfortunately, these extrapolation techniques have not been applied to stochastic dynamics. Moreover, their methodologies are restricted to specific algorithms, in contrast to the more general framework proposed in this paper.
	
	\paragraph{Stochastic Sampling.} Standard sampling methods, such as MCMC \citep{karatzas2012brownian}, typically construct (stochastic) dynamics that converge to the target distribution, with convergence measured by a probability distance or divergence. Consequently, sampling can be viewed as an optimization problem in the space of probability measures \citep{chewi2023optimization}. However, existing literature has primarily focused on discrete Langevin dynamics \citep{parisi1981correlation} and their stochastic variants \citep{welling2011bayesian,dubey2016variance,zou2018subsampled,chatterji2018theory,zou2019sampling,kinoshita2022improved}, particularly analyzing their convergence rates under different criteria—such as KL divergence \citep{cheng2018convergence}, R'{e}nyi divergence \citep{chewi2022analysis,balasubramanian2022towards,mousavi2023towards}, or Wasserstein distance \citep{durmus2019high}. By contrast, exploring their connections with continuous Riemannian optimization methods, as undertaken in this paper, has received relatively little attention.        
	
	\section{Preliminaries}\label{sec:Preliminaries}
	supported on $\mathcal{X} = \mathbb{R}^{d}$, where $\mathcal{P}$ is the Wasserstein metric space endowed with the second-order Wasserstein distance \citep{villani2009optimal} (abbreviated as Wasserstein distance), defined as $\sW_{2}^{2}(\pi, \mu) = \inf_{\Pi\in \Gamma(\pi,\mu)}\int \|\bx - \by\|^{2}d\Pi(\bx, \by),$ where $\Gamma(\pi,\mu)$ denotes the set of joint probability measures with marginals $\pi$ and $\mu$, respectively. Notably, the Wasserstein space possesses a geometric structure analogous to that of a Riemannian manifold \citep{chewi2023log}, allowing us to apply Riemannian optimization techniques. We now introduce several key definitions. As in $\mathbb{R}^{d}$, the Wasserstein space (similar to Riemannian manifold) is equipped with an ``inner product'', resulting the Riemannian metric $\langle\cdot, \cdot\rangle_{\pi} : \cT_{\pi}\cP \times \cT_{\pi}\cP \to \mathbb{R}$. Here, $\cT_{\pi}\cP$ is the tangent space of $\cP$ at $\pi$ (see \citep{chewi2023log}, Section 1.3). Using this Riemannian metric, we define the Riemannian (Wasserstein) gradient $\grad F(\pi) \in \cT_{\pi}\cP$ of a function $F: \cP \to \mathbb{R}$ as the unique element satisfying $\lim_{t\to0}\frac{F(\pi_{t}) - F(\pi_{0})}{t} = \langle \grad F(\pi_{0}), \bv_{0}\rangle_{\pi_{0}},$ for every curve $\pi_{t}$ in $\cP$ with tangent vector $\bv_{0}\in\cT_{\pi_{0}}$ on $\pi_{0}$. The transportation of $\pi_{t}\in\cP$ in $\cT_{\pi}\cP$ is determined by the exponential map $\Exp_{\pi}:\cT_{\pi}\cP\to\cP$. Then, the discrete Riemannian GD  $\{\pi_{n}\}$ with learning rate $\eta$ is defined as    
	\begin{equation}\label{eq:riemannian gd}
		\small
		\pi_{n + 1} = \Exp_{\pi_{n}}[-\eta \grad F(\pi_{n})],
	\end{equation} 
	to minimize $F(\pi)$. We refer readers for more details about this dynamics to \citep{absil2009optimization,boumal2019global,zhang2016first,chewi2023log}. An illustration of Riemannian GD is in Figure \ref{fig:RGD}.
	
	\begin{wrapfigure}{r}{4cm}
		\centering
				\vspace{-0.3in}
		\includegraphics[scale=0.21]{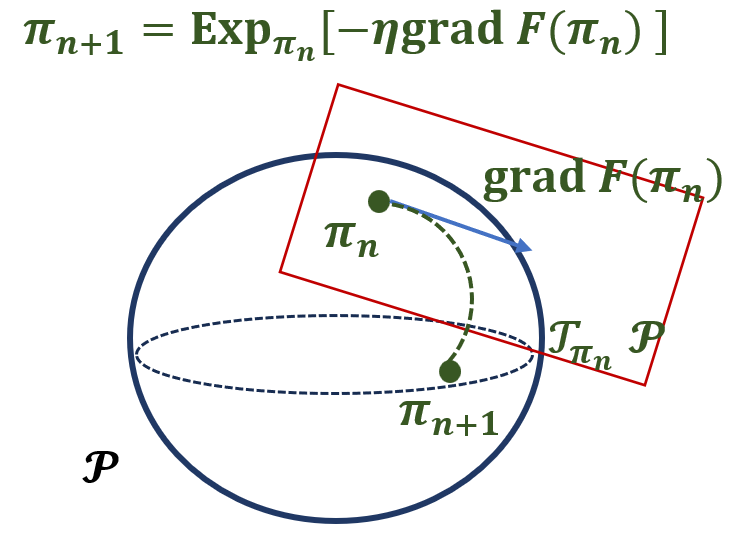}
		\caption{Riemannian GD}
		\label{fig:RGD}
	\end{wrapfigure}
	
	\par
	Next, we instantiate these definitions in the context of the Wasserstein space. First, a curve $\pi_t$ in $\mathcal{P}$ is characterized by the continuity equation (also known as the Fokker–Planck equation) \citep{oksendal2013stochastic}:
	\begin{equation}\label{eq:fp equation}
		\frac{\partial \pi_t}{\partial t} + \nabla \cdot (\pi_t \bv_t) = 0,
	\end{equation}
	where $\pi_t \in \mathcal{P}$ and $\bv_t \in \mathcal{T}_{\pi_t}\mathcal{P}$ \citep{chewi2023log} is a vector field mapping $\mathcal{X} \to \mathcal{X}$. The Fokker–Planck equation implies that $\pi_t$ describes the probability distribution of the stochastic ordinary differential equation (SODE) $d\bx_t = \bv_t(\bx_t) dt$, where the randomness originates from the initial condition $\bx_0$. Consequently, the tangent vector to the curve $\pi_t$, i.e., the direction of curve is given by $\bv_t$ in the F-P equation \eqref{eq:fp equation}. Besides, for $\bu, \bv \in\cT_{\pi}\cP$ the  Riemannian metric \citep{chewi2023optimization} in Wasserstein space is $\langle\bu, \bv\rangle_{\pi} = \int \langle\bu, \bv\rangle d\pi,$ 
	and $\|\bu\|_{\pi}^{2} = \langle\bu, \bu\rangle_{\pi}$, where $\langle\cdot,\cdot\rangle$ is the inner product in Euclidean space. Finally, the exponential map in Wasserstein space is
	\begin{equation}\label{eq:exp map in manifold}
		\small
		\Exp_{\pi}[\bv] = (\bv + \mathrm{id})_{\#\pi},
	\end{equation}
	where $(\bv + \mathrm{id})_{\#\pi}$ is the probability distribution of random variable $\bx + \bv(\bx)$ with $\bx\sim \pi$ (\citep{chewi2023log}, page 44). In this paper, we mainly explore minimizing the KL divergence \citep{van2014probability} to a target probability measure $\mu$
	\begin{equation}\label{eq:minimization of kl divergence}
		\small
		\min_{\pi\in\cP} F(\pi) = \min_{\pi\in\cP} D_{KL}(\pi\parallel \mu) = \min_{\pi\in\cP}\int \log{\frac{d\pi}{d\mu}}d\pi. 
	\end{equation} 
	Here, the target distribution $\mu$ is assumed to satisfy $\mu \propto \exp(-V(\bx))$ for some potential function $V(\bx)$ \citep{chewi2023optimization}. For the minimization problem \eqref{eq:minimization of kl divergence}, global convergence results have been established under specific regularity conditions, such as the log-Sobolev inequality \citep{chewi2023log,van2014probability}. For the KL divergence $F(\pi) = D_{\mathrm{KL}}(\pi \parallel \mu)$, this inequality takes the form
	\begin{equation}\label{eq:log Sobolev inequality}
		\begin{aligned}
			D_{\mathrm{KL}}(\pi \parallel \mu) \leq \frac{1}{2\gamma} \int \left| \nabla \log \frac{d\pi}{d\mu} \right|^2 d\pi = \frac{1}{2\gamma} \left| \grad_{\pi} D_{\mathrm{KL}}(\pi \parallel \mu) \right|^2_{\pi},
		\end{aligned}
	\end{equation}
	for some $\gamma > 0$ and all $\pi$, where the equality follows from Proposition \ref{pro:riemannian gradient}. In what follows, we write $\grad_{\pi} D_{\mathrm{KL}}(\pi \parallel \mu)$ as $\grad D_{\mathrm{KL}}(\pi \parallel \mu)$ to simplify notation. In fact, the log-Sobolev inequality in the Wasserstein space generalizes the PL inequality \citep{karimi2016linear,chewi2022analysis}, thereby guaranteeing global convergence of optimization methods. Further details can be found in Section \ref{sec:More than Riemannian PL inequality} of the Appendix. 
	\par
	In this paper, we need the following lemma from \citep{maoutsa2020interacting,song2020score}, which connects the SODE and SDE. 
	\begin{lemma}\citep{maoutsa2020interacting,song2020score}\label{lem:equivalence}
		The SDE $d\bx_{t} = \bb(\bx_{t}, t)dt + \bG(\bx_{t}, t)dW_{t},$ 
		has the same density with SODE 
		\begin{equation}
			\small
			\begin{aligned}
				d\bx_{t} = \bb(\bx_{t}, t) - \frac{1}{2}\nabla\cdot\left[\bG(\bx_{t}, t)\bG^{\top}(\bx_{t}, t)\right] - \frac{1}{2}\bG(\bx_{t}, t)\bG^{\top}(\bx_{t}, t)\nabla\log{\pi_{t}(\bx_{t})}dt,
			\end{aligned}
		\end{equation}
		where $\pi_{t}$ \footnote{We simplify $\log{(d\pi_{t} / d\bx)(\bx_{t})}$ as $\log{\pi_{t}(\bx_{t})}$ if there is no obfuscation in sequel.} is the corresponded probability measure of $\bx_{t}$. 
	\end{lemma}
	 As shown by this lemma and \eqref{eq:fp equation}, the direction of $\pi_t$ can be directly linked to a SDE. For further details on these preliminaries, we refer readers to \citep{absil2009optimization, boumal2019global, chewi2023log}.
	\section{Riemannian Gradient Flow}\label{sec:riemannian gradient flow}
	In this section, we investigate the continuous gradient flow for minimizing the KL divergence in the Wasserstein space through the framework of Riemannian manifold optimization. Although this continuous optimization method has been studied previously \citep{liu2017accelerated,santambrogio2017euclidean,chewi2023log}, \textbf{a systematic analysis from the perspective of manifold optimization remains lacking}. We demonstrate that our approach provides valuable insights for the subsequent development of Riemannian SGD and SVRG flows.  
	\subsection{Constructing Riemannian Gradient Flow}
	We first calculate the Riemannian gradient of KL divergence (proved in Appendix \ref{app:proofs in riemannian gradient flow}).
	\begin{restatable}{proposition}{riemanniangradient}
		The Riemannian gradient of $F(\pi) = D_{KL}(\pi \parallel \mu)$ in Wasserstein space is 
		\begin{equation}\label{eq:riemannian gradient of KL}
			\small
			\grad F(\pi) = \grad D_{KL}(\pi \parallel \mu) = \nabla\log{\frac{d\pi}{d\mu}}. 
		\end{equation}
	\end{restatable}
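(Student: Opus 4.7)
The plan is to compute $\frac{d}{dt}F(\pi_t)\big|_{t=0}$ directly and then identify the integrand against $\bv_0$ with the Riemannian metric $\langle\cdot,\cdot\rangle_{\pi_0}$; by the defining property \eqref{eq:riemannian gradient}, whatever appears paired with $\bv_0$ in the inner product must be $\grad F(\pi_0)$.

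First I would let $\pi_t$ be an arbitrary smooth curve in $\cP$ with tangent vector $\bv_0$ at $\pi_0$, so that by \eqref{eq:fp equation} the density satisfies $\partial_t\pi_t = -\nabla\cdot(\pi_t \bv_t)$. Writing the KL divergence as $F(\pi_t) = \int \pi_t \log(\pi_t/\mu)\,d\bx$ and differentiating under the integral gives
\begin{equation*}
\frac{d}{dt}F(\pi_t) \;=\; \int \frac{\partial \pi_t}{\partial t}\,\log\frac{\pi_t}{\mu}\,d\bx \;+\; \int \frac{\partial \pi_t}{\partial t}\,d\bx.
\end{equation*}
The second integral vanishes by conservation of mass ($\int \pi_t\,d\bx\equiv 1$). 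Substituting the continuity equation into the first integral and integrating by parts (absorbing boundary terms by the standard decay assumption on $\pi_t$ and $\mu$) yields
\begin{equation*}
\frac{d}{dt}F(\pi_t) \;=\; -\int \nabla\cdot(\pi_t \bv_t)\,\log\frac{\pi_t}{\mu}\,d\bx \;=\; \int \Big\langle \bv_t,\, \nabla\log\frac{\pi_t}{\mu}\Big\rangle\,d\pi_t.
\end{equation*}

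Evaluating at $t=0$ produces $\langle \nabla\log(d\pi_0/d\mu), \bv_0\rangle_{\pi_0}$ in the Wasserstein Riemannian metric. Since this identity holds for every admissible tangent vector $\bv_0$ at $\pi_0$ and since $\nabla\log(d\pi_0/d\mu)$ is itself of the gradient form that characterizes elements of $\cT_{\pi_0}\cP$, we conclude $\grad F(\pi_0)=\nabla\log(d\pi_0/d\mu)$, which is exactly \eqref{eq:riemannian gradient of KL}.

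The main obstacle is really a regularity/justification issue rather than a conceptual one: one needs to argue that the family $\pi_t$ is smooth enough to differentiate under the integral and that the integration by parts produces no boundary contribution. Under the standing assumptions that $\mu\propto e^{-V}$ and that $\pi$ is absolutely continuous with suitably decaying density, both are standard; I would either invoke these as part of the ambient Wasserstein-space setting or restrict attention to a dense class of smooth, compactly supported perturbations and conclude by density, since the formula for $\grad F$ is independent of the particular test curve.
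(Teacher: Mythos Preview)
Your proposal is correct and follows essentially the same route as the paper: take an arbitrary curve $\pi_t$, differentiate $D_{KL}(\pi_t\parallel\mu)$ under the integral, substitute the continuity equation, integrate by parts, and read off the Riemannian gradient from the resulting inner product $\langle \bv_0,\nabla\log(d\pi/d\mu)\rangle_\pi$. The paper's version separates the $\log\pi_t$ and $\log\mu$ pieces before substituting (writing $(1+\log\pi_t)\partial_t\pi_t$), whereas you keep them combined and note the extra term vanishes by mass conservation, but this is a cosmetic difference only.
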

	With the defined Riemannian gradient of the KL divergence and exponential map \eqref{eq:exp map in manifold}, we can implement the discrete Riemannian gradient descent as in \eqref{eq:riemannian gd}. 
	\par	
	As noted in Section \ref{sec:intro}, we aim to establish a correspondence between discrete dynamics and their continuous counterparts. In Euclidean space, the GD dynamics $(\bx_{n + 1} - \bx_{n}) / \eta = -\nabla F(\bx_{n})$ leads to an ODE (gradient flow) $d\bx_{t} = -\nabla F(\bx_{t})dt$ in the limit as $\eta \to 0$. However, this approach does not directly extend to the manifold setting, since the Riemannian GD dynamics $\pi_n$ in \eqref{eq:riemannian gd} does not induce a linear structure. Fortunately, the probability measure $\pi_n$ corresponds to random vectors $\bx_n \in \mathbb{R}^d$ such that $\bx_n \sim \pi_n$ and $\bx_{n + 1} = \bx_n - \eta \nabla \log \frac{d\pi_n}{d\mu}(\bx_n)$. Consequently, the dynamics induced by Riemannian GD \eqref{eq:riemannian gd} can be used to construct a differential equation in the limit $\eta \to 0$. The corresponding Fokker–Planck equation for this differential equation in the Wasserstein space then yields the gradient flow. The conceptual framework is illustrated in Figure \ref{fig:idea}, and the formal result is stated in the following proposition.
	\begin{wrapfigure}{r}{6cm}
		\includegraphics[scale=0.25]{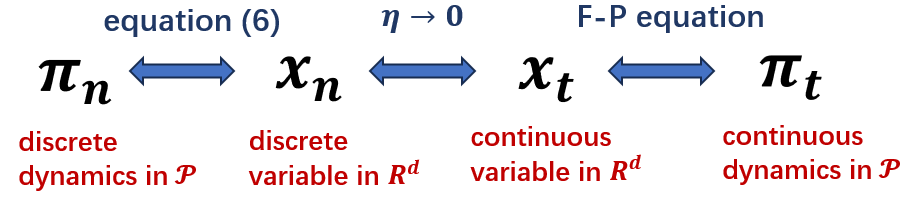}
		\caption{Our idea to bridge the discrete dynamics to its continuous counterparts.}
		\vspace{-0.2in}
		\label{fig:idea}
	\end{wrapfigure} 
	\begin{assumption}\label{ass:lip continuity of log mu for gd}
		For probability measure $\mu$, the $\log{\mu}$ and $\nabla\log{\mu}$ are all Lipschitz continuous with coefficient $L_{1}$ and $L_{2}$, respectively. \footnote{Since $\mu\propto\exp(-V(\bx))$, the condition means $V(\bx)$ and $\nabla V(\bx)$ are all Lipschitz continuous.} 
	\end{assumption}
	\begin{restatable}{proposition}{riemanniangradientflow}\label{pro:riemannian gradient}
		Under Assumption \ref{ass:lip continuity of log mu for gd} and $1\leq n \leq \cO(\lfloor1 / \eta\rfloor)$, the discrete Riemannian GD \eqref{eq:riemannian gd} approximates continuous Riemannian gradient flow $\pi_{t}$ 
		\begin{equation}\label{eq:lagevin FP}
			\small
			\frac{\partial}{\partial{t}}\pi_{t} = \nabla\cdot(\pi_{t}\grad D_{KL}(\pi_{t}\parallel \mu)) = \nabla\cdot\left(\pi_{t}\nabla\log{\frac{d\pi_{t}}{d\mu}}\right),
		\end{equation}
		by $\mE[\|\bx_{n} - \hbx_{n\eta}\|^{2}] \leq \cO(\eta)$, where $\bx_{n}\sim \pi_{n}$ in \eqref{eq:riemannian gd}, $\hbx_{n\eta}\sim \pi_{n\eta}$ in \eqref{eq:lagevin FP}, for $\bx_{0} = \hbx_{0}$. 
	\end{restatable}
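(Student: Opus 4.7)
My plan is to reduce the Wasserstein-space discretization error bound to a Euclidean Euler-approximation estimate by passing to the particle representation of both dynamics, and then to couple them with a Grönwall argument.

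First, I would unpack the discrete Riemannian GD iteration in Euclidean coordinates. By Proposition \ref{pro:riemannian gradient} (stated earlier), $\grad F(\pi_n) = \nabla\log(d\pi_n/d\mu)$, and by the exponential map formula \eqref{eq:exp map in manifold}, the update $\pi_{n+1}=\Exp_{\pi_n}[-\eta\,\grad F(\pi_n)]$ is the pushforward of $\pi_n$ under the map $\bx\mapsto\bx-\eta\,\nabla\log(d\pi_n/d\mu)(\bx)$. Hence, if $\bx_n\sim\pi_n$, then a coupled particle satisfies the deterministic Euclidean recursion $\bx_{n+1}=\bx_n-\eta\,\nabla\log(d\pi_n/d\mu)(\bx_n)$. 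On the continuous side, the F-P equation \eqref{eq:lagevin FP} is exactly the continuity equation \eqref{eq:fp equation} with velocity $\bv_t=-\nabla\log(d\pi_t/d\mu)$, so $\pi_t$ is the law of the SODE $d\hbx_t=-\nabla\log(d\pi_t/d\mu)(\hbx_t)\,dt$, with $\hbx_0$ coupled to $\bx_0$.

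Next, I would bound $a_n:=\mE\|\bx_n-\hbx_{n\eta}\|^2$ by a standard one-step-plus-Grönwall analysis. Writing
\begin{equation*}
\bx_{n+1}-\hbx_{(n+1)\eta}=(\bx_n-\hbx_{n\eta})+\int_{n\eta}^{(n+1)\eta}\!\!\bigl[\nabla\log(d\pi_t/d\mu)(\hbx_t)-\nabla\log(d\pi_n/d\mu)(\bx_n)\bigr]\,dt,
\end{equation*}
I would split the bracketed integrand into three pieces: (i) spatial error $\nabla\log(d\pi_t/d\mu)(\hbx_t)-\nabla\log(d\pi_t/d\mu)(\bx_n)$, controlled by Lipschitzness of the score; (ii) temporal error $\nabla\log(d\pi_t/d\mu)(\bx_n)-\nabla\log(d\pi_n/d\mu)(\bx_n)$ arising from the drift of the measure across $[n\eta,(n+1)\eta]$; and (iii) a term accounting for $\hbx_t-\hbx_{n\eta}$ inside the first piece. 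For (i) and (ii), Assumption \ref{ass:lip continuity of log mu for gd} supplies the Lipschitz constants for $\nabla\log\mu$ (and one more derivative), and the Lipschitz regularity of $\nabla\log\pi_t$ can be propagated from an assumed smooth $\pi_0$ along the Langevin flow (using that $\pi_t$ is the Langevin marginal by Lemma \ref{lem:equivalence} applied with $\bG=\sqrt{2}\bI$). After squaring with Young's inequality and taking expectations, this yields a recursion $a_{n+1}\le(1+C\eta)a_n+C'\eta^3$, and iterating for $n\le\lfloor 1/\eta\rfloor$ gives $a_n\le\cO(\eta^2)\le\cO(\eta)$.

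The main obstacle is controlling the temporal piece (ii), because $\nabla\log\pi_t$ is not directly constrained by Assumption \ref{ass:lip continuity of log mu for gd}. The cleanest resolution is to use Lemma \ref{lem:equivalence} to identify $\pi_t$ with the law of the Langevin SDE $d\tilde\bx_t=\nabla\log\mu(\tilde\bx_t)\,dt+\sqrt{2}\,dW_t$, whose transition kernel inherits spatial smoothness from the Gaussian heat kernel and temporal Hölder continuity in $t$ from standard SDE moment estimates under Lipschitzness of $\nabla\log\mu$. From these one obtains $\|\nabla\log\pi_t-\nabla\log\pi_s\|_{L^2(\pi_s)}\le C\sqrt{|t-s|}$ and a uniform Lipschitz bound on $\nabla\log\pi_t$, which is precisely what the recursion above needs. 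The remaining steps are then routine Grönwall bookkeeping over the horizon $n\eta=\cO(1)$.
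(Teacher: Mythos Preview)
Your plan diverges from the paper's in a crucial way, and the divergence is exactly at the point you flag as the ``main obstacle.'' You attempt a direct coupling between $\bx_n$ and the deterministic SODE particle $\hbx_t$, which forces you to control the spatial Lipschitzness and temporal regularity of $\nabla\log\pi_t$ and, more problematically, of $\nabla\log\pi_n$. Your term (ii) compares the \emph{discrete} density $\pi_n$ to the continuous $\pi_t$; writing $\nabla\log\pi_t-\nabla\log\pi_n=(\nabla\log\pi_t-\nabla\log\pi_{n\eta})+(\nabla\log\pi_{n\eta}-\nabla\log\pi_n)$, the second bracket requires stability of the score map $\pi\mapsto\nabla\log\pi$ in the very metric you are trying to control, and scores are not Wasserstein-stable in general. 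Your proposed fix via Langevin heat-kernel estimates handles regularity of the continuous $\pi_t$ but does not touch the discrete $\pi_n$, and none of this regularity is supplied by Assumption~\ref{ass:lip continuity of log mu for gd}, which concerns $\mu$ only.

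The paper sidesteps the score-regularity issue entirely by inserting an auxiliary \emph{stochastic} sequence: the discrete Langevin iterate $\bar\bx_{n+1}=\bar\bx_n+\eta\,\nabla\log\mu(\bar\bx_n)+\sqrt{2\eta}\,\beps_n$ with $\beps_n\sim\cN(0,\bI)$. The key identity is that, for any smooth test function $f$, the score term vanishes via integration by parts, $\mE_{\pi_n}[\langle\nabla f,\nabla\log\pi_n\rangle]=-\mE_{\pi_n}[\Delta f]$, and the resulting Laplacian is exactly the second-order contribution of the injected Gaussian noise $\sqrt{2\eta}\,\beps_n$. Hence $\mE[f(\bx_{n+1})]$ and $\mE[f(\bar\bx_{n+1})]$ agree to $\cO(\eta^2)$, giving $\mE\|\bx_n-\bar\bx_n\|^2\le\cO(\eta)$. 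Now $\bar\bx_n$ has drift $\nabla\log\mu$ \emph{only}, so comparing it to the continuous Langevin SDE $d\hbx_t=\nabla\log\mu(\hbx_t)\,dt+\sqrt{2}\,dW_t$ is a standard Euler--Maruyama estimate using just the Lipschitz constant $L_2$ from Assumption~\ref{ass:lip continuity of log mu for gd}; a triangle inequality and Lemma~\ref{lem:equivalence} (identifying this SDE's density with \eqref{eq:lagevin FP}) finish. The paper's Remark after the proof makes this point explicitly: the detour through $\bar\bx_n$ is there precisely because the continuity of $\nabla\log(\mu/\pi_n)$ is ``non-verifiable'' from the stated assumptions.
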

	This proposition shows that the discrete Riemannian GD approximates the flow \eqref{eq:lagevin FP} for any finite product $n\eta$ in the limit as $\eta \to 0$. Therefore, the flow \eqref{eq:lagevin FP} indeed corresponds to the Riemannian gradient flow. Moreover, by combining Lemma \ref{lem:equivalence} with the F-P equation \eqref{eq:fp equation}, the measure $\pi_t$ in \eqref{eq:lagevin FP} is the distribution of Langevin dynamics, which serves as a standard stochastic sampling algorithm \citep{parisi1981correlation} given by $d\bx_t = \nabla \log \mu(\bx_t) dt + \sqrt{2} dW_t$. This connection is also discussed in \citep{chewi2023optimization,santambrogio2017euclidean,liu2019understanding}. 
	\begin{remark}
		Notably, existing literature \citep{santambrogio2017euclidean} has also established that \eqref{eq:lagevin FP} corresponds to the Riemannian gradient flow in the Wasserstein space for minimizing the KL divergence. \textbf{However, their results are not derived from a limiting procedure on the learning rate in discrete Riemannian optimization methods, unlike our approach.} The authors \citep{santambrogio2017euclidean} obtain the curve $\pi_t$ by solving the variational problem $\pi_{n + 1} = \argmin_{\pi} \left[ D_{\mathrm{KL}}(\pi \parallel \mu) + W_2^2(\pi, \pi_n) / 2\eta \right]$ in the limit $\eta \to 0$. Unfortunately, unlike our method, this approach cannot be extended to the stochastic optimization setting in Section \ref{sec:riemannian sgd flow}, as the aforementioned minimization problem does not readily incorporate stochastic gradients.
	\end{remark}
	\subsection{Convergence of Riemannian GD Flow}
	In this section, we prove the convergence of the Riemannian gradient flow \eqref{eq:lagevin FP}. For non-convex problems in Euclidean space, convergence is typically measured by the first-order stationarity criterion \citep{ghadimi2013stochastic,boumal2019global,yi2022characterization}. Accordingly, in the Wasserstein space, we analogously analyze the convergence rate of the Riemannian gradient norm $|\grad D_{\mathrm{KL}}(\pi_t \parallel \mu)|{\pi_t}^2$ \footnote{$|\grad D{\mathrm{KL}}(\pi_t \parallel \mu)|_{\pi_t}^2 \to 0$ does not imply $D(\pi_t, \mu) \to 0$ for other probability distances or divergences $D(\cdot, \cdot)$, such as the total variation distance. Further details can be found in \citet{balasubramanian2022towards}.}, following the approach of \citet{boumal2019global,balasubramanian2022towards}. 
	Moreover, the PL inequality \citep{karimi2016linear} ensures global convergence in Euclidean space; for instance, under this condition, the gradient flow converges exponentially to a global minimum \citep{karimi2016linear}. As discussed in Section \ref{sec:Preliminaries}, in the Wasserstein space, the Riemannian PL inequality is generalized by the log-Sobolev inequality \citep{van2014probability}, which likewise guarantees an exponential global convergence rate for the Riemannian gradient flow, as stated in the following theorem.
	\begin{restatable}{theorem}{convergencergd}[\citep{chewi2023log}]\label{thm:convergence riemannian gd}
		Let $\pi_{t}$ follows the Riemannian gradient flow \eqref{eq:lagevin FP}, then for any $T > 0$, we have 
		\begin{equation}
			\small
			\frac{1}{T}\int_{0}^{T}\|\grad D_{KL}(\pi_{t} \parallel \mu)\|_{\pi_{t}}^{2}dt \leq \frac{D_{KL}(\pi_{0} \parallel \mu)}{T}. 
		\end{equation}
		Moreover, if the log-Sobolev inequality \eqref{eq:log Sobolev inequality} (Riemannian PL inequality) is satisfied for $\mu$, then 
		\begin{equation}
			\small
			D_{KL}(\pi_{t} \parallel \mu) \leq e^{-2\gamma t}D_{KL}(\pi_{0} \parallel \mu).
		\end{equation}
	\end{restatable}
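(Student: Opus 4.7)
The plan is to differentiate $F(\pi_t) = D_{KL}(\pi_t \parallel \mu)$ along the flow and then integrate. By the chain rule on the Riemannian manifold $\cP$ (equation \eqref{eq:riemannian gradient}, applied at each time $t$), for any curve $\pi_t$ with tangent vector $\bv_t$ at time $t$ one has
\begin{equation*}
\frac{d}{dt}F(\pi_t) = \langle \grad F(\pi_t), \bv_t \rangle_{\pi_t}.
\end{equation*}
The F-P equation \eqref{eq:fp equation} together with the flow \eqref{eq:lagevin FP} identifies the tangent vector as $\bv_t = -\grad D_{KL}(\pi_t \parallel \mu)$. Substituting and using the Wasserstein inner product gives the key dissipation identity
\begin{equation*}
\frac{d}{dt} D_{KL}(\pi_t \parallel \mu) = -\|\grad D_{KL}(\pi_t \parallel \mu)\|_{\pi_t}^{2}.
\end{equation*}

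For the first claim, I would integrate this identity from $0$ to $T$:
\begin{equation*}
\int_{0}^{T} \|\grad D_{KL}(\pi_t \parallel \mu)\|_{\pi_t}^{2}\,dt = D_{KL}(\pi_0 \parallel \mu) - D_{KL}(\pi_T \parallel \mu) \leq D_{KL}(\pi_0 \parallel \mu),
\end{equation*}
where the inequality uses the non-negativity of KL divergence. Dividing by $T$ yields the stated bound. For the second claim, I would invoke the log-Sobolev inequality \eqref{eq:log Sobolev inequality}, which states $\|\grad D_{KL}(\pi_t \parallel \mu)\|_{\pi_t}^{2} \geq 2\gamma\, D_{KL}(\pi_t \parallel \mu)$. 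Plugging this into the dissipation identity gives the differential inequality
\begin{equation*}
\frac{d}{dt} D_{KL}(\pi_t \parallel \mu) \leq -2\gamma\, D_{KL}(\pi_t \parallel \mu),
\end{equation*}
and Grönwall's lemma (or direct integration of $d/dt\,\log D_{KL}$) delivers the exponential decay $D_{KL}(\pi_t \parallel \mu) \leq e^{-2\gamma t} D_{KL}(\pi_0 \parallel \mu)$.

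The main obstacle is rigorously justifying the chain rule step in the Wasserstein setting. Specifically, \eqref{eq:riemannian gradient} only defines the Riemannian gradient through first-order variations along curves based at $\pi_0$; I need to extend this to a time-varying statement along the whole flow, which requires enough regularity of $\pi_t$ (so that $d\pi_t/dt$ exists in the sense of distributions, and so that differentiation under the integral sign in $\int \log(d\pi_t/d\mu)\,d\pi_t$ is legitimate). Assumption \ref{ass:lip continuity of log mu for gd} provides the smoothness of $\log\mu$ needed to control boundary terms when integrating by parts to convert $\partial_t \pi_t = \nabla\cdot(\pi_t \grad D_{KL})$ into the pairing $\langle \grad F(\pi_t), \bv_t\rangle_{\pi_t}$; beyond that, all remaining steps are routine, as the argument parallels the classical Euclidean proof that PL (here log-Sobolev) plus a descent identity yields exponential convergence.
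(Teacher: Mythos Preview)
Your proposal is correct and follows essentially the same approach as the paper: derive the dissipation identity $\frac{d}{dt}D_{KL}(\pi_t\parallel\mu)=-\|\grad D_{KL}(\pi_t\parallel\mu)\|_{\pi_t}^2$ along the flow, integrate and use non-negativity of KL for the first bound, and combine with the log-Sobolev (Riemannian PL) inequality and Gr\"onwall for the exponential decay. The paper's proof is even terser and does not discuss the regularity caveats you raise; your extra remarks are fine but not needed for the comparison.
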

	We prove it in Appendix \ref{app:proofs in riemannian gradient flow}. The convergence rate of Riemannian gradient flow is also proved in \citep{chewi2023log} and it matches the results in Euclidean setting \citep{su2014differential,karimi2016linear} as expected. We prove this theorem here to illustrate the criteria of convergence rates under different conditions. 
	
	\section{Riemannian Stochastic Gradient Flow}\label{sec:riemannian sgd flow}
	In practice, SGD is preferred over GD due to its lower computational complexity. However, unlike in Euclidean space \citep{hu2019diffusion,li2017stochastic}, the continuous SGD flow in Wasserstein space has not been explored yet. Next, our goal is to generalize the Riemannian gradient flow \eqref{eq:lagevin FP} to the Riemannian SGD flow.  
	\subsection{Constructing Riemannian Stochastic Flow}
	The stochastic algorithm is developed to minimize the stochastic optimization problem such that 
	\begin{equation}\label{eq:stochastic objective}
		\small
		\min_{\pi} \mE_{\xi}\left[f_{\xi}(\pi)\right] = \min_{\pi}\mE_{\xi}[D_{KL}(\pi\parallel \mu_{\xi})],  
	\end{equation} 
	where the expectation is taken over $\xi$ parameterizes a set of probability measures $\mu_{\xi}$ \footnote{$\mu_{\xi}(\bx)$ is assumed to be $\mu_{\xi}(\bx)\propto \exp(-V_{\xi}(\bx))$.}. For objective \eqref{eq:stochastic objective}, we can get the optima of it by the following proposition proved in Appendix \ref{app:proofs in section riemannian sgd flow}. 
	\begin{restatable}{proposition}{stochasticoptima}\label{eq:stochastic optima}
		The global optima of problem \eqref{eq:stochastic objective} is $\mu \propto \exp\left(\mE_{\xi}\left[\log{\mu_{\xi}}\right]\right)$
	\end{restatable}
	Since we can explicitly get the optima $\mu$ defined in Proposition \eqref{eq:stochastic optima}, the goal of Riemannian SGD flow should be moving towards it. Thus, the target distribution becomes $\mu$ Proposition \eqref{eq:stochastic optima} in the sequel. 
	\par
	To solve the problem \eqref{eq:stochastic objective}, one may use the standard discrete method, Riemannian SGD \citep{bonnabel2013stochastic,yi2022accelerating} as outlined in Algorithm \ref{alg:discrete riemannian sgd}.
	\begin{algorithm}[t!]
		\caption{Discrete Riemannian SGD}
		\label{alg:discrete riemannian sgd}
		\textbf{Input:} Exponential map $\Exp$, initialized $\pi_{0}$, learning rate $\eta$, steps $M$. \indent  
		\begin{algorithmic}[1]
			\FOR    {$n = 0, \cdots ,M - 1$}
			\STATE  {Sample $\xi_{n}\sim\xi$ independent with $\pi_{n}$;}
			\STATE  {Update $\pi_{n + 1} = \Exp_{\pi_{n}}\left[-\eta \grad D_{KL}(\pi\parallel \mu_{\xi_{n}})\right]$;} 
			\ENDFOR
			\STATE {\textbf{Return:} $\pi_{M}$.}
		\end{algorithmic}
	\end{algorithm}
	By analogy with our derivation of the Riemannian GD flow in Section \ref{sec:riemannian gradient flow}, the Riemannian SGD flow is naturally posited as its continuous counterpart. To derive this flow, we first construct the Euclidean-space dynamics ${\bx_n}$ of Algorithm \ref{alg:discrete riemannian sgd}. Then, following a procedure similar to that in Proposition \ref{pro:riemannian gradient}, we approximate the dynamics of $\bx_n$ by a continuous SDE. The corresponding F-P equation then yields the continuous Riemannian SGD flow. Below is the result. 
	\begin{assumption}\label{ass:stochastic continuous}
		For any $\xi$ and probability measure $\mu_{\xi}$, $\log{\mu_{\xi}}$ and $\nabla\log{\mu_{\xi}}$ are Lipschitz continuous with coefficient $L_{1}$ and $L_{2}$ respectively. \footnote{As $\mu_{\xi}\propto \exp(-V_{\xi})$, the assumption implies $V_{\xi}$ and its gradient are Lipschitz continuous.} 
	\end{assumption}
	\begin{restatable}{proposition}{riemanniansgdflow}\label{pro:equivalence of SGD}
		Under Assumption \ref{ass:stochastic continuous}, let $f_{\xi}(\pi) = D_{KL}(\pi\parallel \mu_{\xi})$, the discrete Riemannian SGD Algorithm \ref{alg:discrete riemannian sgd} with $1\leq n \leq \cO(\lfloor1 / \eta\rfloor)$ approximates the continuous Riemannian stochastic gradient flow  
		\begin{equation}\label{eq:sgd flow on riemannian}
			\small
			\frac{\partial}{\partial{t}}\pi_{t} = \nabla\cdot\left[\pi_{t}\left(\nabla\log{\frac{d\pi_{t}}{d\mu}} - \frac{\eta}{2}\nabla\cdot \Sigma_{\rm SGD} - \frac{\eta}{2}\Sigma_{\rm SGD}\nabla\log{\pi_{t}}\right)\right],
		\end{equation}
		by $\mE[\|\bx_{n} - \hbx_{n\eta}\|^{2}] \leq \cO(\eta)$, where $\bx_{n}\sim\pi_{n}$ in Algorithm \ref{alg:discrete riemannian sgd}, $\hbx_{n\eta}\sim \pi_{n\eta}$ in \eqref{eq:sgd flow on riemannian}, for $\bx_{0} = \hbx_{0}$. Here 
		\begin{equation}
			\small
			\begin{aligned}
				\Sigma_{\rm SGD}(\bx) = \mE_{\xi}[\left(\nabla\log{\mu_{\xi}}(\bx) - \nabla\mE_{\xi}\left[\log{\mu_{\xi}}(\bx)\right]\right)\left(\nabla\log{\mu_{\xi}}(\bx) - \nabla\mE_{\xi}\left[\log{\mu_{\xi}}(\bx)\right]\right)^{\top}].
			\end{aligned}
		\end{equation} 
	\end{restatable}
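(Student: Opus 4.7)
The plan is to mirror the approach used for Proposition~\ref{pro:riemannian gradient}: lift the discrete Riemannian SGD iteration from $\cP$ to a coupled iteration on random variables $\bx_n\sim\pi_n$ in $\bbR^d$, identify an SDE of McKean--Vlasov type whose Euler--Maruyama discretization at step size $\eta$ matches this iteration to leading order, prove strong $L^{2}$ closeness between the two, and finally translate the resulting SDE back to a flow on $\cP$ via Lemma~\ref{lem:equivalence} together with the F--P equation~\eqref{eq:fp equation}.

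Concretely, the exponential map~\eqref{eq:exp map in manifold} combined with the Riemannian-gradient formula of Proposition~\ref{pro:riemannian gradient} turns line~3 of Algorithm~\ref{alg:discrete riemannian sgd} into the explicit Euclidean recursion $\bx_{n+1}=\bx_n-\eta\,\nabla\log(d\pi_n/d\mu_{\xi_n})(\bx_n)$ with $\bx_n\sim\pi_n$. I would split the stochastic direction as
\begin{equation*}
\nabla\log\frac{d\pi_n}{d\mu_{\xi_n}}
= \nabla\log\frac{d\pi_n}{d\mu}
+ \bigl(\nabla\mE_\xi[\log\mu_\xi]-\nabla\log\mu_{\xi_n}\bigr),
\end{equation*}
where the second bracket is zero-mean with covariance $\Sigma_{\rm SGD}(\bx_n)$ by the definition of $\mu$ in~\eqref{eq:stochastic optima}. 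Because the noise carries an $\eta$ prefactor instead of $\sqrt{\eta}$, aggregating over $T/\eta$ steps yields a Brownian-like contribution of size $\sqrt{\eta T}\,\sqrt{\Sigma_{\rm SGD}}$, which identifies the McKean--Vlasov SDE
\begin{equation*}
d\bx_t = -\nabla\log\frac{d\pi_t}{d\mu}(\bx_t)\,dt + \sqrt{\eta\,\Sigma_{\rm SGD}(\bx_t)}\,dW_t
\end{equation*}
as the correct continuous limit, with $\pi_t$ its own density.

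For the strong bound $\mE[\|\bx_n-\hbx_{n\eta}\|^{2}]\leq\cO(\eta)$, I would proceed as in Proposition~\ref{pro:riemannian gradient}: express $\bx_n-\hbx_{n\eta}$ as a telescoping sum of one-step errors, bound each piece using Assumption~\ref{ass:stochastic continuous} (which supplies uniform-in-$\xi$ Lipschitz continuity of $\nabla\log\mu_\xi$ and hence of both the SDE drift and the diffusion coefficient $\Sigma_{\rm SGD}^{1/2}$), and close with a discrete Gronwall inequality over the regime $n\eta\leq\cO(1)$. The stochastic contribution splits into a martingale part that is controlled by It\^o isometry and a predictable part handled by the Lipschitz estimate on $\Sigma_{\rm SGD}^{1/2}$.

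Once the SDE limit is in hand, Lemma~\ref{lem:equivalence} is applied with $\bb=-\nabla\log(d\pi_t/d\mu)$ and $\bG\bG^{\top}=\eta\,\Sigma_{\rm SGD}$ to convert the SDE into an equivalent SODE whose density is still $\pi_t$; plugging the resulting velocity field into the F--P equation~\eqref{eq:fp equation} yields~\eqref{eq:sgd flow on riemannian} directly. The main obstacle I anticipate is the McKean--Vlasov character of the limit: the drift depends on the unknown density $\pi_t$ through $\nabla\log\pi_t$, so off-the-shelf Euler--Maruyama error estimates do not quite apply. I would handle this by coupling $\bx_n$ with $\hbx_{n\eta}$ through the same driving noise, observing that the two densities satisfy F--P-type equations that differ only by $\cO(\eta)$ correction terms, and then propagating the required regularity of $\nabla\log\pi_t$ (finite second moment, Lipschitzness inherited from the initialization and Assumption~\ref{ass:stochastic continuous}) through the Gronwall step uniformly on $[0,\cO(1)]$.
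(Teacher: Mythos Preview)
Your high-level strategy---lift to Euclidean space, identify an SDE, and apply Lemma~\ref{lem:equivalence}---matches the paper's, and your McKean--Vlasov SDE does produce the correct Fokker--Planck equation~\eqref{eq:sgd flow on riemannian} after applying Lemma~\ref{lem:equivalence}. However, your strong-approximation step has a genuine gap precisely where you flag the ``main obstacle'': you need Lipschitz control of the drift $-\nabla\log(d\pi_t/d\mu)=\nabla\log\mu-\nabla\log\pi_t$, but Assumption~\ref{ass:stochastic continuous} only controls $\nabla\log\mu_\xi$; it says nothing about $\nabla\log\pi_t$ or $\nabla\log\pi_n$. Your proposed fix (``Lipschitzness inherited from the initialization and Assumption~\ref{ass:stochastic continuous}'') does not go through: regularity of the score along a flow is not a consequence of Lipschitz drift alone, and a discrete Gronwall argument that compares $\nabla\log\pi_n(\bx_n)$ with $\nabla\log\pi_{n\eta}(\hbx_{n\eta})$ has no a priori bound to appeal to.

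The paper sidesteps this entirely by introducing an auxiliary SGLD sequence $\bar\bx_{n+1}=\bar\bx_n+\eta\nabla\log\mu_{\xi_n}(\bar\bx_n)+\sqrt{2\eta}\,\beps_n$ and comparing it to the \emph{non}-McKean--Vlasov SDE
\[
d\hbx_t=\nabla\log\mu(\hbx_t)\,dt+\bigl(\sqrt{\eta}\,\Sigma_{\rm SGD}^{1/2}(\hbx_t),\ \sqrt{2}\,\bI\bigr)\,dW_t,
\]
whose drift depends only on the known $\mu$ and is Lipschitz by Assumption~\ref{ass:stochastic continuous}. The bridge between $\bx_n$ and $\bar\bx_n$ is made in the weak sense using the integration-by-parts identity $\mE_{\pi}[\langle\nabla f,\nabla\log\pi\rangle]=-\mE_{\pi}[\Delta f]$, which converts the intractable score term $-\eta\nabla\log\pi_n$ into the Laplacian contribution that exactly matches the second-order term from the injected Gaussian $\sqrt{2\eta}\,\beps_n$. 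This is the missing idea in your proposal: rather than estimating the score directly, trade it for added Brownian noise so that all continuity requirements reduce to those on $\nabla\log\mu_\xi$. The paper's Remark following the proof of Proposition~\ref{pro:riemannian gradient} makes this point explicitly.
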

	As can be seen, similar to Proposition \ref{pro:riemannian gradient}, we approximate the discrete Riemannian SGD with continuous flow \eqref{eq:sgd flow on riemannian}, so that it is Riemannian SGD flow as desired. 
	\par
	One may observe that the Riemannian ``stochastic'' gradient flow \eqref{eq:sgd flow on riemannian} is a deterministic curve in the Wasserstein space $\mathcal{P}$. This is not inconsistent with the randomness in the index $\xi_n$ for $\pi_n$ in discrete Riemannian SGD. This is because the stochasticity introduced by $\xi_n$ when obtaining $\pi_n$ (with $\bx_n \sim \pi_n$) in Algorithm \ref{alg:discrete riemannian sgd} is implicitly captured in the corresponding $\bx_n$. In other words, the randomnesses from the random vector $\bx_n$ itself and the random index $\xi_n$ are fully incorporated into $\bx_n$ and manifested in its continuous approximation $\hat{\bx}_{n\eta} \sim \pi_{n\eta}$, where $\pi_t \in \mathcal{P}$ is the deterministic curve \eqref{eq:sgd flow on riemannian} in the Wasserstein space. Thus, all randomness in the Riemannian SGD is accounted in this formulation.
	\par
	Besides, we can find that implementing the discrete Riemannian SGD is non-trivial, since it requires Riemannian gradient $\nabla{\log{(d\pi_{t}/d\mu})}$. However, during proving Proposition \ref{pro:equivalence of SGD}, we show the discrete stochastic gradient Langevin dynamics (SGLD) \citep{welling2011bayesian}
	\begin{equation}\label{eq:discrete sgld}
		\small
		\bx_{n + 1} = \bx_{n} + \eta\nabla\log{\mu_{\xi_{n}}(\bx_{n})} + \sqrt{2\eta}\beps_{n}
	\end{equation}
	approximates \footnote{The approximation is verified by noting $\mE_{\pi}[\langle\nabla f, \nabla\log{\pi}\rangle] = -\mE_{\pi}[\Delta f]$ for continuous test function $f$, and combining Taylor's expansion, please check Appendix \ref{app:proofs in section riemannian sgd flow} for more details.} the corresponded dynamics of $\{\bx_{n}\}$ in discrete Riemannian SGD (Algorithm \ref{alg:discrete riemannian sgd})
	\begin{equation}\label{eq:discrete Riemannian sgd}
		\small
		\bx_{n + 1} = \bx_{n} + \eta \nabla\log\frac{d\mu_{\xi_{n}}}{d\pi_{n}}(\bx_{n}). 
	\end{equation}
	Thus, in practice, discrete SGLD can be implemented to approximate discrete Riemannian SGD. Furthermore, based on this approximation and Proposition \ref{pro:equivalence of SGD}, the continuous counterpart $\bx_t$ of \eqref{eq:discrete Riemannian sgd}, as governed by the Riemannian SGD flow \eqref{eq:sgd flow on riemannian} (Lemma \ref{lem:equivalence}), satisfies the SDE
	\begin{equation}\label{eq:sde of riemannian sgd}
		\small
		d\bx_{t} = \nabla\log{\frac{d\mu}{d\pi_{t}}(\bx_{t})}dt + \sqrt{\eta}\Sigma_{\rm SGD}^{\frac{1}{2}}(\bx_{t})dW_{t},
	\end{equation}
	is also the continuous limit of the discrete SGLD \eqref{eq:discrete sgld}. This establishes a connection between the Riemannian SGD flow and discrete SGLD—that is, the Riemannian SGD flow in the Wasserstein space corresponds precisely to continuous SGLD. \textbf{Consequently, our Riemannian SGD flow provides a powerful framework for analyzing discrete SGLD or Riemannian SGD.} For example, combining Proposition \ref{pro:equivalence of SGD} and Theorem \ref{thm:convergence riemannian sgd} implies the convergence rate of discrete Riemannian SGD.
	\subsection{Convergence of Riemannian SGD Flow}\label{sec:convergence rate of riemannian sgd flow}
	Next, we examine the convergence rate of Riemannian SGD flow. As in Section \ref{sec:riemannian gradient flow}, our analyses are respectively conducted with/without log-Sobolev inequality. 
	\begin{restatable}{theorem}{convergenceofriemanniansgd}\label{thm:convergence riemannian sgd}
		Let $\pi_{t}$ follows the Riemannian SGD flow \eqref{eq:sgd flow on riemannian} and $\mu$ defined in Proposition \eqref{eq:stochastic optima}. Under Assumption \ref{ass:stochastic continuous}, if $T \geq \frac{64L_{1}^{4}D_{KL}(\pi_{0}\parallel \mu)}{4dL_{1}^{2}L_{2} + (d + 1)^{2}L_{2}^{2}}$, then by taking $\eta = \sqrt{\frac{D_{KL}(\pi_{0}\parallel \mu)}{T(4dL_{1}^{2}L_{2} + (d + 1)^{2}L_{2}^{2})}}$, we have 
		\begin{equation}
			\small 
			\begin{aligned}
				\frac{1}{\eta T}\int_{0}^{\eta T}\|\grad D_{KL}(\pi_{t}\parallel \mu)\|^{2}_{\pi_{t}}dt \leq \frac{4D_{KL}(\pi_{0}\parallel \mu)}{\eta T} = \cO\left(\frac{1}{\sqrt{T}}\right).
			\end{aligned}
		\end{equation}
		Besides that, if \eqref{eq:log Sobolev inequality} is satisfied for $\mu$, $\eta = 1 / \gamma T^{\alpha}$ with $0 < \alpha < 1$, and $T \geq \left(8L_{1}^{2} / \gamma\right)^{1/\alpha}$, then  
		\begin{equation}
			\small
			\begin{aligned}
				D_{KL}(\pi_{\eta T} \parallel \mu) \leq \frac{1}{\gamma T^{\alpha}}\left[4dL_{1}^{2}L_{2} + (d + 1)^{2}L_{2}^{2}\right]  = \cO\left(\frac{1}{T^\alpha}\right).
			\end{aligned}
		\end{equation}
	\end{restatable}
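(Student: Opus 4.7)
The plan is to differentiate $F(\pi_t)=D_{KL}(\pi_t\parallel\mu)$ along the flow, isolate the dissipation term $-\|\grad D_{KL}(\pi_t\parallel\mu)\|^2_{\pi_t}$, and control the $\eta$-order correction introduced by $\Sigma_{\rm SGD}$ via Assumption~\ref{ass:stochastic continuous}. Reading the velocity field off \eqref{eq:sgd flow on riemannian} and applying \eqref{eq:riemannian gradient} with the Riemannian gradient from Proposition~\ref{pro:riemannian gradient}, the chain rule yields
\begin{equation*}
\small
\frac{d}{dt}D_{KL}(\pi_t\parallel\mu) = -\|\grad D_{KL}(\pi_t\parallel\mu)\|^2_{\pi_t} + \frac{\eta}{2}\int \grad D_{KL}\cdot(\nabla\cdot\Sigma_{\rm SGD})\,d\pi_t + \frac{\eta}{2}\int \grad D_{KL}^{\top}\Sigma_{\rm SGD}\nabla\log\pi_t\,d\pi_t,
\end{equation*}
so the work reduces to showing the two $\eta$-terms are dominated by $\tfrac12\|\grad D_{KL}\|^2_{\pi_t}+\tfrac{\eta}{4}C_{d,L_1,L_2}$ with $C_{d,L_1,L_2}=4dL_1^2L_2+(d+1)^2L_2^2$.

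To bound these terms I use that $\log\mu_\xi$ being $L_1$-Lipschitz gives $\|\nabla\log\mu_\xi\|\le L_1$, hence $\|\Sigma_{\rm SGD}(\bx)\|_{\rm op}\le L_1^2$, while $\nabla\log\mu_\xi$ being $L_2$-Lipschitz makes $\|\nabla\cdot\Sigma_{\rm SGD}\|$ an explicit $\cO(dL_1L_2)$ quantity after an entry-wise differentiation of the covariance. Inside the third integral I substitute $\nabla\log\pi_t = \grad D_{KL}(\pi_t\parallel\mu)+\nabla\log\mu$: the $\grad D_{KL}$ piece adds at most $\tfrac{\eta L_1^2}{2}\|\grad D_{KL}\|^2_{\pi_t}$ to the dissipation, while the $\nabla\log\mu$ piece and the whole $(\nabla\cdot\Sigma_{\rm SGD})$ term are each dispatched by Cauchy-Schwarz and a suitably scaled Young's inequality. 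The hypotheses $T\ge 64L_1^4 D_{KL}(\pi_0\parallel\mu)/C_{d,L_1,L_2}$ and $T\ge(8L_1^2/\gamma)^{1/\alpha}$ both enforce $\eta L_1^2\le 1/8$ under the respective choices of $\eta$, which is precisely what keeps the coefficient of $-\|\grad D_{KL}\|^2_{\pi_t}$ at least $1/2$. The net differential inequality is
\begin{equation*}
\small
\frac{d}{dt}D_{KL}(\pi_t\parallel\mu) \leq -\tfrac12\|\grad D_{KL}(\pi_t\parallel\mu)\|^2_{\pi_t} + \tfrac{\eta}{4}C_{d,L_1,L_2}.
\end{equation*}

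For the first (non-convex) bound, I integrate on $[0,\eta T]$, drop the nonnegative $D_{KL}(\pi_{\eta T}\parallel\mu)$, and plug in the balanced step size $\eta=\sqrt{D_{KL}(\pi_0\parallel\mu)/(TC_{d,L_1,L_2})}$, which makes the residual $\tfrac{\eta^2 T}{4}C_{d,L_1,L_2}$ of the same order as $D_{KL}(\pi_0\parallel\mu)$; dividing by $\eta T$ then gives the claimed $\cO(1/\sqrt T)$ averaged rate with the constant $4$. For the log-Sobolev bound, the Riemannian PL inequality \eqref{eq:log Sobolev inequality} upgrades $-\tfrac12\|\grad D_{KL}\|^2_{\pi_t}\le -\gamma D_{KL}(\pi_t\parallel\mu)$, turning the differential inequality into $\tfrac{d}{dt}D_{KL}(\pi_t\parallel\mu)\le -\gamma D_{KL}(\pi_t\parallel\mu)+\tfrac{\eta}{4}C_{d,L_1,L_2}$, and Gr\"onwall delivers $D_{KL}(\pi_{\eta T}\parallel\mu)\le e^{-\gamma\eta T}D_{KL}(\pi_0\parallel\mu)+\cO(\eta C_{d,L_1,L_2}/\gamma)$; with $\eta=1/(\gamma T^\alpha)$ and $\alpha<1$ one has $\eta T=T^{1-\alpha}/\gamma\to\infty$, so the exponential term vanishes and only the $\cO(1/T^\alpha)$ residual survives, matching the advertised bound.

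The main obstacle is the error analysis in the second paragraph: the score $\nabla\log\pi_t$ of the current iterate is not controlled by Assumption~\ref{ass:stochastic continuous}, so the substitution $\nabla\log\pi_t=\grad D_{KL}(\pi_t\parallel\mu)+\nabla\log\mu$ is essential for reducing everything to quantities that the Lipschitz hypotheses do bound; tracking the divergence of $\Sigma_{\rm SGD}$ entry-wise with the right Young scalings is the other delicate piece, since it determines whether the constants line up to $4dL_1^2L_2+(d+1)^2L_2^2$ exactly.
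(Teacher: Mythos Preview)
Your overall architecture is correct and matches the paper: differentiate $D_{KL}(\pi_t\parallel\mu)$ along the flow \eqref{eq:sgd flow on riemannian}, isolate $-\|\grad D_{KL}\|^2_{\pi_t}$, bound the two $\eta$-correction terms to obtain a differential inequality, then integrate (respectively apply log-Sobolev and Gr\"onwall). The handling of the $\nabla\cdot\Sigma_{\rm SGD}$ term via Young's inequality with parameter $H_3=4L_1^2$ is exactly what the paper does, and your reading of the step-size constraints $\eta H_3\le 1/2$ as the source of the lower bounds on $T$ is right.

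The one place you diverge from the paper is the cross term $\frac{\eta}{2}\int\grad D_{KL}^{\top}\Sigma_{\rm SGD}\nabla\log\pi_t\,d\pi_t$. You substitute $\nabla\log\pi_t=\grad D_{KL}+\nabla\log\mu$ and then propose to dispatch the $\nabla\log\mu$ piece by Cauchy--Schwarz/Young using $\|\nabla\log\mu\|\le L_1$. That does give a valid bound, but the residual it produces is of order $\eta\,\|\Sigma_{\rm SGD}\|_{\rm op}^2 L_1^2\sim \eta L_1^{6}$, which cannot be massaged into the stated constant $4dL_1^2L_2+(d+1)^2L_2^2$ no matter how you tune the Young scalings. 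The paper instead splits on the \emph{other} factor, writing $\grad D_{KL}=\nabla\log\pi_t-\nabla\log\mu$, and then integrates by parts the terms of the form $\int\langle\nabla\log\mu,\,(\cdot)\nabla\log\pi_t\rangle\,d\pi_t$ to convert them into Hessian quantities $\int\tr(\nabla^2\log\mu\cdot\Sigma_{\rm SGD})\,d\pi_t$ and $\int\tr(\nabla^2\log\mu)\,d\pi_t$. These are controlled via $\|\nabla^2\log\mu\|\le L_2$ and the trace bound $\tr^+(\nabla^2\log\mu)\le dL_2$ (this is $H_1$ in Lemma~\ref{lemma:regularity bound}), which is precisely where the $4dL_1^2L_2=H_1H_3$ contribution comes from. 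So your plan proves the right rates, but to land the exact constants in the statement you need this integration-by-parts step rather than a direct $L_1$-bound on $\nabla\log\mu$.
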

	The proof of this theorem is provided in Appendix \ref{app:convergence sgd}. Notably, the presence of $\eta$ in the SDE leads to a convergence rate of order $\mathcal{O}(1 / \sqrt{T})$ for the Riemannian SGD flow\footnote{We emphasize that no constraints are imposed on the learning rate $\eta$ in Theorem \ref{thm:convergence riemannian gd}; hence, the convergence rate of Riemannian GD remains as stated in Theorem \ref{thm:convergence riemannian gd}.}. Under the log-Sobolev inequality, a global convergence rate of $\mathcal{O}(1 / T)$ can be established (by taking $\alpha \to 1$). Therefore, the convergence rates proved in Theorem \ref{thm:convergence riemannian sgd} align with those of the continuous SGD flow in Euclidean space \citep{ghadimi2013stochastic,orvieto2019continuous}, as demonstrated in Appendix \ref{app:convergence sgd}.
	\begin{remark}
		During the proof to Theorem \ref{thm:convergence riemannian sgd}, we assume $\log{\mu_{\xi}}$ is Lipschitz continuous. The assumption can be relaxed as $\mE_{\xi, \bx_{t}}[\|\nabla\log{\mu(\bx_{t})} - \mE_{\xi}[\nabla\log{\mu(\bx_{t})}]\|^{2}] \leq \sigma^{2}$, for constant $\sigma$ and $\bx_{t}\sim \pi_{t}$ in \eqref{eq:sgd flow on riemannian}, which is the standard ``bounded variance'' assumption in optimization on Euclidean space \citep{ghadimi2013stochastic}. 
	\end{remark}
	
	\par
	In the remainder of this section, we further demonstrate the tightness of the derived convergence rate for the Riemannian SGD flow through the following example. Although this example does not satisfy the Lipschitz continuity condition on $\log \mu_{\xi}$ in Assumption \ref{ass:stochastic continuous}, it does satisfy the condition of bounded variance discussed in the preceding remark. Consequently, the results in Theorem \ref{thm:convergence riemannian sgd} remain valid.
	\begin{example}\label{example:gaussian}
		Let $\mu_{\bxi}\sim\cN(\bxi, \bI)$, with $\bxi\in\{\bxi_{1},\cdots, \bxi_{N}\}$, $\max_{1\leq j\leq N}\|\bxi_{j}\| \leq C$ for a constant $C$. 
	\end{example}
	Due to Proposition \eqref{eq:stochastic optima}, we have $\mu\sim \cN(\bar{\bxi}, \bI)$ with $\mE[\bxi] = \bar{\bxi} = \sum_{j}\bxi_{j} / N$, which is the target measure of Riemannian SGD flow. Then, we have $\Sigma_{\rm SGD} = \frac{1}{N}\sum_{j=1}^{N}(\bxi_{j} - \mE[\bxi])(\bxi_{j} - \mE[\bxi])^{\top} = \Var(\bxi)$. 
    \par
    The assumptions (including log-Sobolev inequality) in Theorem \ref{thm:convergence riemannian sgd} are all satisfied (see Lemma \ref{lemma:regularity bound} in Appendix). Then, the corresponding SDE of Riemannian SGD flow \eqref{eq:sgd flow on riemannian} is
	\begin{equation}
		\small
		d\bx_{t} = -(\bx_{t} - \bar{\bxi})dt + (\sqrt{\eta}\Var^{\frac{1}{2}}(\bxi), \sqrt{2}\bI)dW_{t},
	\end{equation}
	which has the following closed-form solution $\bx_{t} = \bar{\bxi} + e^{-t}(\bx_{0} - \bar{\bxi}) + e^{-t}\sqrt{\frac{\eta}{2}}\Var^{\frac{1}{2}}(\bxi)W_{e^{2t} - 1}^{(1)} + e^{-t}W_{e^{2t} - 1}^{(2)},$
%
	where $W^{(1)}_{t}$ and $W^{(2)}_{t}$ are standard independent Brownian motions. By taking $t = \eta T$, for any $\bx_{0}$, 
	\begin{equation}
		\small
		\bx_{\eta T}\sim \cN\left(\bar{\bxi} + e^{-\eta T}(\bx_{0} - \bar{\bxi}), (1 - e^{-2\eta T})(\eta\Var(\bxi)/2 + \bI)\right). 
	\end{equation}
	Here, $e^{-\eta T} \approx 0$ for large $T$, and $\eta = \cO(1/T^{\alpha})$ with $0 < \alpha < 1$. Then, $\bx_{\eta T}\approx \cN(\bar{\bxi}, \frac{\eta}{2}\Var(\bxi) + \bI)$. The Riemannian gradient $\grad D_{KL}(\pi_{\eta T}\parallel \mu) = [(\bI + \eta\Var(\bxi)/2)^{-1} - \bI](\bx_{\eta T} - \bar{\bxi}),$
	which indicates 
	\begin{equation}\label{eq:example gradient sgd convergence}
		\small
		\begin{aligned}
			\left\|\grad D_{KL}(\pi_{\eta T}\parallel \mu)\right\|_{\pi_{\eta T}}^{2} = \tr\left(\left(\bI + \eta\Var(\bxi) / 2\right)^{-1} - \bI + \eta\Var(\bxi)/2\right) = \cO\left(T^{-\alpha}\right).
		\end{aligned}
	\end{equation}
	On the other hand, the KL divergence between Gaussian measures \citep{petersen2008matrix} $\pi_{\eta T}, \mu$ can be calculated as 
	\begin{equation}\label{eq:example global sgd convergence}
		\small
		\begin{aligned}
			D_{KL}(\pi_{\eta T}\parallel \mu) \approx \frac{1}{2}[\log{|\bI + \eta\Var(\bxi) / 2|} + \eta\tr(\Var(\bxi)) / 2] = \cO\left(T^{-\alpha}\right).
		\end{aligned}
	\end{equation}
	As can be seen, the convergence rates in \eqref{eq:example gradient sgd convergence} and \eqref{eq:example global sgd convergence} are consistent with the proved convergence rates in Theorem \ref{thm:convergence riemannian sgd}, under $\alpha = 1/2$ and $\alpha\to 1$, respectively. Thus, the example \eqref{example:gaussian} indicates the convergence rates in Theorem \ref{thm:convergence riemannian sgd} are sharp. 
	\section{Riemannian Stochastic Variance Reduction Gradient Flow}\label{sec:svrg flow}
	Next, we extend discrete algorithm SVRG \citep{johnson2013accelerating} to its continuous counterpart in Wasserstein space. 
	\subsection{Constructing Riemannian SVRG Flow}\label{sec:Constructing Riemannian SVRG Flow}
	In practice, the objective in \eqref{eq:stochastic objective} often takes the form of a finite sum, where $\xi$ is uniformly distributed over ${\xi_1, \cdots, \xi_N}$, so that the objective becomes $\min_{\pi} F(\pi) = \min_{\pi}\mE_{\xi}[f_{\xi}(\pi)] = \min_{\pi}\frac{1}{N}\sum_{j=1}^{N}f_{\xi_{j}}(\pi)$. 
	Clearly, computing $\grad F(\pi)$ requires $\mathcal{O}(N)$ operations. The convergence rates in Theorems \ref{thm:convergence riemannian gd} and \ref{thm:continuous sgd convergence} imply that achieving $|\grad D_{\mathrm{KL}}(\pi_t \parallel \mu)|_{\pi_t}^2 \leq \epsilon$ for some $t$—that is, reaching an $\epsilon$-stationary point—requires computational complexities of $\mathcal{O}(N \epsilon^{-1})$ and $\mathcal{O}(\epsilon^{-2})$, respectively\footnote{The computational complexity of continuous optimization is evaluated by implementing the corresponding discrete algorithm. Further details are provided in Appendix \ref{app:convergence sgd}}. Therefore, for large $N$, Riemannian SGD flow offers an improvement over Riemannian GD flow.
	\par
	In Euclidean space, considerable efforts have been devoted to further improving computational complexity, with methods such as SVRG \citep{johnson2013accelerating,zou2018subsampled}, SPIDER \citep{fang2018spider,zhang2018r}, and SARAH \citep{nguyen2017sarah}. Among these, the double-loop structure of SVRG represents a core idea shared across several variance-reduced approaches. For this reason, we focus specifically on SVRG in this paper.  
	\par 
	\begin{algorithm}[t!]
		\caption{Discrete Riemannian SVRG}
		\label{alg:discrete riemannian svrg}
		\textbf{Input:} Exponential map $\Exp_{\pi}$, initialized $\pi_{0}$, learning rate $\eta$, epoch $I$, steps $M$ of each epoch.
		\begin{algorithmic}[1]
			\STATE  {Take $\pi_{0}^{0} = \pi_{0}$;}
			\FOR    {$i = 0, \cdots ,I - 1$}
			\STATE  {Compute $\grad F(\pi_{0}^{i}) = \frac{1}{N}\sum_{j=1}^{N}\grad f_{\xi_{j}}(\pi_{0}^{i})$}
			\FOR    {$n = 0, \cdots, M - 1$}
			\STATE  {Uniformly sample $\xi_{n}^{i}\in\{\xi_{1},\cdots, \xi_{N}\}$ independent with $\pi_{n}^{i}$;}
			\STATE  {Update $\pi_{n + 1}^{i} = 
				\Exp_{\pi_{n}^{i}}[-\eta(\grad f_{\xi_{n}^{i}}(\pi_{n}^{i}) - 
				\Gamma_{\pi_{0}^{i}}^{\pi_{n}^{i}}(\grad f_{\xi_{n}^{i}}(\pi_{0}^{i}) - \grad F(\pi_{0}^{i})))]$;} 
			\ENDFOR
			\STATE  {$\pi^{i + 1}_{0} = \pi^{i}_{M}$;}
			\ENDFOR
			\STATE {\textbf{Return:} $\pi_{N}$.}
		\end{algorithmic}
	\end{algorithm}
	As presented in Section \ref{sec:riemannian sgd flow}, we begin with the discrete Riemannian SVRG method \citep{zhang2016riemannian} outlined in Algorithm \ref{alg:discrete riemannian svrg}. Analogous to the Euclidean setting, line 6 of Algorithm \ref{alg:discrete riemannian svrg} employs the tangent vector $\grad f_{\xi_{n}^{i}}(\pi_{n}^{i}) - 
	\Gamma_{\pi_{0}^{i}}^{\pi_{n}^{i}}(\grad f_{\xi_{n}^{i}}(\pi_{0}^{i}) - \grad F(\pi_{0}^{i})))$ which serves as a variance-reduced estimator of the Riemannian gradient $\grad F(\pi_{n}^{i})$. This constitutes the core idea of SVRG-type algorithms. In this context, the mapping $\Gamma_{\pi_{0}^{i}}^{\pi_{n}^{i}}$ denotes parallel transport \citep{absil2009optimization,zhang2016riemannian}. Specifically, for the loss function $f_{\xi}(\pi) = D_{\text{KL}}(\pi \parallel \mu_{\xi})$, the difference of Riemannian gradients $\grad f_{\xi_{n}^{i}}(\pi_{0}^{i}) - \grad F(\pi_{0}^{i})$ lies in the tangent space $\cT_{\pi_{0}^{i}}\cP$, while the exponential map $\Exp_{\pi_{n}^{i}}$ is defined on $\cT_{\pi_{n}^{i}}\cP$. To reconcile this discrepancy, the parallel transport  $\Gamma_{\pi_{0}^{i}}^{\pi_{n}^{i}}: \cT_{\pi_{0}^{i}}\cP\rightarrow \cT_{\pi_{n}^{i}}\cP$ is applied, which maps vectors from $\cT_{\pi_{0}^{i}}\cP$ to $\cT_{\pi_{n}^{i}}\cP$ while preserving the Riemannian metric, i.e.,  $\|\Gamma_{\pi_{0}^{i}}^{\pi_{n}^{i}}(\bu)\|^{2}_{\pi_{n}^{i}} = \|\bu\|^{2}_{\pi_{0}^{i}}$ for any $\bu\in\cT_{\pi_{0}^{i}}$.
	\par
	In Wasserstein space, we define $\Gamma_{\mu}^{\pi}(\bu)= \bu\circ T_{\pi\to\mu}$ for $\mu, \pi\in\cP$ and $\bu\in\cT_{\mu}$, where $\circ$ is the composition operator, and $T_{\pi\to\mu}: \cX\to\cX$ satisfies $T_{\pi\to\mu}(\bx)\sim \mu$ for $\bx\sim \pi$. Thus, 
	\begin{equation}
		\small
		\begin{aligned}
			\|\bu\|_{\mu}^{2} = \int \|\bu\|^{2} d\mu = \int \|\bu(T_{\pi\to\mu}(\bx))\|^{2} d\pi(\bx) = \int \|\bu \circ T_{\pi\to \mu}\|^{2}d\pi = \int \|\Gamma_{\mu}^{\pi} (\bu)\|d\pi,
		\end{aligned}
	\end{equation}
	which is consistent with the definition of parallel transport. Based on the preceding notations, we now proceed to construct the continuous Riemannian SVRG flow derived from Algorithm \ref{alg:discrete riemannian svrg}. The underlying rationale aligns with the approaches established in Propositions \ref{pro:riemannian gradient} and \ref{pro:equivalence of SGD}, namely, approximating the discrete Riemannian SVRG updates with a deterministic flow in the Wasserstein space. The formal result is stated in the following proposition.
	\begin{restatable}{proposition}{svrgflow}\label{pro:riemannian svrg flow}
		Under Assumption \ref{ass:stochastic continuous}, let $f_{\xi}(\pi) = D_{KL}(\pi \parallel \mu_{\xi})$, the discrete Riemannian SVRG  Algorithm \ref{alg:discrete riemannian svrg} with $1 \leq n \leq \cO(\lfloor1 / \eta\rfloor)$ approximates the Riemannian SVRG flow 
		\begin{equation}\label{eq:svrg flow on manifold}
			\small
			\begin{aligned}
				\frac{\partial}{\partial{t}}\pi_{t}(\bx) = \nabla\cdot\left[\pi_{t}(\bx)\left(\nabla\log{\frac{d\pi_{t}}{d\mu}}(\bx)\right.\right. 
				\left.\left.- \frac{\eta}{2\pi_{t}(\bx)}\int \pi_{t_{i}, t}(\by, \bx)\nabla_{\bx}\cdot \Sigma_{\rm SVRG}(\by, \bx)d\by \right.\right. \\
				\left.\left. - \frac{\eta}{2\pi_{t}(\bx)}\int\pi_{t_{i}, t}(\by, \bx)\Sigma_{\rm SVRG}(\by, \bx)\nabla_{\bx}\log{\pi_{t_{i}, t}(\by, \bx)}d\by\right)\right],
			\end{aligned}
		\end{equation}
		for $iM\eta=t_{i} \leq t \leq t_{i + 1}$, $(\hbx_{t_{i}}, \hbx_{t})\sim \pi_{t_{i}, t}$, $\hbx_{t}\sim\pi_{t}$ in \eqref{eq:svrg flow on manifold}, since we have $\mE[\|\bx_{n}^{i} - \hbx_{(iM + n)\eta}\|^{2}] \leq \cO(\eta)$, where $\bx_{n}^{i}\sim \pi_{n}^i$ in Algorithm \ref{alg:discrete riemannian svrg} for $\bx_{0}^0 = \hbx_{0}$. Here    
		\begin{equation}\label{eq:svrg variance}	
			\begin{aligned}
				\Sigma_{\rm SVRG}(\by, \bx)
				& = \mE_{\xi}\left[\left(\nabla\log{\mu_{\xi}}(\bx) - \nabla\log{\mu_{\xi}}(\by) + \nabla\mE_{\xi}\left[\log{\mu_{\xi}}(\by)\right] - \nabla\mE_{\xi}\log{\mu_{\xi}}(\bx)\right)\right. \\
				& \left(\nabla\log{\mu_{\xi}}(\bx) - \nabla\log{\mu_{\xi}}(\by) + \nabla\mE_{\xi}\left[\log{\mu_{\xi}}(\by)\right] - \nabla\mE_{\xi}\log{\mu_{\xi}}(\bx)\right)^{\top}].
			\end{aligned}
		\end{equation}
	\end{restatable}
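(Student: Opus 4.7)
My plan is to mimic the strategy used for Proposition \ref{pro:equivalence of SGD}, but track an augmented state that keeps both the current iterate $\bx$ and its coupled ``reference'' point $\by$ at the start of the current epoch, so that the covariance of the stochastic direction can depend on $(\by,\bx)$.

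First I would realize Algorithm \ref{alg:discrete riemannian svrg} in Euclidean coordinates. Using $\grad D_{KL}(\pi\parallel\mu_{\xi}) = \nabla\log(d\pi/d\mu_{\xi})$ (Proposition \ref{pro:riemannian gradient}) together with the definition $\Gamma_{\pi_{0}^{i}}^{\pi_{n}^{i}}\bxi = \bxi \circ T_{\pi_{n}^{i}\to \pi_{0}^{i}}$, the inner-loop update becomes, for $\bx_{n}^{i}\sim \pi_{n}^{i}$ and its coupled $\by_{0}^{i}:=T_{\pi_{n}^{i}\to\pi_{0}^{i}}(\bx_{n}^{i})\sim \pi_{0}^{i}$,
\begin{equation*}
\bx_{n+1}^{i} = \bx_{n}^{i} + \eta\Big[\nabla\log\tfrac{d\mu_{\xi_{n}^{i}}}{d\pi_{n}^{i}}(\bx_{n}^{i}) + \nabla\log\mu_{\xi_{n}^{i}}(\by_{0}^{i}) - \mE_{\xi}\nabla\log\mu_{\xi}(\by_{0}^{i})\Big].
\end{equation*}
Note that $(\by_{0}^{i},\bx_{n}^{i})$ is distributed according to the joint law $\pi_{t_{i},t_{i}+n\eta}$ induced by the optimal transport coupling.

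Second, I would take the conditional expectation of the bracket above given $(\by_{0}^{i},\bx_{n}^{i})$. A direct computation, using $\mu \propto \exp(\mE_{\xi}\log\mu_{\xi})$ to telescope the $\by_{0}^{i}$-terms, reduces the mean direction to $\nabla\log(d\mu/d\pi_{n}^{i})(\bx_{n}^{i})$, exactly as in the gradient-flow case. The conditional covariance is precisely $\Sigma_{\rm SVRG}(\by_{0}^{i},\bx_{n}^{i})$ defined in \eqref{eq:svrg variance}. Thus, following the same Taylor-expansion/weak-order argument that was used for Proposition \ref{pro:equivalence of SGD} (based on Assumption \ref{ass:stochastic continuous} and Lemma \ref{lem:equivalence}), the discrete dynamics is approximated in strong order by the SDE
\begin{equation*}
d\bx_{t} = \nabla\log\tfrac{d\mu}{d\pi_{t}}(\bx_{t})\,dt + \sqrt{\eta}\,\Sigma_{\rm SVRG}^{1/2}(\bx_{t_{i}},\bx_{t})\,dW_{t},\qquad t_{i}\le t\le t_{i+1},
\end{equation*}
driven jointly with the ``frozen'' reference variable $\bx_{t_{i}}$; a standard Gronwall argument together with the Lipschitz assumption gives the stated error bound $\mE\|\bx_{n}^{i}-\hbx_{(iM+n)\eta}\|^{2}\le \cO(\eta)$.

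Third, I would derive the PDE for the marginal $\pi_{t}$ of $\hbx_{t}$. Because the diffusion coefficient is state-dependent through both $\bx_{t_{i}}$ and $\bx_{t}$, I would apply Lemma \ref{lem:equivalence} to the joint process $(\bx_{t_{i}},\bx_{t})$ (with trivial drift in the first coordinate), obtain the F-P equation for the joint density $\pi_{t_{i},t}(\by,\bx)$, and then integrate out $\by$. The gradient-flow drift contributes $\nabla\cdot(\pi_{t}\nabla\log(d\pi_{t}/d\mu))$, while the $\by$-integration of the divergence/score correction terms produces exactly the two integral terms in \eqref{eq:svrg flow on manifold} with the $1/\pi_{t}(\bx)$ factor arising from conditioning.

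The main obstacle I foresee is the bookkeeping in the last step: the diffusion matrix depends nontrivially on both coordinates of the joint process, so one must be careful to apply the ``probability-flow'' identity of Lemma \ref{lem:equivalence} in $\bx$ only (treating $\by$ as a parameter tied to time $t_{i}$), and then recognize that the resulting divergence and score-matching terms, once written in terms of the marginal $\pi_{t}$ via $\pi_{t_{i},t}(\by,\bx)=\pi_{t}(\bx)\,p(\by\mid\bx)$, produce precisely the integrals displayed in \eqref{eq:svrg flow on manifold}. The other technical nuisance is verifying the coupling $(\by_{0}^{i},\bx_{n}^{i})\sim \pi_{t_{i},t_{i}+n\eta}$ is preserved along the inner loop; this follows from the fact that the optimal-transport map is exactly what the parallel transport encodes, together with the approximation guarantee transferred through each inner step.
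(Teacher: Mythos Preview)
Your overall plan coincides with the paper's: realize the inner loop in Euclidean coordinates as a coupled pair (reference point, current iterate), show it is a weak-order discretization of an SDE whose diffusion depends on both coordinates, then obtain \eqref{eq:svrg flow on manifold} by writing the conditional F--P equation and integrating out the reference variable. Two points in your sketch, however, do not match the paper and would need repair.

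First, the SDE you write has drift $\nabla\log(d\mu/d\pi_t)$ and diffusion $\sqrt{\eta}\,\Sigma_{\rm SVRG}^{1/2}$. The paper instead inserts an intermediate discrete sequence of SVRG--Langevin type,
\[
\barx_{n+1} \;=\; \barx_n + \eta\bigl(\nabla\log\mu_{\xi_n}(\barx_n) - \nabla\log\mu_{\xi_n}(\bx_0) + \nabla\log\mu(\bx_0)\bigr) + \sqrt{2\eta}\,\beps_n,
\]
obtained by trading the score term $-\nabla\log\pi_n$ for Gaussian noise via $\mE_\pi[\langle\nabla f,\nabla\log\pi\rangle]=-\mE_\pi[\Delta f]$; the approximating SDE is then $d\hbx_t = \nabla\log\mu(\hbx_t)\,dt + (\sqrt{\eta}\Sigma_{\rm SVRG}^{1/2},\sqrt{2}\bI)\,dW_t$. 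Your McKean--Vlasov SDE has the same F--P equation (Lemma~\ref{lem:equivalence}), but the ``standard Gronwall'' step you invoke would need Lipschitz control of $\nabla\log\pi_t$, which is not part of Assumption~\ref{ass:stochastic continuous}. The $\barx_n$ detour is precisely what makes the $\cO(\eta)$ bound go through, and it is the step your sketch is missing.

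Second, the paper does not re-couple via optimal transport at each inner step. In the proof it \emph{chooses} the parallel transport ``that preserves the correlation between $\bx_0^i$ and $\bx_n^i$'', so the reference point is literally the epoch-start iterate $\bx_0^i$ throughout the inner loop, and $\pi_{t_i,t}$ is the \emph{trajectory} law of $(\hbx_{t_i},\hbx_t)$. Your $\by_0^i = T_{\pi_n^i\to\pi_0^i}(\bx_n^i)$ follows the main-text definition of $\Gamma$, but the OT coupling of $\pi_0^i$ and $\pi_n^i$ is in general not the trajectory coupling, so your claim $(\by_0^i,\bx_n^i)\sim\pi_{t_i,t_i+n\eta}$ does not hold as written; the paper's choice sidesteps this entirely. (Also, a harmless sign slip: in your displayed update the last two terms should read $-\nabla\log\mu_{\xi_n^i}(\by_0^i)+\mE_\xi\nabla\log\mu_\xi(\by_0^i)$; the mean and covariance, hence the rest of the argument, are unaffected.)
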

	As in \eqref{pro:equivalence of SGD}, the Riemannian SVRG flow defined in \eqref{pro:riemannian svrg flow} also constitutes a deterministic curve in the Wasserstein space, which can be explained through similar  discussion following Proposition \ref{pro:equivalence of SGD}. Consequently, all three continuous flows in the Wasserstein space, \eqref{eq:lagevin FP}, \eqref{eq:sgd flow on riemannian}, and \eqref{eq:svrg flow on manifold} describe deterministic trajectories, irrespective of any randomness in their discrete counterparts.
	\par
	Moreover, similar to the discussion after Proposition \ref{pro:equivalence of SGD}, although the discrete Riemannian SVRG method presents implementation challenges, it can be effectively approximated through both the discrete SVRG Langevin dynamics \citep{zou2018subsampled} and the proposed Riemannian SVRG flow in \eqref{eq:svrg flow on manifold}. \textbf{Consequently, our formulation presents a valuable analytical framework for both SVRG Langevin dynamics and discrete Riemannian SVRG algorithms}. Further details are in Appendix \ref{app:convergence of riemannian svrg flow}. 
	
	\subsection{Convergence of Riemannian SVRG Flow}
	In this subsection, we analyze the convergence rate of the Riemannian SVRG flow defined in equation \eqref{eq:svrg flow on manifold}. The main result, stated in Theorem \ref{thm:convergence of svrg on manifold} below, adopts the same notations as those introduced in Proposition \ref{pro:riemannian svrg flow}. 
	
	\begin{restatable}{theorem}{riemanniansvrgconvergence}\label{thm:convergence of svrg on manifold}
		Let $\pi_{t}$ follows Riemannian SVRG flow \eqref{eq:svrg flow on manifold}, $\mu$ defined in Proposition \eqref{eq:stochastic optima}, for sequences $\{t_{i}\}$ with $\Delta = t_{1} - t_{0} = \cdots = t_{I} - t_{I - 1} = \cO(1 / \sqrt{\eta})$, and $\eta T = I\Delta$ to run Riemannian SVRG flow for $I$ epochs. Then for any $T$, if the $\nabla\log{\mu_{\xi}}$ is Lipschitz continuous with coefficient $L_{2}$ for all $\xi$, under proper $\pi_{0}$, we have  
		\begin{equation}\label{eq:riemannian svrg convergence nonconvex}
			\small
			\begin{aligned}
				\frac{1}{\eta T}\sum\limits_{i = 1}^{I}\int_{t_{i}}^{t_{i + 1}}\left\|\grad D_{KL}(\pi_{t}\parallel \mu)\right\|^{2}_{\pi_{t}}dt \leq \frac{2D_{KL}(\pi_{0}\parallel \mu)}{\eta T}.
			\end{aligned}
		\end{equation}
		By taking $\eta = \cO(N^{-2/3})$, the computational complexity of Riemannian SVRG flow is of order $\cO(N^{2/3}/\epsilon)$ to make $\min_{0\leq t \leq \eta T} \left\|\grad D_{KL}(\pi_{t}\parallel \mu)\right\|^{2}_{\pi_{t}} \leq \epsilon$.
		\par
		Furthermore, when the log-Sobolev inequality \eqref{eq:log Sobolev inequality} is satisfied for $\mu$, we have 
		\begin{equation}\label{eq:riemannian svrg convergence pl}
			\small
			D_{KL}(\pi_{\eta T}\parallel \mu) \leq e^{-\gamma \eta T}D_{KL}(\pi_{0}\parallel \mu),
		\end{equation} 
		and it takes $\cO((N + \gamma^{-1}N^{2/3})\log{\epsilon^{-1}})$ computational complexity to make $D_{KL}(\pi_{\eta T}\parallel \mu)\leq \epsilon$. 
	\end{restatable}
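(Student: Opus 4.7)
The plan is to compute $\frac{d}{dt} D_{KL}(\pi_t \parallel \mu)$ along the Riemannian SVRG flow \eqref{eq:svrg flow on manifold} and to show that, just as for the Riemannian gradient flow of Section \ref{sec:riemannian gradient flow}, the dominant contribution is the dissipation $-\|\grad D_{KL}(\pi_t \parallel \mu)\|^2_{\pi_t}$, while the two $\Sigma_{\rm SVRG}$ terms produce only a lower-order perturbation. Concretely, starting from $\frac{d}{dt} D_{KL}(\pi_t \parallel \mu) = \int \log(d\pi_t/d\mu)\, \partial_t \pi_t\, d\bx$ and substituting \eqref{eq:svrg flow on manifold}, integration by parts yields $-\|\grad D_{KL}(\pi_t \parallel \mu)\|^2_{\pi_t}$ plus a correction of the form $\tfrac{\eta}{2}\int \nabla\log(d\pi_t/d\mu)(\bx)\cdot \nabla_\bx[\pi_t(\bx)\,\mE[\Sigma_{\rm SVRG}(\hbx_{t_i},\hbx_t)\mid \hbx_t=\bx]]\, d\bx$ after collapsing the joint law via the identity $\pi_{t_i,t}\nabla_\bx\log\pi_{t_i,t} = \nabla_\bx \pi_{t_i,t}$.

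To bound the correction, exploit the control-variate structure of SVRG: because $\Sigma_{\rm SVRG}(\by,\bx)$ vanishes on the diagonal $\by=\bx$, Assumption \ref{ass:stochastic continuous} yields $\|\Sigma_{\rm SVRG}(\by,\bx)\|_{\rm op} \leq C L_2^2 \|\bx-\by\|^2$. After one more integration by parts, Cauchy--Schwarz gives a bound proportional to $\eta L_2 \|\grad D_{KL}(\pi_t \parallel \mu)\|_{\pi_t} \,(\mE\|\hbx_t - \hbx_{t_i}\|^2)^{1/2}$, and Young's inequality absorbs one factor into $\tfrac{1}{2}\|\grad D_{KL}(\pi_t \parallel \mu)\|^2_{\pi_t}$ at the price of a remainder of order $\eta^2 L_2^2 \mE\|\hbx_t-\hbx_{t_i}\|^2$. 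The second moment on $[t_i,t_{i+1}]$ is controlled by $O((t-t_i)^2) \leq O(\Delta^2) = O(1/\eta)$ via the SDE representation from Lemma \ref{lem:equivalence}, Assumption \ref{ass:stochastic continuous}, and a standard Gr\"onwall estimate; the scaling $\Delta = O(1/\sqrt{\eta})$ is precisely what keeps $\eta^2 \mE\|\hbx_t - \hbx_{t_i}\|^2 = O(\eta)$, so that the accumulated remainder over $[0,\eta T]$ remains of lower order than $D_{KL}(\pi_0 \parallel \mu)$ under the ``proper $\pi_0$'' hypothesis.

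Integrating the resulting differential inequality $\frac{d}{dt} D_{KL}(\pi_t \parallel \mu) \leq -\tfrac{1}{2}\|\grad D_{KL}(\pi_t \parallel \mu)\|^2_{\pi_t}$ over $[0,\eta T]$ and telescoping across epochs yields \eqref{eq:riemannian svrg convergence nonconvex}. Under the log-Sobolev inequality \eqref{eq:log Sobolev inequality} I substitute $\|\grad D_{KL}(\pi_t \parallel \mu)\|^2_{\pi_t} \geq 2\gamma D_{KL}(\pi_t\parallel\mu)$ into the same differential inequality to get $\frac{d}{dt} D_{KL}(\pi_t \parallel \mu) \leq -\gamma D_{KL}(\pi_t \parallel \mu)$ and conclude \eqref{eq:riemannian svrg convergence pl} by Gr\"onwall. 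For the complexities, each epoch uses $N$ gradient evaluations for the anchor and $M = \Delta/\eta = O(\eta^{-3/2})$ stochastic evaluations; balancing via $\eta = O(N^{-2/3})$ gives $M = O(N)$, hence per-epoch cost $O(N)$, and the number of epochs required to reach the $\epsilon$-stationary (resp.\ $\epsilon$-global) regime follows directly from \eqref{eq:riemannian svrg convergence nonconvex} (resp.\ \eqref{eq:riemannian svrg convergence pl}).

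The main obstacle is bookkeeping the joint-law correction. The $\Sigma_{\rm SVRG}$ terms in \eqref{eq:svrg flow on manifold} are expectations against the coupling $\pi_{t_i,t}(\by,\bx)$, not pointwise objects, so reducing them to estimates controllable by the marginal Riemannian gradient norm $\|\grad D_{KL}(\pi_t \parallel \mu)\|^2_{\pi_t}$ requires carefully combining the identity $\pi_{t_i,t}\nabla_\bx\log \pi_{t_i,t} = \nabla_\bx \pi_{t_i,t}$ with a second-moment bound on $\hbx_t - \hbx_{t_i}$ sharp enough to survive the prefactor $\eta$ and still leave room for Young's inequality to close the loop. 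Once this differential inequality is in hand, the non-convex and log-Sobolev bounds follow by routine integration, and the complexity counts are purely algebraic.
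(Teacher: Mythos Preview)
Your plan has the right spirit but misses the mechanism that actually closes the argument, and two of your key estimates are not correct.

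\textbf{The correction term is a Hessian, not a gradient.} After one integration by parts, the $\Sigma_{\rm SVRG}$ correction to $\tfrac{d}{dt}D_{KL}(\pi_t\parallel\mu)$ is exactly
\[
\frac{\eta}{2}\,\mE_{\pi_{t_i,t}}\Bigl[\tr\Bigl(\nabla^{2}\log\tfrac{d\pi_{t}}{d\mu}(\bx)\,\Sigma_{\rm SVRG}(\by,\bx)\Bigr)\Bigr],
\]
which involves the \emph{Hessian} of $\log(d\pi_t/d\mu)$, not the gradient. There is no Cauchy--Schwarz step that produces a factor of $\|\grad D_{KL}(\pi_t\parallel\mu)\|_{\pi_t}$ from this; a second integration by parts only moves derivatives onto $\Sigma_{\rm SVRG}$ and $\pi_{t_i,t}$, leaving you with $\nabla_{\bx}\log\pi_{t_i,t}$ and $\nabla_{\bx}\cdot\Sigma_{\rm SVRG}$, neither of which is controlled by Assumption~\ref{ass:stochastic continuous}. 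This uncontrolled Hessian is precisely what the paper's ``proper $\pi_0$'' hypothesis is for: it is the assumption $\mE_{\pi_{t_i,t}}[\tr(\nabla^{2}\log(d\pi_t/d\mu)\,\Sigma_{\rm SVRG})]\le \lambda_t\,\mE_{\pi_{t_i,t}}[\tr(\Sigma_{\rm SVRG})]$ with $\lambda_t$ polynomial. You invoke ``proper $\pi_0$'' only to absorb a leftover additive remainder, which is not how it is used.

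\textbf{The second-moment bound and the missing Lyapunov coupling.} Your claim $\mE\|\hbx_t-\hbx_{t_i}\|^2=O((t-t_i)^2)$ is wrong: the SDE in Proposition~\ref{pro:riemannian svrg flow} carries a $\sqrt{2}\,dW_t$ term, so the second moment grows linearly, $\mE\|\hbx_t-\hbx_{t_i}\|^2\sim(t-t_i)$. More importantly, even with the correct scaling the a priori bound is too crude to telescope cleanly to \eqref{eq:riemannian svrg convergence nonconvex}. The paper instead introduces the Lyapunov functional
\[
R_t(\pi_{t_i,t})=D_{KL}(\pi_t\parallel\mu)+\frac{c_t}{2}\int\|\bx-\by\|^2\,d\pi_{t_i,t},
\]
and tracks $\partial_t G(\pi_{t_i,t})=\partial_t\mE\|\bx_t-\bx_{t_i}\|^2$ along the flow; this produces a $\tfrac{1}{2\beta}\|\grad D_{KL}\|_{\pi_t}^2$ term that is \emph{coupled back} into the dissipation. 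Choosing $c_t$ to satisfy the ODE $(\beta+\eta L_2)c_t+\eta L_2\lambda_t+c_t'=0$ with $c_{t_0}=\sqrt{\eta}$, $c_{t_1}=0$, $\beta=\sqrt{\eta}$ kills the $G$-coefficient and simultaneously forces $\Delta=t_1-t_0=\cO(1/\sqrt{\eta})$. The per-epoch descent $R_{t_0}-R_{t_1}=D_{KL}(\pi_{t_0}\parallel\mu)-D_{KL}(\pi_{t_1}\parallel\mu)$ then telescopes exactly to \eqref{eq:riemannian svrg convergence nonconvex}, and the log-Sobolev case reruns the same ODE with an extra $2\gamma c_t$ term. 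Your direct approach has no analogue of this coupling and therefore cannot reach the stated bounds without an unjustified extra term.
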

	The proof of this theorem is deferred to Appendix \ref{app:convergence of riemannian svrg flow}. As shown, for non-convex problems, the computational complexity required to reach an $\epsilon$-stationary point of the Riemannian SVRG flow is $\cO(N^{2/3}\epsilon^{-1})$. This complexity can be lower than that of Riemannian GD flow, which is $\cO(N\epsilon^{-1})$, and Riemannian SGD flow, which is $\cO(\epsilon^{-2})$, when $\epsilon$ is sufficiently small (as clarified in Section \ref{sec:Constructing Riemannian SVRG Flow}).
	
	On the other hand, under the Riemannian PL-inequality (i.e., log-Sobolev inequality), the computational complexities required to achieve $D_{\text{KL}}(\pi_{\eta T}\parallel \mu) \leq \epsilon$ are $\cO(\gamma^{-1}N\log{\epsilon^{-1}})$ and $\cO(\gamma^{-1}\epsilon^{-1})$ for Riemannian GD and SGD flows. These can be improved by the Riemannian SVRG flow when $\gamma^{-1} \geq 1$ and $\cO(\epsilon\log{\epsilon^{-1}}) \leq \cO((\gamma N + N^{\frac{2}{3}})^{-1})$. 
	Besides that, it is worth noting that, no matter with/without the log-Sobolev inequality, the proved convergence rates match the results in Euclidean space \citep{orvieto2019continuous}, and computational complexities match the discrete SVRG in Euclidean space \citep{reddi2016stochastic}. 
	\par
	In Theorem \ref{thm:convergence of svrg on manifold}, the initial distribution $\pi_0$ is required to satisfy $\mE_{\pi_{t_{i}, t}}[\tr(\nabla^{2}\log{(d\pi_{t}/d\mu)}\Sigma_{\rm SVRG})] \leq \lambda_{t}\mE_{\pi_{t_{i}, t}}[\tr(\Sigma_{\rm SVRG})]$ where $\lambda_t$ is a polynomial function of $t$. Since no constraint is imposed on the order of $\lambda_t$, which may grow arbitrarily large, this assumption can be readily satisfied in practice. Further technical details are provided in Appendix \ref{app:convergence of riemannian svrg flow}.

	\par
	As in Section \ref{sec:convergence rate of riemannian sgd flow}, the convergence results in Theorem \ref{thm:convergence of svrg on manifold} do not require Lipschitz continuous $\log{\mu_{\xi}}$, so that Example \ref{example:gaussian} is suitable to be analyzed as follows. 
	\par
	As discussed in Section \ref{sec:Constructing Riemannian SVRG Flow}, the fundamental principle of SVRG lies in reducing the variance of stochastic gradient estimates at each update, thereby accelerating convergence. Interestingly, in Example \ref{example:gaussian}, the induced noise covariance $\Sigma_{\rm SVRG} = 0$, which causes the Riemannian SVRG flow \eqref{eq:svrg flow on manifold} degenerate into Riemannian GD flow \eqref{eq:lagevin FP}. This indicates that the variance reduction technique completely eliminates gradient variance in this case. Notably, since $\nabla \log{\mu_{\xi}}(\bx) = (\bx - \bxi)$ and $\nabla \log{\mu}(\bx) = (\bx - \bar{\bxi})$, the corresponding random vector $\bx_{n}^{i} \in \bbR^{d}$ in the discrete Riemannian SVRG (line 6 of Algorithm \ref{alg:discrete riemannian svrg}) satisfies. 
	\begin{equation*}
		\small
		\begin{aligned}
			\bx_{n + 1}^{i}	= \bx_{n}^{i} - \eta \left(\nabla\log{\frac{d\mu_{\xi_{n}^{i}}}{d\pi_{n}^{i}}}(\bx_{n}^{i}) - \nabla\log{\frac{d\mu_{\xi_{n}^{i}}}{d\pi_{0}^{i}}}(\bx_{0}^{i}) \!+\! \nabla\log{\frac{d\mu}{d\pi_{0}^{i}}}(\bx_{0}^{i})\right) = \bx_{n}^{i} + \eta\nabla\log{\frac{d\mu}{d\pi_{n}^{i}}(\bx_{n}^{i})}, 
		\end{aligned}
	\end{equation*}
	which is exactly the discretion of Riemannian GD flow, but with $\cO(1)$ (instead of $\cO(N)$ as we do not compute $\bar{\bxi}$ for each $n$) computational complexity for each update step in line 6 of Algorithm \ref{alg:discrete riemannian svrg}. This explains the improved computational complexity of Riemannian SVRG flow. Note that the corresponded SDE of \eqref{eq:svrg flow on manifold} is $	d\bx_{t} = -(\bx_{t} - \bar{\bxi})dt + \sqrt{2}dW_{t},$ 
	with closed-form solution $\bx_{\eta T} = \bar{\bxi} + e^{-\eta T}(\bx_{0} - \bar{\bxi}) + e^{-\eta T}W_{e^{2\eta T} - 1}$. Then, we can prove the convergence rates of $D_{KL}(\pi_{\eta T}\parallel \mu)$ ($\bx_{\eta T}\sim \pi_{\eta T}$) and its first order criteria are all of order $\cO(e^{-\eta T})$, where the global convergence rate matches the result in Theorem \ref{thm:convergence of svrg on manifold}. 
	\section{Experiments}
	\begin{figure*}[t!]\centering
		\begin{minipage}[t]{0.24\linewidth}
			\centering
			\includegraphics[width=1\linewidth]{./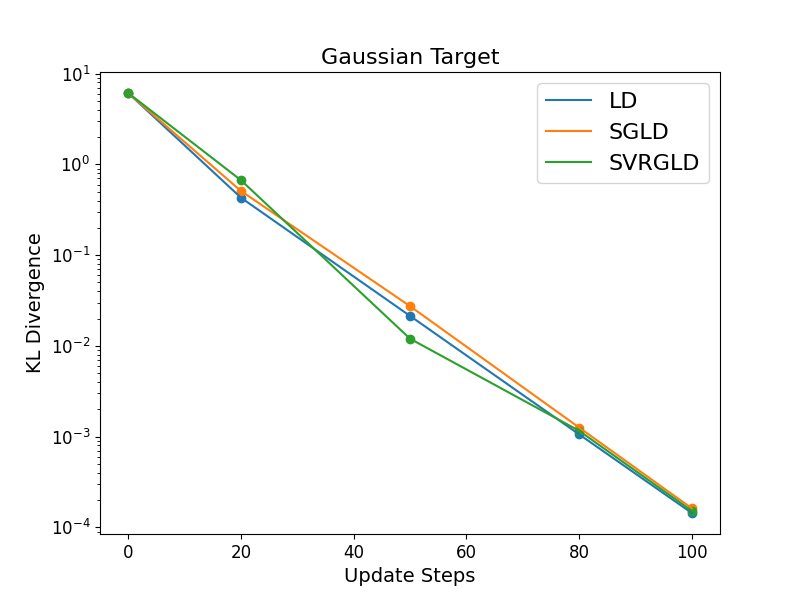}
			\vspace{0.02cm}
			\includegraphics[width=1\linewidth]{./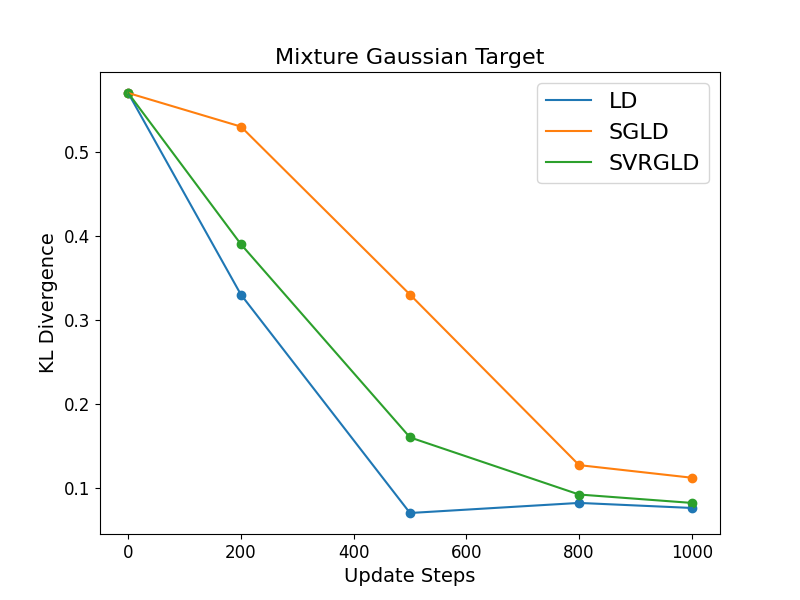}
			\vspace{0.02cm}
		\end{minipage}
		\begin{minipage}[t]{0.24\linewidth}
			\includegraphics[width=1\linewidth]{./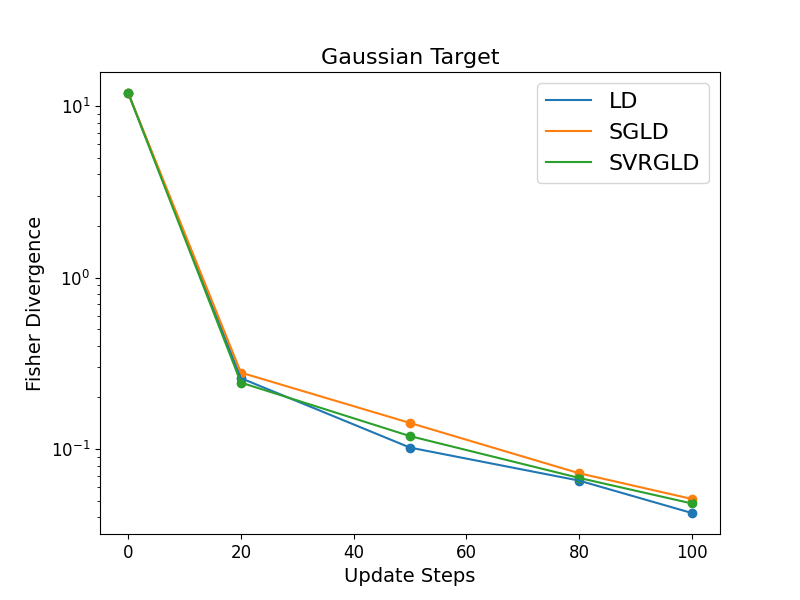}
			\vspace{0.02cm}
			\includegraphics[width=1\linewidth]{./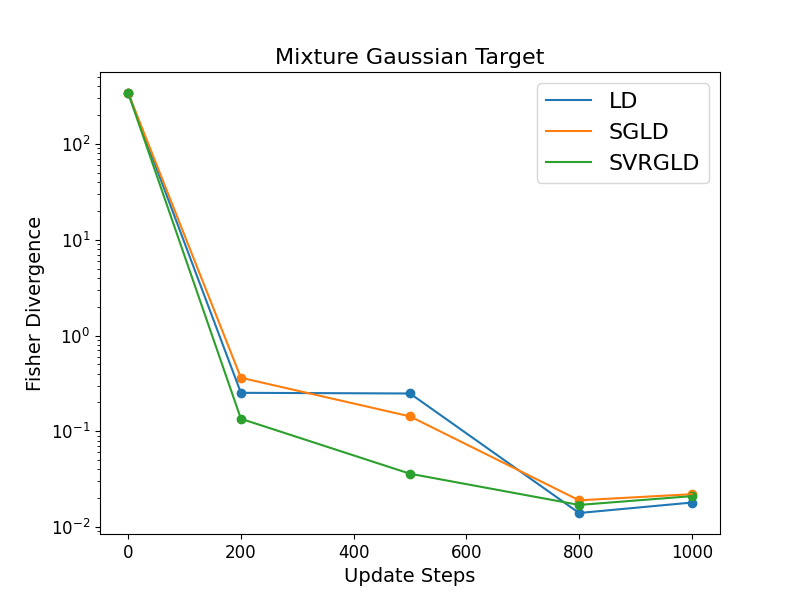}
			\vspace{0.02cm}
		\end{minipage}
		\begin{minipage}[t]{0.24\linewidth}
			\includegraphics[width=1\linewidth]{./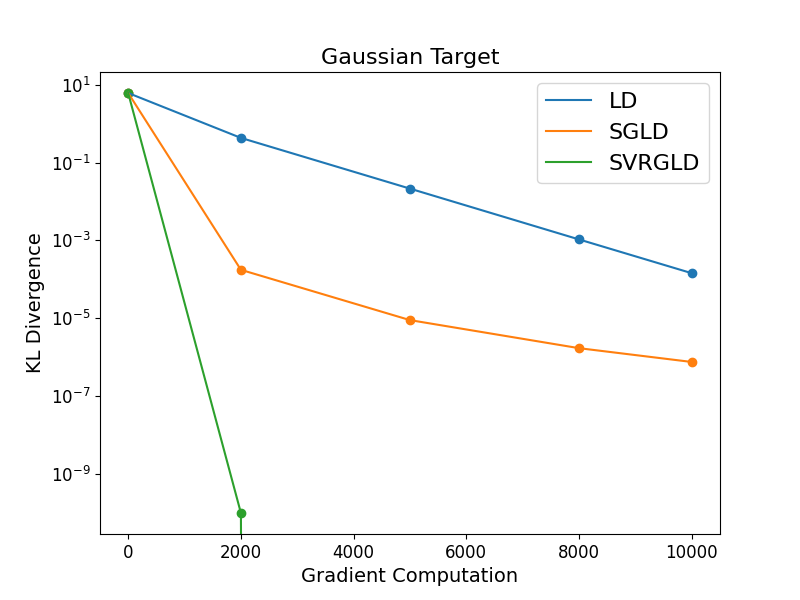}
			\vspace{0.02cm}
			\includegraphics[width=1\linewidth]{./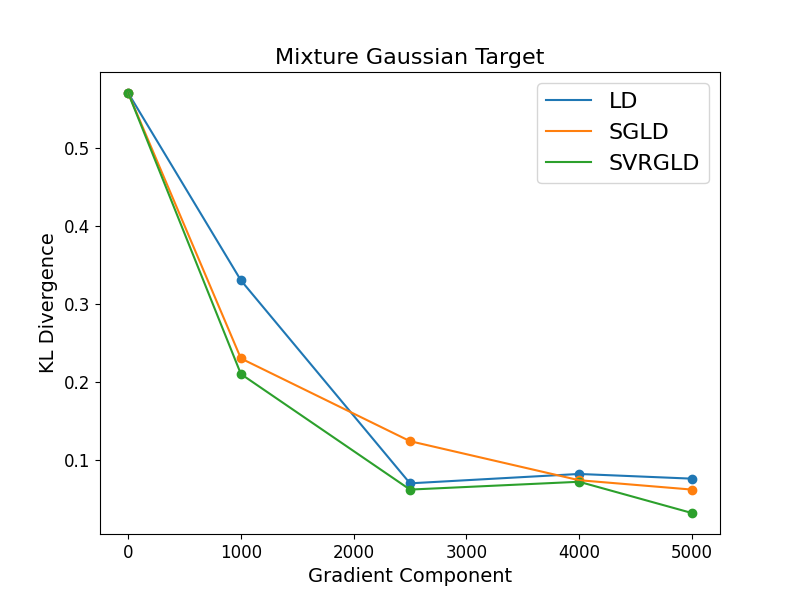}
			\vspace{0.02cm}
		\end{minipage}
		\begin{minipage}[t]{0.24\linewidth}
			\includegraphics[width=1\linewidth]{./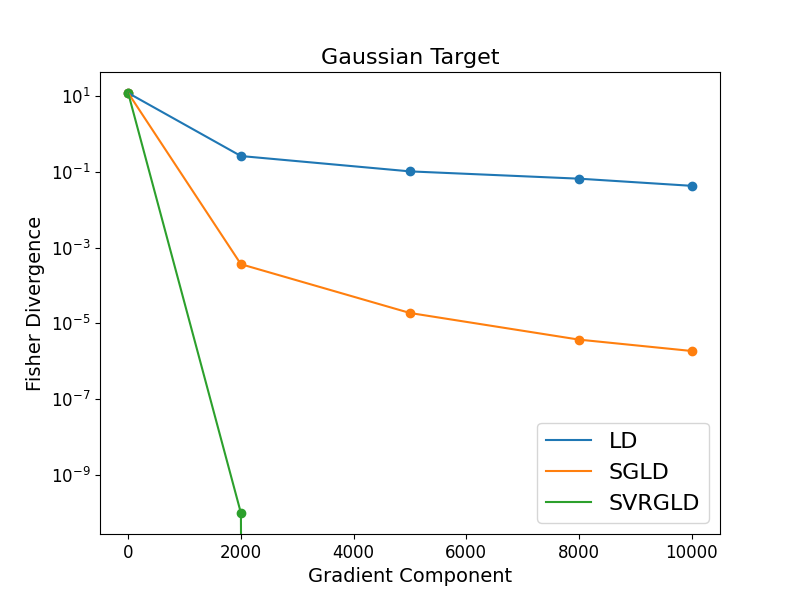}
			\vspace{0.02cm}
			\includegraphics[width=1\linewidth]{./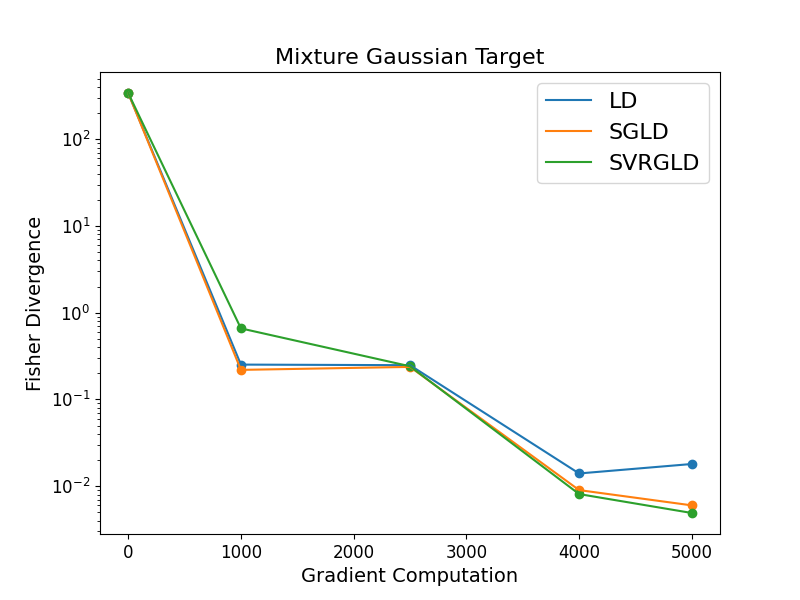}
			\vspace{0.02cm}
		\end{minipage}
		\caption{The convergence rates measured by KL divergence and Fisher divergence (Riemannian gradient norm) under different optimization methods.}
		\label{fig:convergence}
	\end{figure*}

	In this section, we empirically evaluate the proposed sampling algorithms on two examples. 
	\paragraph{Gaussian.} In this case, we set $N = 100$, $d=2$,  $\mu_{\bxi_{i}}\sim (\bxi_{i}, \bI)$  with each $\bxi_{i}\sim \cN(\boldmath{0}, \bI)$. Then, we get the target distribution $\mu\sim \cN(\bar{\bxi}, \bI)$. Within this framework, we report the KL divergence $D_{KL}(\pi_{\eta T}\parallel \mu)$ and the Fisher divergence  $\|\grad D_{KL}(\pi_{\eta T}\parallel \mu)\|^{2}$ (first order criteria), where the $\pi_{\eta T}$ is obtained by Riemannian GD flow, Riemannian SGD flow or SVRG flow. In this case, due to Example \ref{example:gaussian}, the two criteria can be explicitly estimated. The results are summarized in Figure \ref{fig:convergence}.
	
	\paragraph{Mixture Gaussian.} In this case, we set $N = 5$, $d = 2$ with $\mu_{\bxi_{i}}\sim \frac{1}{2}\cN(\bxi_{i,1}, \bI) + \frac{1}{2}\cN(\bxi_{i,2}, \bI)$, where $\bxi_{i,k}\sim \cN(0, \bI)$. Unfortunately, the proposed flows can not be explicitly computed. Therefore, we implement their discrete versions as in Algorithms \ref{alg:discrete riemannian sgd}, \ref{alg:discrete riemannian svrg}. Then we report the KL divergence $D_{KL}(\pi_{\eta T}\parallel \mu)$ and Fisher divergence $\|\grad D_{KL}(\pi_{\eta T}\parallel \mu)\|^{2}$ with $\pi_{\eta T}$ are approximated by the discrete Riemannian GD, SGD, and SVRG Algorithms. Here, the KL divergence and Riemannian gradient are estimated by density estimation as in \citep{}, with $1000$ independent samples. The results are summarized in Figure \ref{fig:convergence}. 
	
	As can be seen, under the same update steps, Riemannian SVRG and Riemannian GD have better convergence results than Riemannian SGD (Theorem \ref{thm:convergence riemannian gd}, \ref{thm:convergence riemannian sgd}, and \ref{thm:convergence of svrg on manifold}), while under the same gradient computations, Riemannian SGD and Riemannian SVRG (especially Riemannian SVRG) are sharper (see discussion after Theorem \ref{thm:convergence of svrg on manifold}). These results are consistent with theoretical conclusions.

	\section{Conclusion}
	Based on the principles of Riemannian manifold optimization, this paper investigates continuous Riemannian SGD and SVRG flows for minimizing KL divergence within the Wasserstein space. We establish convergence rates for these stochastic flows that align with known results in Euclidean settings. Our technical approach involves constructing SDEs in Euclidean space by taking the limit of vanishing step sizes in discrete Riemannian optimization methods, where the corresponding Fokker-Planck equations characterize the desired curves in Wasserstein space. This framework provides new theoretical insights into continuous stochastic Riemannian optimization and demonstrates the utility of continuous methods as analytical tools for studying discrete optimization algorithms. 
	\newpage
	
	\bibliographystyle{plain}
	\bibliography{reference}
	
	\section*{NeurIPS Paper Checklist}
	\begin{enumerate}
		
		\item {\bf Claims}
		\item[] Question: Do the main claims made in the abstract and introduction accurately reflect the paper's contributions and scope?
		\item[] Answer:  
		\item[] Justification: The main contributions of this paper have been clarified in Abstract. 
		\item[] Guidelines:
		\begin{itemize}
			\item The answer NA means that the abstract and introduction do not include the claims made in the paper.
			\item The abstract and/or introduction should clearly state the claims made, including the contributions made in the paper and important assumptions and limitations. A No or NA answer to this question will not be perceived well by the reviewers. 
			\item The claims made should match theoretical and experimental results, and reflect how much the results can be expected to generalize to other settings. 
			\item It is fine to include aspirational goals as motivation as long as it is clear that these goals are not attained by the paper. 
		\end{itemize}
		
		\item {\bf Limitations}
		\item[] Question: Does the paper discuss the limitations of the work performed by the authors?
		\item[] Answer: \answerYes{} 
		\item[] Justification: The limitation of this paper is discussed in the main paper.
		\item[] Guidelines:
		\begin{itemize}
			\item The answer NA means that the paper has no limitation while the answer No means that the paper has limitations, but those are not discussed in the paper. 
			\item The authors are encouraged to create a separate "Limitations" section in their paper.
			\item The paper should point out any strong assumptions and how robust the results are to violations of these assumptions (e.g., independence assumptions, noiseless settings, model well-specification, asymptotic approximations only holding locally). The authors should reflect on how these assumptions might be violated in practice and what the implications would be.
			\item The authors should reflect on the scope of the claims made, e.g., if the approach was only tested on a few datasets or with a few runs. In general, empirical results often depend on implicit assumptions, which should be articulated.
			\item The authors should reflect on the factors that influence the performance of the approach. For example, a facial recognition algorithm may perform poorly when image resolution is low or images are taken in low lighting. Or a speech-to-text system might not be used reliably to provide closed captions for online lectures because it fails to handle technical jargon.
			\item The authors should discuss the computational efficiency of the proposed algorithms and how they scale with dataset size.
			\item If applicable, the authors should discuss possible limitations of their approach to address problems of privacy and fairness.
			\item While the authors might fear that complete honesty about limitations might be used by reviewers as grounds for rejection, a worse outcome might be that reviewers discover limitations that aren't acknowledged in the paper. The authors should use their best judgment and recognize that individual actions in favor of transparency play an important role in developing norms that preserve the integrity of the community. Reviewers will be specifically instructed to not penalize honesty concerning limitations.
		\end{itemize}
		
		\item {\bf Theory Assumptions and Proofs}
		\item[] Question: For each theoretical result, does the paper provide the full set of assumptions and a complete (and correct) proof?
		\item[] Answer: \answerYes{} 
		\item[] Justification: All results are proofed. 
		\item[] Guidelines:
		\begin{itemize}
			\item The answer NA means that the paper does not include theoretical results. 
			\item All the theorems, formulas, and proofs in the paper should be numbered and cross-referenced.
			\item All assumptions should be clearly stated or referenced in the statement of any theorems.
			\item The proofs can either appear in the main paper or the supplemental material, but if they appear in the supplemental material, the authors are encouraged to provide a short proof sketch to provide intuition. 
			\item Inversely, any informal proof provided in the core of the paper should be complemented by formal proofs provided in appendix or supplemental material.
			\item Theorems and Lemmas that the proof relies upon should be properly referenced. 
		\end{itemize}
		
		\item {\bf Experimental Result Reproducibility}
		\item[] Question: Does the paper fully disclose all the information needed to reproduce the main experimental results of the paper to the extent that it affects the main claims and/or conclusions of the paper (regardless of whether the code and data are provided or not)?
		\item[] Answer: \answerNA{} 
		\item[] Justification: 
		\item[] Guidelines:
		\begin{itemize}
			\item The answer NA means that the paper does not include experiments.
			\item If the paper includes experiments, a No answer to this question will not be perceived well by the reviewers: Making the paper reproducible is important, regardless of whether the code and data are provided or not.
			\item If the contribution is a dataset and/or model, the authors should describe the steps taken to make their results reproducible or verifiable. 
			\item Depending on the contribution, reproducibility can be accomplished in various ways. For example, if the contribution is a novel architecture, describing the architecture fully might suffice, or if the contribution is a specific model and empirical evaluation, it may be necessary to either make it possible for others to replicate the model with the same dataset, or provide access to the model. In general. releasing code and data is often one good way to accomplish this, but reproducibility can also be provided via detailed instructions for how to replicate the results, access to a hosted model (e.g., in the case of a large language model), releasing of a model checkpoint, or other means that are appropriate to the research performed.
			\item While NeurIPS does not require releasing code, the conference does require all submissions to provide some reasonable avenue for reproducibility, which may depend on the nature of the contribution. For example
			\begin{enumerate}
				\item If the contribution is primarily a new algorithm, the paper should make it clear how to reproduce that algorithm.
				\item If the contribution is primarily a new model architecture, the paper should describe the architecture clearly and fully.
				\item If the contribution is a new model (e.g., a large language model), then there should either be a way to access this model for reproducing the results or a way to reproduce the model (e.g., with an open-source dataset or instructions for how to construct the dataset).
				\item We recognize that reproducibility may be tricky in some cases, in which case authors are welcome to describe the particular way they provide for reproducibility. In the case of closed-source models, it may be that access to the model is limited in some way (e.g., to registered users), but it should be possible for other researchers to have some path to reproducing or verifying the results.
			\end{enumerate}
		\end{itemize}

		\item {\bf Open access to data and code}
		\item[] Question: Does the paper provide open access to the data and code, with sufficient instructions to faithfully reproduce the main experimental results, as described in supplemental material?
		\item[] Answer: \answerNA{} 
		\item[] Justification: 
		\item[] Guidelines:
		\begin{itemize}
			\item The answer NA means that paper does not include experiments requiring code.
			\item Please see the NeurIPS code and data submission guidelines (\url{https://nips.cc/public/guides/CodeSubmissionPolicy}) for more details.
			\item While we encourage the release of code and data, we understand that this might not be possible, so “No” is an acceptable answer. Papers cannot be rejected simply for not including code, unless this is central to the contribution (e.g., for a new open-source benchmark).
			\item The instructions should contain the exact command and environment needed to run to reproduce the results. See the NeurIPS code and data submission guidelines (\url{https://nips.cc/public/guides/CodeSubmissionPolicy}) for more details.
			\item The authors should provide instructions on data access and preparation, including how to access the raw data, preprocessed data, intermediate data, and generated data, etc.
			\item The authors should provide scripts to reproduce all experimental results for the new proposed method and baselines. If only a subset of experiments are reproducible, they should state which ones are omitted from the script and why.
			\item At submission time, to preserve anonymity, the authors should release anonymized versions (if applicable).
			\item Providing as much information as possible in supplemental material (appended to the paper) is recommended, but including URLs to data and code is permitted.
		\end{itemize}

		\item {\bf Experimental Setting/Details}
		\item[] Question: Does the paper specify all the training and test details (e.g., data splits, hyperparameters, how they were chosen, type of optimizer, etc.) necessary to understand the results?
		\item[] Answer: \answerNA{} 
		\item[] Justification: 
		\item[] Guidelines:
		\begin{itemize}
			\item The answer NA means that the paper does not include experiments.
			\item The experimental setting should be presented in the core of the paper to a level of detail that is necessary to appreciate the results and make sense of them.
			\item The full details can be provided either with the code, in appendix, or as supplemental material.
		\end{itemize}
		
		\item {\bf Experiment Statistical Significance}
		\item[] Question: Does the paper report error bars suitably and correctly defined or other appropriate information about the statistical significance of the experiments?
		\item[] Answer: \answerNA{} 
		\item[] Justification:  
		\item[] Guidelines:
		\begin{itemize}
			\item The answer NA means that the paper does not include experiments.
			\item The authors should answer "Yes" if the results are accompanied by error bars, confidence intervals, or statistical significance tests, at least for the experiments that support the main claims of the paper.
			\item The factors of variability that the error bars are capturing should be clearly stated (for example, train/test split, initialization, random drawing of some parameter, or overall run with given experimental conditions).
			\item The method for calculating the error bars should be explained (closed form formula, call to a library function, bootstrap, etc.)
			\item The assumptions made should be given (e.g., Normally distributed errors).
			\item It should be clear whether the error bar is the standard deviation or the standard error of the mean.
			\item It is OK to report 1-sigma error bars, but one should state it. The authors should preferably report a 2-sigma error bar than state that they have a 96\% CI, if the hypothesis of Normality of errors is not verified.
			\item For asymmetric distributions, the authors should be careful not to show in tables or figures symmetric error bars that would yield results that are out of range (e.g. negative error rates).
			\item If error bars are reported in tables or plots, The authors should explain in the text how they were calculated and reference the corresponding figures or tables in the text.
		\end{itemize}
		
		\item {\bf Experiments Compute Resources}
		\item[] Question: For each experiment, does the paper provide sufficient information on the computer resources (type of compute workers, memory, time of execution) needed to reproduce the experiments?
		\item[] Answer: \answerNA{} 
		\item[] Justification: 
		\item[] Guidelines:
		\begin{itemize}
			\item The answer NA means that the paper does not include experiments.
			\item The paper should indicate the type of compute workers CPU or GPU, internal cluster, or cloud provider, including relevant memory and storage.
			\item The paper should provide the amount of compute required for each of the individual experimental runs as well as estimate the total compute. 
			\item The paper should disclose whether the full research project required more compute than the experiments reported in the paper (e.g., preliminary or failed experiments that didn't make it into the paper). 
		\end{itemize}
		
		\item {\bf Code Of Ethics}
		\item[] Question: Does the research conducted in the paper conform, in every respect, with the NeurIPS Code of Ethics \url{https://neurips.cc/public/EthicsGuidelines}?
		\item[] Answer: \answerYes{} 
		\item[] Justification:
		\item[] Guidelines:
		\begin{itemize}
			\item The answer NA means that the authors have not reviewed the NeurIPS Code of Ethics.
			\item If the authors answer No, they should explain the special circumstances that require a deviation from the Code of Ethics.
			\item The authors should make sure to preserve anonymity (e.g., if there is a special consideration due to laws or regulations in their jurisdiction).
		\end{itemize}

		\item {\bf Broader Impacts}
		\item[] Question: Does the paper discuss both potential positive societal impacts and negative societal impacts of the work performed?
		\item[] Answer: \answerNA{} 
		\item[] Justification:
		\item[] Guidelines:
		\begin{itemize}
			\item The answer NA means that there is no societal impact of the work performed.
			\item If the authors answer NA or No, they should explain why their work has no societal impact or why the paper does not address societal impact.
			\item Examples of negative societal impacts include potential malicious or unintended uses (e.g., disinformation, generating fake profiles, surveillance), fairness considerations (e.g., deployment of technologies that could make decisions that unfairly impact specific groups), privacy considerations, and security considerations.
			\item The conference expects that many papers will be foundational research and not tied to particular applications, let alone deployments. However, if there is a direct path to any negative applications, the authors should point it out. For example, it is legitimate to point out that an improvement in the quality of generative models could be used to generate deepfakes for disinformation. On the other hand, it is not needed to point out that a generic algorithm for optimizing neural networks could enable people to train models that generate Deepfakes faster.
			\item The authors should consider possible harms that could arise when the technology is being used as intended and functioning correctly, harms that could arise when the technology is being used as intended but gives incorrect results, and harms following from (intentional or unintentional) misuse of the technology.
			\item If there are negative societal impacts, the authors could also discuss possible mitigation strategies (e.g., gated release of models, providing defenses in addition to attacks, mechanisms for monitoring misuse, mechanisms to monitor how a system learns from feedback over time, improving the efficiency and accessibility of ML).
		\end{itemize}
		
		\item {\bf Safeguards}
		\item[] Question: Does the paper describe safeguards that have been put in place for responsible release of data or models that have a high risk for misuse (e.g., pretrained language models, image generators, or scraped datasets)?
		\item[] Answer: \answerNA{} 
		\item[] Justification: 
		\item[] Guidelines:
		\begin{itemize}
			\item The answer NA means that the paper poses no such risks.
			\item Released models that have a high risk for misuse or dual-use should be released with necessary safeguards to allow for controlled use of the model, for example by requiring that users adhere to usage guidelines or restrictions to access the model or implementing safety filters. 
			\item Datasets that have been scraped from the Internet could pose safety risks. The authors should describe how they avoided releasing unsafe images.
			\item We recognize that providing effective safeguards is challenging, and many papers do not require this, but we encourage authors to take this into account and make a best faith effort.
		\end{itemize}
		
		\item {\bf Licenses for existing assets}
		\item[] Question: Are the creators or original owners of assets (e.g., code, data, models), used in the paper, properly credited and are the license and terms of use explicitly mentioned and properly respected?
		\item[] Answer: \answerNA{} 
		\item[] Justification: 
		\item[] Guidelines:
		\begin{itemize}
			\item The answer NA means that the paper does not use existing assets.
			\item The authors should cite the original paper that produced the code package or dataset.
			\item The authors should state which version of the asset is used and, if possible, include a URL.
			\item The name of the license (e.g., CC-BY 4.0) should be included for each asset.
			\item For scraped data from a particular source (e.g., website), the copyright and terms of service of that source should be provided.
			\item If assets are released, the license, copyright information, and terms of use in the package should be provided. For popular datasets, \url{paperswithcode.com/datasets} has curated licenses for some datasets. Their licensing guide can help determine the license of a dataset.
			\item For existing datasets that are re-packaged, both the original license and the license of the derived asset (if it has changed) should be provided.
			\item If this information is not available online, the authors are encouraged to reach out to the asset's creators.
		\end{itemize}
		
		\item {\bf New Assets}
		\item[] Question: Are new assets introduced in the paper well documented and is the documentation provided alongside the assets?
		\item[] Answer: \answerNA{} 
		\item[] Justification: 
		\item[] Guidelines:
		\begin{itemize}
			\item The answer NA means that the paper does not release new assets.
			\item Researchers should communicate the details of the dataset/code/model as part of their submissions via structured templates. This includes details about training, license, limitations, etc. 
			\item The paper should discuss whether and how consent was obtained from people whose asset is used.
			\item At submission time, remember to anonymize your assets (if applicable). You can either create an anonymized URL or include an anonymized zip file.
		\end{itemize}
		
		\item {\bf Crowdsourcing and Research with Human Subjects}
		\item[] Question: For crowdsourcing experiments and research with human subjects, does the paper include the full text of instructions given to participants and screenshots, if applicable, as well as details about compensation (if any)? 
		\item[] Answer: \answerNA{} 
		\item[] Justification:
		\item[] Guidelines:
		\begin{itemize}
			\item The answer NA means that the paper does not involve crowdsourcing nor research with human subjects.
			\item Including this information in the supplemental material is fine, but if the main contribution of the paper involves human subjects, then as much detail as possible should be included in the main paper. 
			\item According to the NeurIPS Code of Ethics, workers involved in data collection, curation, or other labor should be paid at least the minimum wage in the country of the data collector. 
		\end{itemize}
		
		\item {\bf Institutional Review Board (IRB) Approvals or Equivalent for Research with Human Subjects}
		\item[] Question: Does the paper describe potential risks incurred by study participants, whether such risks were disclosed to the subjects, and whether Institutional Review Board (IRB) approvals (or an equivalent approval/review based on the requirements of your country or institution) were obtained?
		\item[] Answer: \answerNA{} 
		\item[] Justification: 
		\item[] Guidelines:
		\begin{itemize}
			\item The answer NA means that the paper does not involve crowdsourcing nor research with human subjects.
			\item Depending on the country in which research is conducted, IRB approval (or equivalent) may be required for any human subjects research. If you obtained IRB approval, you should clearly state this in the paper. 
			\item We recognize that the procedures for this may vary significantly between institutions and locations, and we expect authors to adhere to the NeurIPS Code of Ethics and the guidelines for their institution. 
			\item For initial submissions, do not include any information that would break anonymity (if applicable), such as the institution conducting the review.
		\end{itemize}
		
	\end{enumerate}
	\newpage
	
	\appendix
	\onecolumn
\section{Proofs in Section \ref{sec:riemannian gradient flow}}\label{app:proofs in riemannian gradient flow}
\riemanniangradient*
\begin{proof}
	For any curve $\pi_{t}\in\cP$ with $\pi_{0} = \pi$, we have,  
	\begin{equation}\label{eq:riemannian gradient inner product}
		\small
		\begin{aligned}
			\lim_{t\to 0} \frac{D_{KL}(\pi_{t}\parallel \mu) - D_{KL}(\pi_{0}\parallel \mu)}{t} & = \mathrm{Dev}D_{KL}(\pi_{0} \parallel \mu)[\bv_{0}] \\
			& = \frac{\partial}{\partial{t}}D_{KL}(\pi_{0} \parallel \mu)\mid_{t = 0} \\
			& = \int (1 + \log{\pi_{t}})\frac{\partial}{\partial{t}}\pi_{t}d\bx\mid_{t = 0} + \int \frac{\partial}{\partial{t}}\pi_{t}\log{\mu}d\bx\mid_{t = 0}\\
			& = \int \langle\nabla\log{\pi} - \nabla\log{\mu}, \bv_{0}\rangle d\pi \\
			& = \left\langle\bv_{0}, \nabla\log{\frac{d\pi}{d\mu}}\right\rangle_{\pi}.
		\end{aligned}
	\end{equation}
	Thus we prove our conclusion due to the definition of Riemannian gradient. 
\end{proof}

\riemanniangradientflow*
\begin{proof}
	For any $f$, due to the definition of $\pi_{n + 1}$ in \eqref{eq:exp map in manifold}, we have 
	\begin{equation}
		\small
		\mE_{\pi_{n + 1}}[f(\bx)] = \mE_{\pi_{n}}[f(\bx - \eta\grad F(\pi_{n}(\bx)))],
	\end{equation}
	so that for $\bx_{n}\sim \pi_{n}$ and $\bx_{n + 1}\sim \pi_{n + 1}$, we must have 
	\begin{equation}\label{eq:discrete ode x}
		\small
		\bx_{n + 1} = \bx_{n} - \eta\grad F(\pi_{n}(\bx_{n})) = \bx_{n} - \eta \nabla\log{\frac{d\pi_{n}}{d\mu}}(\bx_{n}). 
	\end{equation}
	On the other hand, let us define 
	\begin{equation}
		\small
		\bar{\bx}_{n + 1} = \bar{\bx}_{n} + \eta\nabla_{\bx}\log{\mu(\bar{\bx}_{n})} + \sqrt{2\eta}\beps_{n},
	\end{equation}
	where $\beps_{n}\sim \cN(0, \bI)$, and $\bar{\bx}_{0} = \bx_{0}$. Next, let us show $\bx_{n}$ approximates $\bar{\bx}_{n}$. For any test function $f\in C^{2}$ with (spectral norm) bounded Hessian, we have 
	\begin{equation}\label{eq:hax x_n approximation}
		\small
		\begin{aligned}
			\mE\left[f(\bx_{n + 1})\right] & = \mE[f(\bx_{n})] + \eta\mE\left[\left\langle\nabla f(\bx_{n}), \nabla_{\bx}\log{\mu}(\bx_{n}) - \nabla_{\bx}\log{\pi_{n}(\bx_{n})}\right\rangle\right] + \cO(\eta^{2}) \\
			& = \mE[f(\bx_{n})] + \eta\mE\left[\left\langle\nabla f(\bx_{n}), \nabla_{\bx}\log{\mu}(\bx_{n})\right\rangle\right] + \eta\mE[\Delta f(\bx_{n})] + \cO(\eta^{2}) \\
			& = \mE[f(\bx_{n})] + \eta\mE\left[\left\langle\nabla f(\bx_{n}), \nabla_{\bx}\log{\mu}(\bx_{n})\right\rangle\right] + \eta\mE\left[\beps_{n}^{\top}\nabla^{2}f(\bx_{n})\beps_{n}\right] + \cO(\eta^{2}) \\
			& = \mE\left[f\left(\bx_{n} + \eta\nabla_{\bx}\log{\mu(\bx_{n})} + \sqrt{2\eta}\beps_{n}\right)\right] + \cO(\eta^{2}). 
		\end{aligned}
	\end{equation}
	due to the definition of $f$ and $D_{KL}(\pi_{n}\parallel \mu) < \infty$. Then, let us define the following 
	\begin{equation}\label{eq:f_delta_C}
		\small
			f_{\delta, C}(\bx) = \begin{dcases}
				\|\bx\|^{2} &\qquad \|\bx\|^{2} \leq C - \delta; \\
				u_{\delta}(\bx) &\qquad C - \delta \leq \|\bx\|^{2} \leq C; \\
				C & \qquad C > \|\bx\|^{2}, 
			\end{dcases}
	\end{equation}
	where $\delta > 0$, $C > 0$ and $u_{\delta}(\bx)$ is a quadratic function of $\bx$ to make the above $f_{\delta, C}$ smooth. Then 
	\begin{equation}\label{eq:recur gd riemannian}
		\small
		\begin{aligned}
			\mE&\left[f_{\delta, C}(\bx_{n + 1} - \barx_{n + 1})\right]  = \mE\left[f_{\delta, C}\left(\bx_{n} + \eta\nabla_{\bx}\log{\mu(\bx_{n})} + \sqrt{2\eta}\beps_{n} - \barx_{n + 1}\right)\right] + \cO(\eta^{2})\\
			& =  \mE\left[f_{\delta, C}\left(\bx_{n} + \eta\nabla_{\bx}\log{\mu(\bx_{n})} - \barx_{n} - \eta\nabla_{\bx}\log{\mu(\barx_{n})}\right)\right] + \cO(\eta^{2}) \\
			& \leq (1 + \eta)\mE\left[f_{\delta, C}\left(\bx_{n} - \bar{\bx}_{n}\right)\right] + \left(1 + \frac{1}{\eta}\right)\eta^{2}\mE\left[f_{\delta, C}\left(\nabla_{\bx}\log{\mu(\bx_{n})} - \nabla_{\bx}\log{\mu(\bar{\bx}_{n})}\right)\right] + \cO(\eta^{2}) \\
			& \leq (1 + \cO(\eta)) \mE\left[f_{\delta, C}\left(\bx_{n} - \bar{\bx}_{n}\right)\right] + \cO(\eta^{2}),
		\end{aligned}
	\end{equation}
	where the last two inequalities are respectively from Young's inequality $\|\ba + \bb\|^{2} \leq (1 + \eta)\|\ba\|^{2} + (1 + 1 / \eta)\|\bb\|^{2}$ for any $\eta > 0$, the Lipschitz continuity of $\nabla_{\bx}\log{\mu}$, and the definition of $f_{\delta, C}$. Then, by recursively using the above inequality, we have 
	\begin{equation}
		\small
		\mE[f_{\delta, C}(\bx_{n} - \barx_{n})] \leq \cO(\eta)
	\end{equation}
	when $n \leq \cO(1 / \eta)$. By taking $\delta \to 0$, $C\to \infty$, and applying Fatou's Lemma \citep{shiryaev2016probability}, we get 
	\begin{equation}\label{eq:Fatou lemma}
		\small
			\mE\left[\left\|\bx_{n} - \barx_{n}\right\|^{2}\right] = \mE\left[\varlimsup_{\delta\to0, C\to\infty}f_{\delta, C}(\bx_{n} - \barx_{n})\right] \leq \varlimsup_{\delta\to0, C\to\infty}\mE\left[f_{\delta, C}(\bx_{n} - \barx_{n})\right] \leq \cO(\eta). 
	\end{equation}
	\par
	Next, we should show the $\bar{\bx}_{n}$ (so that $\bx_{n}$) is a discretion of the following SDE (continuous Langevin dynamics). Let us define 
	\begin{equation}\label{eq:gradient flow ode}
		\small
		d\hat{\bx}_{t} = \nabla\log{\mu}(\hat{\bx}_{t})dt + \sqrt{2}dW_{t}. 
	\end{equation}
	For any given $\barx_{0} = \hat{\bx}_{0}$, similar to \eqref{eq:sgd discrete error}, we can prove 
	\begin{equation}
		\small
		\begin{aligned}
			\mE&\left[\left\|\hat{\bx}_{(n + 1)\eta} - \barx_{n + 1}\right\|^{2}\right]  \leq \mE\left[\left\|\hat{\bx}_{n\eta} + \eta\nabla\log{\mu(\hat{\bx}_{n\eta})} - \barx_{n + 1}\right\|^{2}\right] + \cO(\eta^{2}) \\
			& \leq (1 + \eta)\mE\left[\left\|\hat{\bx}_{n\eta} - \barx_{n}\right\|^{2}\right] + \left(1 + \frac{1}{\eta}\right)\mE\left[\left\|\nabla\log{\mu(\hat{\bx}_{n\eta})} - \nabla\log{\mu(\barx_{n})}\right\|^{2}\right] + \cO(\eta^{2}) \\
			& \leq (1 + \eta)\mE\left[\left\|\hat{\bx}_{n\eta} - \barx_{n}\right\|^{2}\right] + L_{2}^{2}\eta^{2}\left(1 + \frac{1}{\eta}\right)\mE\left[\left\|\hat{\bx}_{n\eta}- \barx_{n}\right\|^{2}\right] + \cO(\eta^{2}) \\
			& = (1 + \cO(\eta))\mE\left[\left\|\hat{\bx}_{n\eta}- \barx_{n}\right\|^{2}\right] + \cO(\eta^{2}). 
		\end{aligned}
	\end{equation}
	By iteratively applying this inequality, we get $\mE[\|\hat{\bx}_{n\eta} - \bx_{n}\|^{2}] \leq \cO(\eta)$ for any $n\le \cO(1 / \eta)$. Thus, by the  triangle inequality
	\begin{equation}
		\small
		\mE\left[\|\hbx_{n\eta} - \bx_{n}\|^{2}\right] \leq 2\mE\left[\|\hbx_{n\eta} - \barx_{n}\|^{2}\right] + 2\mE\left[\|\barx_{n} - \bx_{n}\|^{2}\right] \leq \cO(\eta). 
	\end{equation}
	Thus, we prove our conclusion by applying Lemma \ref{lem:equivalence} to \eqref{eq:gradient flow ode}. 
\end{proof}
\begin{remark}
	It is worthy to note that during our proof, we introduce the auxiliary sequence $\{\barx_{n}\}$ which is the discrete Langevin dynamics. We construct its continuous counterpart instead of $\bx_{n}$, because the drift term of it is $\nabla_{\bx}\log{\mu}$ which has verifiable continuity. However, if we directly analyze $\bx_{n}$ with drift term $\nabla_{\bx}\log{\mu/\pi_{n}}$, its continuity is non-verifiable. 
\end{remark}

\convergencergd*
\begin{proof}
	Similar to \eqref{eq:riemannian gradient inner product}, we have 
	\begin{equation}
		\small
		\frac{\partial}{\partial{t}}D_{KL}(\pi_{t} \parallel \mu) = \mathrm{Dev}D_{KL}(\pi_{t} \parallel \mu)[-\grad D_{KL}(\pi_{t} \parallel \mu)] = -\left\|\grad D_{KL}(\pi_{t} \parallel \mu)\right\|^{2}_{\pi_{t}}.
	\end{equation}
	Thus we know the Riemannian gradient flow resulted $\pi_{t}$ is monotonically decreased w.r.t. $t$ 
	Taking integral and from the non-negativity of KL divergence implies the conclusion. 
	\par
	On the other hand, if the Riemannian PL inequality \eqref{eq:riemannian pl} holds with coefficient $\gamma$, then the above equality further implies 
	\begin{equation}
		\small
		\frac{\partial}{\partial{t}}D_{KL}(\pi_{t} \parallel \mu) \leq -2\gamma D_{KL}(\pi_{t} \parallel \mu). 
	\end{equation}
	So that taking integral implies the second conclusion. 
\end{proof}

\section{Proofs in Section \ref{sec:riemannian sgd flow}}\label{app:proofs in section riemannian sgd flow}
\stochasticoptima*
\begin{proof}
	The result is directly proved by Langrange's multiplier theorem. Due to the definition of KL divergence and \eqref{eq:stochastic objective}, the optimal $\pi$ should satisfies 
	\begin{equation}
		\small
			\frac{\partial}{\partial{\pi}}\cL(\pi) = \frac{\partial}{\partial{\pi}}\left\{\mE_{\xi}\left[D_{KL}(\pi \parallel \mu_{\xi})\right] - \lambda\left(\int \pi(\bx)d\bx - 1\right)\right\} = 0,
	\end{equation}
	which results in 
	\begin{equation}
		\small
		\int p(\xi)\int (1 + \log\pi(\bx)) - \log{\mu_{\xi}(\bx)} - \lambda d\bx d\xi = 0
	\end{equation}
	for some $\lambda > 0$, where $p(\xi)$ is the density of $\xi$. Change the order of integral, we know that for any $\bx$,  
	\begin{equation}
		\small
		\log{\pi(\bx)} + (1 - \lambda) = \int p(\xi)\log{\mu_{\xi}}(\bx)d\xi, 
	\end{equation}
	which indicates our conclusion under the condition of $\int \pi(\bx) d\bx = 1$. 
\end{proof}

\subsection{Proofs of Proposition \ref{pro:equivalence of SGD}}
\riemanniansgdflow*
\begin{proof}
	Similar to the proof of Proposition \ref{pro:riemannian gradient}, we can show that for  
	\begin{equation}\label{eq:barx discrete sgd}
		\small
		\barx_{n + 1} = \barx_{n} + \eta \nabla\log\mu_{\xi_{n}}(\barx_{n}) + \sqrt{2\eta}\beps_{n}, 
	\end{equation}
	with $\beps_{n}\sim\cN(0, \bI)$, we have $\bx_{n}$ approximates $\barx_{n}$. That says, similar to \eqref{eq:hax x_n approximation}, when $\bx_{0} = \bar{\bx}_{0}$, we can prove
	\begin{equation}\label{eq:approxmiation barx sgd}
		\small
		\mE\left[\|\bx_{n} - \barx_{n}\|^{2}\right] \leq \cO(\eta^{2})
	\end{equation}
	Next, we show $\barx_{n}$ in \eqref{eq:barx discrete sgd} is the discretion of stochastic differential equation 
	\begin{equation}\label{eq:continuous sde for sgd}
		\small
		\begin{aligned}
			d\hbx_{t} &= \nabla\mE_{\xi}\left[\log\mu_{\xi}(\hbx_{t})\right] + \left(\sqrt{\eta}\Sigma^{\frac{1}{2}}_{\rm SGD}(\hbx_{t}), \sqrt{2}\bI\right)dW_{t} \\
			& = \bb(\hbx_{t}) + \left(\sqrt{\eta}\Sigma^{\frac{1}{2}}_{\rm SGD}(\hbx_{t}), \sqrt{2}\bI\right)dW_{t},
		\end{aligned}
	\end{equation}
	where $\Sigma_{\rm SGD}(\hbx_{t})$ is the covariance matrix 
	\begin{equation}
		\small
		\begin{aligned}
			\Sigma_{\rm SGD}(\hbx_{t}) 
			& = \mE_{\xi}\left[\left(\nabla\log{\mu_{\xi}}(\hbx_{t}) - \nabla\mE_{\xi}\left[\log{\mu_{\xi}}(\hbx_{t})\right]\right)\left(\nabla\log{\mu_{\xi}}(\hbx_{t}) - \nabla\mE_{\xi}\left[\log{\mu_{\xi}}(\hbx_{t})\right]\right)^{\top}\right].
		\end{aligned}
	\end{equation}
	To check this, for any test function $f\in C^{2}$ with bounded gradient and Hessian, due to $\barx_{0} = \hat{\bx}_{0}$, Dynkin's formula \citep{oksendal2013stochastic}, we have 
	\begin{equation}\label{eq:continuous expansion on sde}
		\small
		\begin{aligned}
			\mE\left[f(\hbx_{(n + 1)\eta})\right] & = \mE\left[f(\hbx_{n\eta})\right] + \int_{n\eta}^{(n + 1)\eta}\mE\left[\cL f(\hbx_{t})\right]dt\\
			& = \mE\left[f(\hbx_{n\eta})\right] + \eta\mE\left[\cL f(\hbx_{n\eta})\right] + \frac{1}{2}\int_{n\eta}^{(n + 1)\eta}\int_{n\eta}^{t}\mE\left[\cL^{2}f(\hbx_{s})\right]dsdt \\
			& = \mE\left[f(\hbx_{n\eta})\right] + \eta\mE\left[\cL f(\hbx_{n\eta})\right] + \cO(\eta^{2}), 
		\end{aligned}
	\end{equation}
	where the last equality is due to the Lipschitz continuity of $\nabla \log{\mu_{\xi}}(\bx)$ indicates $\nabla^{2}\log{\mu_{\xi}(\bx)}$ have an upper bounded spectral norm, and $\nabla \log{\mu_{\xi}}(\bx)$ itself is upper bounded. Noting that 
	\begin{equation}
		\small
		\mE[\cL f(\hbx_{n\eta})] = \mE_{\hbx_{n\eta}}\left[\left\langle \bb(\hbx_{n\eta}), \nabla f(\hbx_{n\eta})\right\rangle\right] + \frac{\eta}{2}\mE\left[\tr\left(\Sigma_{\rm SGD}(\hbx_{n\eta})\nabla^{2}f(\hbx_{n\eta})\right)\right] + \mE\left[\Delta f(\hbx_{n\eta})\right].
	\end{equation}
	Plugging these into \eqref{eq:continuous expansion on sde}, we get 
	\begin{equation}
		\small
		\begin{aligned}
			\mE\left[f(\hbx_{(n + 1)\eta})\right] & = \mE\left[f(\hbx_{n\eta})\right] + \mE_{\hbx_{n\eta}}\left[\left\langle \bb(\hbx_{n\eta}), \nabla f(\hbx_{n\eta})\right\rangle\right] + \frac{\eta}{2}\mE\left[\tr\left(\Sigma_{\rm SGD}(\hbx_{n\eta})\nabla^{2}f(\hbx_{n\eta})\right)\right]\\
			& + \mE\left[\Delta f(\hbx_{n\eta})\right] + \cO(\eta^{2}).
		\end{aligned}
	\end{equation}
	On the other hand, we can similarly prove that 
	\begin{equation}
		\small
		\begin{aligned}
			\mE& \left[f(\hbx_{n\eta} + \eta\nabla\log\mu_{\xi_{n}}(\hbx_{n\eta}) + \sqrt{2\eta}\beps_{n})\right] = \mE\left[f(\hbx_{n\eta})\right] + \eta\mE[\langle \bb(\hbx_{n\eta}), \nabla f(\hbx_{n\eta})\rangle] + \eta\mE[\Delta f(\hbx_{n\eta})]\\
			& + \frac{\eta^{2}}{2}\mE\left[\tr\left[\left(\Sigma_{\rm SGD}(\hbx_{n\eta}) + \bb(\hbx_{n\eta})\bb^{\top}(\hbx_{n\eta})\right)\nabla^{2}f(\hbx_{n\eta})\right]\right] + \cO(\eta^{3}). 
		\end{aligned}
	\end{equation}
	Thus we get 
	\begin{equation}\label{eq:sgd discrete error}
		\small
		\sup_{\bx}\left|\mE^{\bx}\left[f(\hbx_{(n + 1)\eta})\right] - \mE^{\bx}\left[f(\hbx_{n\eta} + \eta\nabla\log\mu_{\xi_{n}}(\hbx_{n\eta}) + \sqrt{2\eta}\beps_{n})\right]\right| = \cO(\eta^{2}), 
	\end{equation}
	due to the Lipschitz continuity of $\log{\mu_{\xi}}$, where $\mE^{\bx}[f(\hbx_{t})] = \mE[f(\hbx_{t})\mid \hbx_{0} = \bx]$. Let $f_{\delta, C}(\bx)$ be the ones in \eqref{eq:f_delta_C}, then similar to \eqref{eq:recur gd riemannian}, 
	\begin{equation}\label{eq:riemannian sgd discrete gap}
		\small
		\begin{aligned}
			\mE& \left[f_{\delta, C}\left(\hbx_{(n + 1)\eta} - \barx_{n + 1}\right)\right] 
			\leq \mE\left[f_{\delta, C}\left(\hbx_{n\eta} + \eta\nabla\log\mu_{\xi_{n}}(\hbx_{n\eta}) + \sqrt{2\eta}\beps_{n} - \barx_{n + 1}\right)\right] + \cO(\eta^{2}) \\
			& = \mE\left[f_{\delta, C}\left(\hbx_{n\eta} - \barx_{n} + \eta\nabla\log\mu_{\xi_{n}}(\hbx_{n\eta}) - \eta\nabla\log\mu_{\xi_{n}}(\barx_{n})\right)\right] + \cO(\eta^{2}) \\
			& \leq (1 + \cO(\eta))\mE\left[f_{\delta, C}\left(\hbx_{n\eta} - \barx_{n}\right)\right] + \left(1 + \frac{1}{\eta}\right)\eta^{2}\mE\left[f_{\delta, C}\left(\nabla_{\bx}\log{\mu_{\xi_{n}}(\hbx_{n\eta})} - \nabla_{\bx}\log{\mu_{\xi_{n}}(\bar{\bx}_{n})}\right)\right] + \cO(\eta^{2})\\
			& \leq (1 + \cO(\eta))\mE\left[f_{\delta, C}\left(\hbx_{n\eta} - \barx_{n}\right)\right] + \cO(\eta^{2}) \\
			& \leq \cdots \\
			& \leq \cO(\eta).
		\end{aligned}
	\end{equation}
	The above inequality holds for any $n\leq 1 / \eta$. Similar to \eqref{eq:Fatou lemma}, by taking $\delta\to 0, C\to \infty$, and applying Fatou's Lemma, we get 
	\begin{equation}
		\small
			\mE\left[\|\hbx_{n\eta} - \barx_{n}\|^{2}\right] \leq \cO(\eta). 
	\end{equation}
	Then combining \eqref{eq:approxmiation barx sgd} with the above inequality, and by triangle inequality, we have 
	\begin{equation}
		\small
		\mE\left[\|\bx_{n} - \hbx_{n\eta}\|^{2}\right] \leq \cO(\eta). 
	\end{equation}
	Thus we prove our conclusion by combining Lemma \ref{lem:equivalence}. 
\end{proof}
\subsection{Convergence of (Riemannian) SGD Flow}\label{app:convergence sgd}
In this subsection, we first give proof of the convergence rate of the SGD flow in Euclidean space, by transferring it into its corresponding stochastic ordinary equation. 
\par
Similar to Proposition \ref{pro:equivalence of SGD}, we can prove, in Euclidean space, the SGD flow of minimizing $F(\bx) = \mE_{\xi}[f_{\xi}(\bx)]$ takes the form of  
\begin{equation}
	\small
	d\bx_{t} = -\nabla F(\bx_{t}) + \sqrt{\eta}\Sigma_{\rm SGD}(\bx_{t})^{\frac{1}{2}}dW_{t},
\end{equation}
with $\Sigma_{\rm SGD}(\bx_{t}) = \mE_{\xi}\left[(\nabla f_{\xi}(\bx_{t}) - \mE_{\xi}[f_{\xi}(\bx)])(\nabla f_{\xi}(\bx_{t}) - \mE_{\xi}[f_{\xi}(\bx)])^{\top}\right]$. Then by Lemma \ref{lem:equivalence}, it's corresponded stochastic ordinary equation is 
\begin{equation}\label{eq:SGD ODE flow Euclidean space}
	\small
	d\bx_{t} = -\nabla F(\bx_{t}) - \frac{\eta}{2}\nabla\cdot\Sigma_{\rm SGD}(\bx_{t}) - \frac{\eta}{2}\Sigma_{\rm SGD}\nabla\log{\pi_{t}(\bx_{t})}dt.
\end{equation}
Before providing our theorem, we clarify the definition of our computational complexity to continuous optimization methods. Owing to the connection between the continuous method and its discrete counterpart as in Proposition \ref{pro:riemannian gradient}, \ref{pro:equivalence of SGD}, and \ref{pro:riemannian svrg flow}. It requires $T$ discrete update steps to arrive $\hbx_{\eta T}$. Therefore, the computational complexity of running $T$ discrete steps is said to be the computational complexity of continuous optimization methods measured under $\hbx_{\eta T}$.
\par
Next, let us check the convergence rates of \eqref{eq:SGD ODE flow Euclidean space} for general non-convex optimization or with PL inequality. It worth noting that for this problem, the convergence rate is measured by $\mE[\|\nabla F(\bx_{t})\|^{2}]$ or $\mE[F(\bx_{t})] - \inf_{\bx} F(\bx)$, with/without PL inequality.  
\begin{theorem}\label{thm:continuous sgd convergence}
	Let $\bx_{t}$ defined in \eqref{eq:SGD ODE flow Euclidean space}, then if $\mE\left[\tr\left(\Sigma_{\rm SGD}(\bx_{t})\nabla^{2} F(\bx_{t})\right)\right] \leq \sigma^{2}$ \footnote{This can be satisfied when $f_{\xi}(\bx)$ and its gradient are Lipschitz continuous. Because we have $\mE[\tr(\Sigma_{\rm SGD}(\bx_{t})\nabla^{2} F(\bx_{t}))] \leq \lambda_{\max}(\nabla^{2} F(\bx_{t}))\mE[\tr(\Sigma_{\rm SGD}(\bx_{t}))]$ due to the definition of $\Sigma_{\rm SGD}$.} by taking $\eta = \sqrt{\frac{\mE\left[F(\bx_{0}) - \inf_{\bx}F(\bx)\right]}{T\sigma^{2}}}$, we have 
	\begin{equation}
		\small
		\frac{1}{\eta T}\int_{0}^{\eta T}\mE_{\bx_{t}}\left[\left\|\nabla F(\bx_{t})\right\|^{2}\right]dt \leq \frac{2(F(\bx_{0}) - \inf_{\bx} F(\bx))}{\eta T}, 
	\end{equation}
	On the other hand, if $F(\bx_{t})$ satisfies PL inequality \eqref{eq:pl inequality}, by taking $\eta = 1 / \gamma T^{\alpha}$ with $0 < \alpha < 1$, we have  
	\begin{equation}
		\small
		\mE\left[F(\bx_{\eta T}) - \inf_{\bx}F(\bx)\right] \leq \frac{\sigma^{2}}{\gamma T^{\alpha}}.
	\end{equation}
	Besides that, if $F(\bx)$ is in the form of finite sum, the computational complexity is of order $\cO(\epsilon^{-2})$ to make $\mE_{\bx_{t}}[\|\nabla F(\bx_{t})\|^{2}]\leq \epsilon$ for some $t$, and $\cO(\epsilon^{-1/\alpha})$ to make $\mE_{\bx_{t}}[F(\bx_{t}) - \inf_{\bx}F(\bx)] \leq \epsilon$ under PL inequality.  
\end{theorem}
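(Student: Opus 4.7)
The cleanest starting point is not the Fokker--Planck ODE \eqref{eq:SGD ODE flow Euclidean space} itself but the SDE representative
\begin{equation*}
d\bx_t \;=\; -\nabla F(\bx_t)\,dt + \sqrt{\eta}\,\Sigma_{\rm SGD}^{1/2}(\bx_t)\,dW_t,
\end{equation*}
which carries the same marginal law $\pi_t$ by Lemma~\ref{lem:equivalence}, so expectations of $F(\bx_t)$ agree. Applying It\^o's formula to $F$ and taking expectation to kill the martingale part yields the basic identity
\begin{equation*}
\frac{d}{dt}\mE[F(\bx_t)] \;=\; -\mE\bigl[\|\nabla F(\bx_t)\|^2\bigr] \;+\; \frac{\eta}{2}\,\mE\bigl[\tr(\nabla^2 F(\bx_t)\,\Sigma_{\rm SGD}(\bx_t))\bigr],
\end{equation*}
which under the hypothesis $\mE[\tr(\Sigma_{\rm SGD}\nabla^2 F)]\leq \sigma^2$ is bounded above by $-\mE[\|\nabla F(\bx_t)\|^2]+\eta\sigma^2/2$. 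This single inequality drives both claims.

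For the non-convex claim, I would integrate from $0$ to $\eta T$, use $\mE[F(\bx_{\eta T})]\geq \inf_{\bx}F(\bx)$ at the right endpoint, and rearrange to obtain
\begin{equation*}
\int_0^{\eta T}\mE\bigl[\|\nabla F(\bx_t)\|^2\bigr]\,dt \;\leq\; \bigl(F(\bx_0)-\inf_{\bx}F(\bx)\bigr) + \tfrac{1}{2}\eta^2 T\sigma^2.
\end{equation*}
Dividing by $\eta T$ gives a bound of the form $(F(\bx_0)-\inf F)/(\eta T) + \eta\sigma^2/2$; the prescribed choice $\eta=\sqrt{(F(\bx_0)-\inf F)/(T\sigma^2)}$ equates $\eta^2 T\sigma^2 = F(\bx_0)-\inf F$ and collapses the two terms into the claimed $2(F(\bx_0)-\inf F)/(\eta T)=\cO(1/\sqrt{T})$ bound. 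For the PL case, combining the PL inequality $\|\nabla F\|^2\geq 2\gamma(F-\inf F)$ with the same It\^o identity gives the scalar differential inequality $g'(t)\leq -2\gamma g(t)+\eta\sigma^2/2$ for $g(t):=\mE[F(\bx_t)-\inf F]$; a standard integrating-factor argument then produces $g(\eta T)\leq e^{-2\gamma\eta T}g(0)+\eta\sigma^2/(4\gamma)$, and plugging $\eta=1/(\gamma T^\alpha)$ kills the first term super-polynomially in $T$ while the second has exactly the advertised order $\cO(T^{-\alpha})$.

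The complexity statements are then bookkeeping: a continuous run up to time $\eta T$ is realised by $T$ discrete SGD steps, each costing $\cO(1)$ stochastic gradient queries in the finite-sum setting \eqref{eq:finite sum}. Inverting the two rates above gives $T\gtrsim \epsilon^{-2}$ in the non-convex case and $T\gtrsim \epsilon^{-1/\alpha}$ under PL. The main technical burden I expect is not any of these manipulations but the verification that the It\^o step is legitimate: one needs $F\in C^2$ with enough growth control to apply It\^o's formula, a genuine strong solution of the SDE, integrability of the martingale $\int \sqrt{\eta}\,\nabla F(\bx_s)^\top \Sigma_{\rm SGD}^{1/2}(\bx_s)\,dW_s$ so that its expectation vanishes, and the quantitative hypothesis $\mE[\tr(\Sigma_{\rm SGD}\nabla^2 F)]\leq \sigma^2$ itself, which the footnote attributes to Lipschitzness of $f_\xi$ and $\nabla f_\xi$. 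All of these follow under Assumption~\ref{ass:stochastic continuous}, but spelling them out is the only place where more than the one-line It\^o computation is required.
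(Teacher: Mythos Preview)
Your proposal is correct and, after the first step, identical to the paper's proof: both arrive at the key identity
\[
\frac{d}{dt}\mE[F(\bx_t)] = -\mE[\|\nabla F(\bx_t)\|^2] + \frac{\eta}{2}\mE[\tr(\Sigma_{\rm SGD}(\bx_t)\nabla^2 F(\bx_t))],
\]
and then integrate (non-convex case) or apply PL plus Gronwall/integrating factor (PL case), followed by the same step-counting for complexity.

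The one genuine difference is how that identity is obtained. You pass to the equivalent SDE via Lemma~\ref{lem:equivalence} and apply It\^o's formula to $F$; the paper instead stays with the deterministic ODE \eqref{eq:SGD ODE flow Euclidean space}, differentiates $F(\bx_t)$ directly along it, and then uses the integration-by-parts identity
\[
\mE_{\pi_t}\bigl[\langle \nabla F, \Sigma_{\rm SGD}\nabla\log\pi_t\rangle\bigr] = -\mE_{\pi_t}\bigl[\langle\nabla F,\nabla\cdot\Sigma_{\rm SGD}\rangle + \tr(\Sigma_{\rm SGD}\nabla^2 F)\bigr]
\]
to cancel the $\nabla\cdot\Sigma_{\rm SGD}$ term against part of the score term. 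Your It\^o route is arguably cleaner and sidesteps this cancellation entirely, at the cost of invoking the SDE/ODE equivalence and the regularity caveats you flag (true martingale, strong solution). The paper's route stays purely in the ODE picture, which keeps the argument parallel to its Wasserstein-space proofs where the same integration-by-parts trick is reused. Either way the substance is the same.
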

\begin{proof}
	Due to the definition of $\bx_{t}$, 
	\begin{equation}\label{eq:flow of F(xt)}
		\small
		\frac{\partial{F(\bx_{t})}}{\partial{t}} = \left\langle \nabla F(\bx_{t}), -\nabla F(\bx_{t}) - \frac{\eta}{2}\nabla\cdot\Sigma_{\rm SGD}(\bx_{t}) - \frac{\eta}{2}\Sigma_{\rm SGD}\nabla\log{\pi_{t}(\bx_{t})}\right\rangle.
	\end{equation}
	On the other hand, for $\bx_{t}\sim \pi_{t}$, 
	\begin{equation}
		\small
		\begin{aligned}
			\mE_{\bx_{t}}\left[\left\langle\nabla F(\bx_{t}), \Sigma_{\rm SGD}(\bx_{t})\nabla\log{\pi_{t}(\bx_{t})}\right\rangle\right] & = \int \left\langle\nabla F(\bx_{t}), \Sigma_{\rm SGD}(\bx_{t})\nabla\pi_{t}(\bx_{t})\right\rangle d\bx \\
			& = -\mE_{\bx_{t}}\left[\langle \nabla F(\bx_{t}), \nabla\cdot\Sigma_{\rm SGD}(\bx_{t})\rangle + \tr\left(\Sigma_{\rm SGD}(\bx_{t})\nabla^{2} F(\bx_{t})\right)\right].
		\end{aligned}
	\end{equation}
	Plugging this into \eqref{eq:flow of F(xt)}, we get 
	\begin{equation}\label{eq:sgd upper bound for F}
		\small
		\frac{\partial{\mE\left[F(\bx_{t})\right]}}{\partial{t}} = -\mE\left[\|\nabla F(\bx_{t})\|^{2}\right] + \frac{\eta}{2}\mE\left[\tr\left(\Sigma_{\rm SGD}(\bx_{t})\nabla^{2} F(\bx_{t})\right)\right] \leq -\mE\left[\|\nabla F(\bx_{t})\|^{2}\right] + \eta\sigma^{2}.
	\end{equation}
	Thus 
	\begin{equation}\label{eq:derivate SGD euclidean space}
		\small
		\begin{aligned}
			\frac{1}{\eta T}\int_{0}^{\eta T}\mE_{\bx_{t}}[\|\nabla F(\bx_{t})\|^{2}]dt & \leq \frac{\mE\left[F(\bx_{0}) - F(\bx_{T\eta})\right]}{\eta T} + \eta \sigma^{2} \\
			& = 2\sqrt{\frac{\mE\left[F(\bx_{0}) - F(\bx_{T\eta})\right]\sigma^{2}}{T}} \\
			& = \frac{2\mE\left[F(\bx_{0}) - F(\bx_{T\eta})\right]}{\eta T},
		\end{aligned}		
	\end{equation}
	by taking $\eta = \sqrt{\frac{\mE\left[F(\bx_{0}) - \inf_{\bx}F(\bx)\right]}{T\sigma^{2}}}$. Then we prove our first conclusion. 
	\par
	Next, let us consider the global convergence rate under PL inequality. By applying \eqref{eq:pl inequality} to \eqref{eq:sgd upper bound for F}, we get 
	\begin{equation}
		\small
		\frac{\partial{\mE\left[F(\bx_{t})\right] - \inf_{\bx}F(\bx)}}{\partial{t}} \leq -2\gamma(\mE\left[F(\bx_{t})\right] - \inf_{\bx}F(\bx)) + \eta\sigma^{2},
	\end{equation}
	which implies 
	\begin{equation}\label{eq:sgd exp convergence}
		\small
		\mE\left[F(\bx_{\eta T})\right] - \inf_{\bx}F(\bx) \leq e^{-2\gamma\eta T}\left(\mE\left[F(\bx_{0})\right] - \inf_{\bx}F(\bx)\right) + \eta\sigma^{2}\left(1 - e^{-2\gamma\eta T}\right),
	\end{equation}
	by Gronwall inequality. Thus by taking $\eta = 1 / \gamma T^{\alpha}$ with $0 < \alpha < 1$, we get second conclusion. 
	\par
	Under finite sum objective, when $\sqrt{T} = \cO(\eta T) = \cO(\epsilon^{-1})$, we have $\min_{0\leq t \leq \eta T}\mE_{\bx_{t}}[\|\nabla F(\bx_{t})\|^{2}]\leq \epsilon$. Thus, due to Proposition \ref{pro:equivalence of SGD}, it takes $M = \cO(\eta T/\eta) = \cO(\epsilon^{-2})$ steps in Algorithm \ref{alg:discrete riemannian sgd} to get the ``$\epsilon$-stationary point''. 
	\par
	Furthermore, under PL inequality, due to \eqref{eq:sgd exp convergence}, it takes $T = \cO(\epsilon^{-1/\alpha})$ steps to make $\mE\left[F(\bx_{\eta T})\right] - \inf_{\bx}F(\bx) \leq \epsilon$, which results in computational complexity of order $\cO(\epsilon^{-1/\alpha})$.   
\end{proof}

The convergence rate of Riemannian SGD flow can be similarly proven as in the above theorem. Next, let us check it. We need a lemma termed as von Neumann's trace inequality \citep{von1937some} to prove Theorem \ref{thm:continuous sgd convergence}.
\begin{lemma}[von Neumann's trace inequality]\label{lem:trace inequality}
	For systematic matrices $\bA$ and $\bB$, let $\{\lambda_{i}(\bA)\}$ and $\{\lambda_{i}(\bB)\}$ respectively be their eigenvalues with descending orders. Then 
	\begin{equation}
		\small
		\tr(\bA\bB) \leq \sum\limits_{i=1}^{n}\lambda_{i}(\bA)\lambda_{i}(\bB).
	\end{equation}
\end{lemma}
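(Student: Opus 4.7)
The plan is to prove this via the spectral decomposition of the two symmetric matrices and then reduce the problem to a majorization/rearrangement argument on a doubly stochastic matrix. Write $\bA = \bU \Lambda_{\bA} \bU^{\top}$ and $\bB = \bV \Lambda_{\bB} \bV^{\top}$, where $\bU, \bV$ are orthogonal and the diagonal matrices $\Lambda_{\bA}, \Lambda_{\bB}$ carry the eigenvalues of $\bA$ and $\bB$ in descending order. Using the cyclic property of the trace, one obtains
\begin{equation}
\tr(\bA \bB) = \tr(\Lambda_{\bA} \bQ \Lambda_{\bB} \bQ^{\top}), \quad \bQ := \bU^{\top} \bV,
\end{equation}
so the problem reduces to bounding a bilinear form in the eigenvalues that is coupled by the orthogonal matrix $\bQ$.

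Next I would expand the right-hand side coordinate-wise. Writing $q_{ij}$ for the entries of $\bQ$, a direct computation gives
\begin{equation}
\tr(\Lambda_{\bA} \bQ \Lambda_{\bB} \bQ^{\top}) = \sum_{i,j} \lambda_{i}(\bA)\,\lambda_{j}(\bB)\,q_{ij}^{2}.
\end{equation}
The key structural observation is that the matrix $\bS$ with entries $s_{ij} = q_{ij}^{2}$ is doubly stochastic, since $\bQ$ being orthogonal forces $\sum_{j} q_{ij}^{2} = \sum_{i} q_{ij}^{2} = 1$. Thus $\tr(\bA\bB)$ is expressible as $\langle \boldsymbol{\lambda}(\bA), \bS \boldsymbol{\lambda}(\bB)\rangle$, i.e., a bilinear form in the eigenvalue vectors weighted by a doubly stochastic matrix.

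The main obstacle — and the heart of the proof — is showing that among all doubly stochastic weightings, the one that maximizes $\sum_{ij} \lambda_{i}(\bA) \lambda_{j}(\bB) s_{ij}$ is the identity permutation (pairing largest with largest). For this I would invoke the Birkhoff--von Neumann theorem: every doubly stochastic matrix is a convex combination of permutation matrices $\bP_{\sigma}$, so
\begin{equation}
\sum_{i,j} \lambda_{i}(\bA)\,\lambda_{j}(\bB)\,s_{ij} \leq \max_{\sigma \in S_{n}} \sum_{i=1}^{n} \lambda_{i}(\bA)\,\lambda_{\sigma(i)}(\bB).
\end{equation}
Then the classical rearrangement inequality finishes the argument: since both eigenvalue sequences are arranged in descending order, the maximum over permutations is attained at $\sigma = \mathrm{id}$, yielding $\tr(\bA\bB) \leq \sum_{i} \lambda_{i}(\bA)\lambda_{i}(\bB)$.

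An alternative route I would keep in reserve is to avoid Birkhoff and argue directly: fixing the sums $\sum_j s_{ij}=1$ and $\sum_i s_{ij}=1$, one shows that any ``off-diagonal swap'' of mass in $\bS$ that violates the descending pairing strictly decreases the bilinear form, reducing to the scalar rearrangement lemma $(a_1 - a_2)(b_1 - b_2) \geq 0$ for $a_1 \geq a_2$, $b_1 \geq b_2$. This hand-crafted swap argument is slightly more elementary than Birkhoff and may be preferable for self-containedness, but Birkhoff plus rearrangement is the cleanest presentation.
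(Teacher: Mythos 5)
Your argument is correct, but note that the paper does not prove this lemma at all: it is stated as a classical fact and attributed to von Neumann's 1937 paper, so there is no in-paper proof to compare against. What you give is the standard self-contained derivation of the symmetric (eigenvalue) form of the inequality, which is exactly the form the paper uses (``systematic'' in the statement should be read as ``symmetric''): diagonalize $\bA=\bU\Lambda_{\bA}\bU^{\top}$, $\bB=\bV\Lambda_{\bB}\bV^{\top}$, use cyclicity to write $\tr(\bA\bB)=\sum_{i,j}\lambda_{i}(\bA)\lambda_{j}(\bB)q_{ij}^{2}$ with $\bQ=\bU^{\top}\bV$ orthogonal, observe that $(q_{ij}^{2})$ is doubly stochastic, maximize the resulting linear functional over the Birkhoff polytope at a permutation matrix, and conclude with the rearrangement inequality. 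Each step checks out; the only point worth being explicit about is that the eigenvalues may be negative, so you need the rearrangement inequality for arbitrary real sequences rather than nonnegative ones --- your reserve swap argument based on $(a_{1}-a_{2})(b_{1}-b_{2})\geq 0$ covers precisely this, so either route closes the argument. Your proof buys self-containedness at the cost of invoking Birkhoff--von Neumann (or the slightly longer swap argument), whereas the paper simply outsources the lemma to the literature; also note that the general von Neumann inequality for non-symmetric matrices is stated in terms of singular values and would need a different (SVD-based) argument, but the symmetric case you prove is all that the paper's applications (bounding $\tr(\Sigma\,\nabla^{2}f)$ terms by $\lambda_{\max}$-type quantities) require.
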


\convergenceofriemanniansgd*
\begin{proof}
	During the proof, we borrow the notations of $H_{1}, H_{2}, H_{3}$ in Lemma \ref{lemma:regularity bound}. Next, we prove the results as in Theorem \ref{thm:continuous sgd convergence}. That is 
	\begin{equation}\label{eq:sgd riemannian derivate}
		\small
		\begin{aligned}
			\frac{\partial}{\partial{t}}D_{KL}(\pi_{t}\parallel \mu) & = \left\langle\grad D_{KL}(\pi_{t}\parallel \mu), -\grad D_{KL}(\pi_{t}\parallel \mu) + \frac{\eta}{2}\nabla\cdot \Sigma_{\rm SGD} + \frac{\eta}{2}\Sigma_{\rm SGD}\nabla\log{\pi_{t}}\right\rangle_{\pi_{t}} \\
			& = -\left\|\grad D_{KL}(\pi_{t}\parallel \mu)\right\|^{2} + \left\langle\grad D_{KL}(\pi_{t}\parallel \mu), \frac{\eta}{2}\nabla\cdot \Sigma_{\rm SGD} + \frac{\eta}{2}\Sigma_{\rm SGD}\nabla\log{\pi_{t}}\right\rangle_{\pi_{t}}. 
		\end{aligned}
	\end{equation}
	To begin with, we have
	\begin{equation}\label{eq:bound 1 sgd}
		\small
		\begin{aligned}
			\left\langle\grad D_{KL}(\pi_{t}\parallel \mu), \frac{\eta}{2}\nabla\cdot \Sigma_{\rm SGD} \right\rangle_{\pi_{t}} & =\int \left\langle\nabla  \log \frac{d\pi_t}{d\mu}, \frac{\eta}{2}\nabla\cdot \Sigma_{\rm SGD} \right\rangle d \pi_t
			\\
			& \leq \frac{\eta H_{3}}{2}\|\grad D_{KL}(\pi_{t}\parallel \mu)\|^{2}_{\pi_{t}} + \frac{\eta}{8H_{3}}\mE_{\pi_{t}}\left[\left\|\nabla\cdot \Sigma_{\rm SGD}\right\|^{2}\right]
			\\
			& \leq\frac{\eta H_{3}}{2}\|\grad D_{KL}(\pi_{t}\parallel \mu)\|^{2}_{\pi_{t}} + \frac{\eta H_{2}^{2}}{8H_{3}}, 
		\end{aligned}
	\end{equation}
	where the first inequality is due to Young's inequality and last one is from the Lemma \ref{lemma:regularity bound}. Then, we have  
	\begin{equation}\label{eq:bound 2 sgd}
		\small
		\begin{aligned}
			&\left\langle\grad D_{KL}(\pi_{t}\parallel \mu), \frac{\eta}{2}\Sigma_{\rm SGD}\nabla\log{\pi_{t}} \right\rangle_{\pi_{t}} 
			= \int \left\langle\nabla  \log \frac{d \pi_t}{d \mu}, \frac{\eta}{2}\Sigma_{\rm SGD}\nabla\log{\pi_{t}} \right\rangle d \pi_t
			\\
			=& \int \left\langle\nabla  \log \pi_t, \frac{\eta}{2}\Sigma_{\rm SGD}\nabla\log \pi_{t} \right\rangle d \pi_t- \int \left\langle\nabla  \log { \mu}, \frac{\eta}{2}\Sigma_{\rm SGD}\nabla\log{\pi_t} \right\rangle d \pi_t
			\\
			\le &  \frac{\eta H_{3}}{2}\int \left\|\nabla  \log d \pi_t\right\|^{2} d \pi_t- \int \left\langle\nabla  \log { \mu}, \frac{\eta}{2}\Sigma_{\rm SGD}\nabla\log{\pi_t} \right\rangle d \pi_t
			\\
			= & \frac{\eta H_{3}}{2} \left\|\grad D_{KL}(\pi_{t}\parallel \mu)\right\|_{\pi_{t}}^{2} - \int \left\langle\nabla  \log { \mu}, \frac{\eta}{2}\Sigma_{\rm SGD}\nabla\log{\pi_t} \right\rangle d \pi_t
			\\
			-& \frac{\eta H_{3}}{2}\int \left\|\nabla  \log  \mu\right\|^{2}d \pi_t + \frac{\eta H_{3}}{2}\int \left\langle\nabla  \log { \mu}, \frac{\eta}{2}\nabla\log{\pi_t} \right\rangle d \pi_t.
		\end{aligned}
	\end{equation}
	In the r.h.s of the above inequality, the sum of the second and the forth terms can be bounded as 
	\begin{equation}\label{eq:bound 3 sgd}
		\small
		\begin{aligned}
			-\int&  \left\langle\nabla  \log { \mu}, \frac{\eta}{2}\Sigma_{\rm SGD}\nabla\log{\pi_t} \right\rangle d \pi_t+\frac{\eta H_{3}}{2}\int \left\langle\nabla  \log { \mu}, \frac{\eta}{2}\nabla\log{\pi_t} \right\rangle d \pi_t \\
			& = -\int \left\langle\nabla  \log { \mu}, \frac{\eta}{2}\Sigma_{\rm SGD}\nabla\log{\pi_t} \right\rangle d \pi_t+\frac{\eta H_{3}}{2}\int \left\langle\nabla  \log { \mu}, \frac{\eta}{2}\nabla\log{\pi_t} \right\rangle d \pi_t
			\\
			&- \int \left\langle\nabla  \log  \mu, \frac{\eta}{2}\nabla\cdot \Sigma_{\rm SGD} \right\rangle d \pi_t + \int \left\langle\nabla  \log  \mu, \frac{\eta}{2}\nabla\cdot \Sigma_{\rm SGD} \right\rangle d \pi_t
			\\
			& =\frac{\eta}{2}\int\tr\left(\nabla^{2}\log{\mu}\Sigma_{\rm SGD}\right)d\pi_{t}- \frac{\eta H_{3}}{2}\int\tr\left(\nabla^{2}\log{\mu}\right)d\pi_{t}
			\\
			& + \int \left\langle\nabla  \log  \mu, \frac{\eta}{2}\nabla\cdot \Sigma_{\rm SGD} \right\rangle d \pi_t
			\\
			& \overset{a}{\leq} \frac{\eta H_{1}H_{3}}{2}+\frac{\eta  H_{1}H_{3}}{2}+ \frac{\eta H_{3}}{2}\int \left\Vert \nabla  \log  \mu \right\Vert  d \pi_t
			\\
			& \le \frac{\eta H_{1}H_{3}}{2}+\frac{\eta H_{1}H_{3}}{2}+ \frac{\eta H_{3}}{2}\int \left\Vert \nabla  \log  \mu \right\Vert^2  d \pi_t + \frac{\eta H_{2}^{2}}{8H_{3}},
		\end{aligned}
	\end{equation}
	where the inequality $a$ is from Lemma \ref{lem:trace inequality}, \ref{lemma:regularity bound}, and the semi-positive definite property of $\Sigma_{\rm SGD}$. By plugging \eqref{eq:bound 1 sgd}, \eqref{eq:bound 2 sgd}, and \eqref{eq:bound 3 sgd} into \eqref{eq:sgd riemannian derivate}, we have 
	\begin{equation}
		\small
		\frac{\partial}{\partial{t}}D_{KL}(\pi_{t}\parallel \mu) \leq -(1 - \eta H_{3})\left\|\grad D_{KL}(\pi_{t}\parallel \mu)\right\|^{2}_{\pi_{t}} + \eta H_{1}H_{3} + \frac{\eta H_{2}^{2}}{4H_{3}},
	\end{equation}
	due to the value of $\eta$ makes $\eta H_{3} \leq \frac{1}{2}$. Taking integral w.r.t. $t$ as in \eqref{eq:derivate SGD euclidean space} implies 
	\begin{equation}\label{eq:riemannian sgd upper bound}
		\small
		\begin{aligned}
			\frac{1}{2\eta T}\int_{0}^{\eta T}\left\|\grad D_{KL}(\pi_{t}\parallel \mu)\right\|^{2}_{\pi_{t}}dt & \leq \frac{D_{KL}(\pi_{0}\parallel \mu)}{\eta T} + \eta H_{1}H_{3} + \frac{\eta H_{2}^{2}}{4H_{3}}\\
			& = 2\sqrt{\frac{D_{KL}(\pi_{0}\parallel \mu)}{T}\left(H_{1}H_{3} + \frac{H_{2}^{2}}{4H_{3}}\right)} \\
			& = 2D_{KL}(\pi_{0}\parallel \mu)\frac{1}{\eta T},
		\end{aligned}
	\end{equation}
	so that we prove our conclusion due to the value of $H_{1}, H_{2}, H_{3}$. The second result can be similarly obtained as in Theorem \ref{thm:continuous sgd convergence} by inequality 
	\begin{equation}
		\small
		D_{KL}(\pi_{\eta T}\parallel \mu) \leq e^{-\gamma \eta T}D_{KL}(\pi_{0}\parallel \mu) + \eta\left(H_{1}H_{3} + \frac{H_{2}^{2}}{4H_{3}}\right)\left(1 - e^{-\gamma\eta T}\right),
	\end{equation}
	which is obtained by applying log-Sobolev inequality to \eqref{eq:riemannian sgd upper bound} and Gronwall inequality. Thus, taking $\eta = 1 / \gamma T^{\alpha}$ with $0 < \alpha < 1$ implies our conclusion. 
\end{proof}
Similar to the proof of Theorem \ref{thm:convergence riemannian sgd}, we can get the computational complexity of Riemannian GD and SGD flow, i.e., $\cO(N\epsilon^{-1})$ and $\cO(\epsilon^{-2})$ respectively for non-convex problem to arrive $\epsilon$-stationary point, but $\cO(N\gamma^{-1}\log{\epsilon^{-1}})$ and $\cO(\epsilon^{-1})$ under (log-Sobolev inequality) Riemannian PL inequality.   
\par
The following is the lemma implied by Assumption \ref{ass:stochastic continuous}. 
\begin{lemma}\label{lemma:regularity bound}
	Under Assumption \ref{ass:stochastic continuous}, it holds $\mE_{\pi_{t}}[\tr^{+}(\nabla^{2}\log{\mu})] \le dL_{2} = H_{1}$ \footnote{ $\tr^+(*)$  stands for the sum of absolute value of $*$'s eigenvalues.}, $\sup_{\bx} \Vert \nabla \cdot \Sigma_{\rm SGD} \Vert \le 4(d + 1)L_{1}L_{2} = H_2$, and $\sup_{\bx}\lambda_{\max}(\Sigma_{\rm SGD}(\bx)) \leq 4L_{1}^{2} = H_{3}$.
\end{lemma}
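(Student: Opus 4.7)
The plan is to handle the three bounds separately, each leveraging the two Lipschitz conditions in Assumption \ref{ass:stochastic continuous} and the explicit product form $\mu \propto \exp(\mE_\xi[\log \mu_\xi])$ from Proposition on the stochastic optima. The single tool underlying everything is that $L_1$-Lipschitz continuity of $\log\mu_\xi$ gives $\|\nabla\log\mu_\xi(\bx)\|\le L_1$, while $L_2$-Lipschitz continuity of $\nabla\log\mu_\xi$ gives $\|\nabla^2\log\mu_\xi(\bx)\|_{\mathrm{op}}\le L_2$ pointwise in $\bx$ and for every realization of $\xi$. I would also note that $\log\mu(\bx) = \mE_\xi[\log\mu_\xi(\bx)] + \text{const}$, so differentiating under the expectation yields $\nabla^k\log\mu = \mE_\xi[\nabla^k\log\mu_\xi]$ for $k=1,2$.

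For the first bound $\mE_{\pi_t}[\tr^+(\nabla^2\log\mu)]\le dL_2$, I would combine these two observations. The identity $\nabla^2\log\mu = \mE_\xi[\nabla^2\log\mu_\xi]$ together with the fact that the nuclear norm $\tr^+(\cdot) = \sum_i|\lambda_i(\cdot)|$ is convex on symmetric matrices yields, by Jensen, $\tr^+(\nabla^2\log\mu(\bx)) \le \mE_\xi[\tr^+(\nabla^2\log\mu_\xi(\bx))]$. Since every eigenvalue of $\nabla^2\log\mu_\xi$ lies in $[-L_2,L_2]$, the inner quantity is at most $dL_2$, and taking expectation over $\pi_t$ preserves the bound.

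For the third bound $\lambda_{\max}(\Sigma_{\rm SGD}(\bx))\le 4L_1^2$, I would write $\Sigma_{\rm SGD}(\bx) = \mE_\xi[D_\xi(\bx)D_\xi(\bx)^\top]$ with $D_\xi(\bx) := \nabla\log\mu_\xi(\bx) - \mE_\xi[\nabla\log\mu_\xi(\bx)]$, and use the variational characterization $\lambda_{\max}(\Sigma_{\rm SGD}) = \sup_{\|v\|=1}\mE_\xi[(v^\top D_\xi)^2] \le \mE_\xi[\|D_\xi\|^2]$. The triangle inequality gives $\|D_\xi(\bx)\|\le \|\nabla\log\mu_\xi(\bx)\| + \|\mE_\xi[\nabla\log\mu_\xi(\bx)]\|\le 2L_1$, yielding the claimed bound uniformly in $\bx$.

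The only step that requires real work is the second bound, and this is where I would spend the most care. Interchanging differentiation with $\mE_\xi$, the $i$-th component of $\nabla\cdot\Sigma_{\rm SGD}$ expands as
\begin{equation*}
(\nabla\cdot\Sigma_{\rm SGD})_i = \mE_\xi\bigl[((H_\xi - H^*) D_\xi)_i + (D_\xi)_i\,\tr(H_\xi - H^*)\bigr],
\end{equation*}
where $H_\xi := \nabla^2\log\mu_\xi$ and $H^* := \mE_\xi[H_\xi]$; both summands arise by applying the product rule to $(D_\xi)_i(D_\xi)_j$ and summing over $j$. I would then bound each piece uniformly in $\bx$: $\|H_\xi - H^*\|_{\mathrm{op}}\le 2L_2$ by the triangle inequality for the operator norm and Jensen, which gives $|\tr(H_\xi - H^*)|\le d\|H_\xi - H^*\|_{\mathrm{op}}\le 2dL_2$, while $\|D_\xi\|\le 2L_1$ as above. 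Collecting the two terms yields $\|\nabla\cdot\Sigma_{\rm SGD}(\bx)\|\le 2L_2\cdot 2L_1 + 2L_1\cdot 2dL_2 = 4(d+1)L_1L_2$. The main obstacle, if any, is justifying the interchange of $\nabla$ and $\mE_\xi$ and carrying the product rule through the double expectation in the definition of $D_\xi$ without misplacing indices; the Lipschitz hypotheses are precisely what supply uniform domination so that differentiation under the integral is valid.
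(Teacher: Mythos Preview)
Your proposal is correct and follows essentially the same approach as the paper: the paper likewise writes $\nabla^2\log\mu=\mE_\xi[\nabla^2\log\mu_\xi]$ and bounds $\tr^+$ by $d$ times the operator norm, expands $\nabla\cdot\Sigma_{\rm SGD}$ into the same two terms $\mE_\xi[(H_\xi-H^*)D_\xi]+\mE_\xi[\tr(H_\xi-H^*)D_\xi]$ and bounds them by $4L_1L_2+4dL_1L_2$, and states the $\lambda_{\max}$ bound directly from the Lipschitz assumption. Your write-up is slightly more detailed (explicit Jensen for the nuclear norm, the variational characterization of $\lambda_{\max}$, and the dominated-convergence justification for differentiating under $\mE_\xi$), but there is no substantive difference.
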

\begin{proof}
	Due to Assumption \ref{ass:stochastic continuous}, we have 
	\begin{equation}\label{eq:positive trace bound}
		\small
		\begin{aligned}
			\mE_{\pi_{t}}[\tr^{+}(\nabla^{2}\log{\mu})] &=\mE_{\pi_{t}}\left[\tr^{+}\left(\mE_{\xi}\left[\nabla^{2}\log{\mu_{\xi}}\right]\right)\right] \\
			& \leq \mE_{\pi_{t}}\left[d\left\|\mE_{\xi}\nabla^{2}\log{\mu_{\xi}}\right\|\right] \\
			& \leq dL_{2},
		\end{aligned}
		\small
	\end{equation}
	where $\|\cdot\|$ here is the spectral norm of matrix. On the other hand, we notice 
	\begin{equation}
		\small
		\begin{aligned}
			\nabla\cdot \Sigma_{\rm SGD} &= \mE_{\xi}\left[\tr\left(\nabla^{2}\log{\mu_{\xi}} -  \mE_{\xi}\left[\nabla^{2}\log{\mu_{\xi}}\right]\right)\left(\nabla\log{\mu_{\xi}} -  \mE\left[\nabla\log{\mu_{\xi}}\right]\right)\right] \\
			& + \mE_{\xi}\left[\left(\nabla^{2}\log{\mu_{\xi}} -  \mE_{\xi}\left[\nabla^{2}\log{\mu_{\xi}}\right]\right)\left(\nabla\log{\mu_{\xi}} -  \mE\left[\nabla\log{\mu_{\xi}}\right]\right)\right].
		\end{aligned}
	\end{equation}
	Then by Assumption \ref{ass:stochastic continuous}, 
	\begin{equation}\label{eq:upper bound div of covariance matrix}
		\small
		\sup_{\bx}\|\nabla\cdot \Sigma_{\rm SGD}(\bx)\| \leq 4dL_{1}L_{2} + 4L_{1}L_{2} = 4(d + 1)L_{1}L_{2}. 
	\end{equation}
	Finally, due to Assumption \ref{ass:stochastic continuous}, we have 
	\begin{equation}\label{eq:specturm norm of covariance matrix sgd}
		\small
		\sup_{\bx}\lambda_{\max}(\Sigma_{\rm SGD}(\bx)) \leq 4L_{1}^{2}.
	\end{equation}
\end{proof}

\section{Proofs in Section \ref{sec:svrg flow}}\label{app:proofs in section svrg flow}
\svrgflow*
\begin{proof}
	The proof is similar to the one of Proposition \ref{pro:equivalence of SGD}. Firstly, we choose $\Gamma_{\pi_{0}^{i}}^{\pi_{n}^{i}}$ as the  transportation that preserves the correlation between $\bx_{0}^{i}$ and $\bx_{n}^{i}$, which means that for any function $\bu\in\cT_{\pi_{0}^{i}}$, we have 
	\begin{equation}
		\small
		\Gamma_{\pi_{0}^{i}}^{\pi_{n}^{i}}(\bu(\bx_{0}^{i})) = \bu(\bx_{n}^{i}),
	\end{equation}
	so that $\pi_{0}^{i}\times \pi_{n}^{i}$ is the union distribution of $(\bx_{0}^{i}, \bx_{n}^{i})$ defined as below. Concretely, we know that for $0 \leq n \leq M - 1, 0\leq i \leq I - 1$, the corresponded $\bx_{n}^{i}$ of discrete SVRG in Algorithm \ref{alg:discrete riemannian svrg} satisfies 
	\begin{equation}
		\small
		\bx_{n + 1}^{i} = \bx_{n}^{i} + \eta \left(\nabla\log{\frac{d\mu_{\xi_{n}^{i}}}{d\pi_{n}^{i}}}(\bx_{n}^{i}) - \nabla\log{\frac{d\mu_{\xi_{n}^{i}}}{d\pi_{0}^{i}}}(\bx_{0}^{i}) + \nabla\log{\frac{d\mu}{d\pi_{0}^{i}}}(\bx_{0}^{i})\right).
	\end{equation}
	In the rest of this proof, we neglect the subscript $i$ to simplify the notations. Similar to Proposition \ref{pro:equivalence of SGD}, we can prove $\bx_{n + 1}$ is approximated by 
	\begin{equation}
		\small
		\barx_{n + 1} = \barx_{n} + \eta\left(\nabla\log{\mu_{\xi_{n}}}(\barx_{n}) - \nabla\log{\mu_{\xi_{n}}}(\bx_{0}) + \nabla\log{\mu(\bx_{0})}\right) + \sqrt{2\eta}\beps_{n},
	\end{equation}
	with $\beps_{n}\sim\cN(0, \bI)$ by noting the Lipchitz continuity of $\nabla\log{\mu_{\xi}}$. As in Proposition \ref{pro:equivalence of SGD}, the approximation error is similarly proven as   
	\begin{equation}\label{eq:svrg discrete gap}
		\small
		\mE\left[\|\barx_{n} - \bx_{n}\|^{2}\mid \bx_{0}, \barx_{0}\right] \leq \cO(\eta) + \cO(\|\bx_{0} - \barx_{0}\|^{2}). 
	\end{equation}
	Next, our goal is showing that the discrete dynamics $\barx_{n}$ is approximated by the following stochastic differential equation 
	\begin{equation}\label{eq:svrg sde}
		\small
		\begin{aligned}
			d\hbx_{t} &= \nabla \mE_{\xi}\left[\log\mu_{\xi}(\hbx_{t})\right]dt + \left(\sqrt{\eta}\Sigma_{\rm SVRG}^{\frac{1}{2}}(\hbx_{t_{i}}, \hbx_{t}), \sqrt{2}\bI\right)dW_{t} \\
			& = \bb(\hbx_{t})dt + \left(\sqrt{\eta}\Sigma_{\rm SVRG}^{\frac{1}{2}}(\hbx_{t_{i}}, \hbx_{t}), \sqrt{2}\bI\right)dW_{t}; \qquad iM\eta = t_{i} \leq t \leq t_{i + 1} = (i + 1)M\eta,
		\end{aligned}	
	\end{equation}
	where $M$ is steps for each epoch, and $\Sigma_{\rm SVRG}^{\frac{1}{2}}(\hbx_{t_{i}}, \bx_{t})$ is defined in \eqref{eq:svrg variance}. Similar to \eqref{eq:continuous expansion on sde}, for $(n + iM)\eta = t\in [t_{i}, t_{i + 1}]$. We write $\hbx_{(n + iM)\eta}$ as $\hbx_{n\eta}$ in the rest of this proof to simplify the notation. We can prove that for any $f\in C^{2}$ with bounded gradient and Hessian under the condition of given $\hbx_{t_{i}}$,
	\begin{equation}
		\small
		\begin{aligned}
			\mE^{\hbx_{t_{i}}}\left[f(\hbx_{(n + 1)\eta})\right] & = \mE^{\hbx_{t_{i}}}\left[f(\hbx_{n\eta})\right] + \eta\mE^{\hbx_{t_{i}}}[\langle \bb(\hbx_{n\eta}), \nabla f(\hbx_{n\eta})\rangle] + \eta\mE^{\hbx_{t_{i}}}\left[\Delta f(\hbx_{n\eta})\right] \\
			& + \frac{\eta^{2}}{2}\mE^{\hbx_{t_{i}}}\left[\tr\left[\left(\Sigma_{\rm SVRG}(\hbx_{t_{i}}, \hbx_{n\eta})\right)\nabla^{2}f(\hbx_{n\eta})\right]\right] + \cO(\eta^{2})
		\end{aligned}
	\end{equation}
	On the other hand 
	\begin{equation}
		\small
		\begin{aligned}
			& \mE^{\hbx_{t_{i}}}\left[f\left(\hbx_{n\eta} + \eta\left(\nabla\log{\mu_{\xi_{n}}(\hbx_{n\eta})} - \nabla\log{\mu_{\xi_{n}}(\hbx_{t_{i}})} + \nabla\log{\mu(\hbx_{t_{i}})}\right)\right) + \sqrt{2\eta}\beps_{n}\right] \\
			& = \mE^{\hbx_{t_{i}}}\left[f(\hbx_{n\eta})\right] + \eta\mE^{\hbx_{t_{i}}}[\langle \bb(\hbx_{n\eta}), \nabla f(\hbx_{n\eta})\rangle] +\eta\mE^{\hbx_{t_{i}}}\left[\Delta f(\hbx_{n\eta})\right]\\
			& + \frac{\eta^{2}}{2}\mE^{\hbx_{t_{i}}}\left[\tr\left[\left(\Sigma_{\rm SVRG}(\hbx_{t_{i}}, \hbx_{n\eta}) + \bb\bb^{\top}\right)\nabla^{2}f(\hbx_{n\eta})\right]\right] + \cO(\eta^{3}),
		\end{aligned}
	\end{equation}
	where we use the fact that 
	\begin{equation}
		\small
		\mE^{\hbx_{t_{i}}}\left[\nabla\log{\mu_{\xi_{n}}(\hbx_{n\eta})} - \nabla\log{\mu_{\xi_{n}}(\hbx_{t_{i}})} + \nabla\log{\mu(\hbx_{t_{i}})}\right] = \mE^{\hbx_{t_{i}}}\left[\nabla\log{\mu_{\xi_{n}}(\hbx_{n\eta})}\right] = \bb(\hbx_{n\eta}).
	\end{equation}
	Then similar to \eqref{eq:riemannian sgd discrete gap} in the proof of Proposition \ref{pro:equivalence of SGD}, by taking $f$ as $f_{\delta, C}$ in \eqref{eq:f_delta_C}, and combining \eqref{eq:svrg discrete gap}, we prove 
	\begin{equation}
		\small
		\mE\left[\left\|\hbx_{n\eta} - \bx_{n}\right\|^{2} \mid \hbx_{t_{i}}, \bx_{0} \right] \leq \cO(\eta) + \cO\left(\|\hbx_{t_{i}} - \bx_{0}\|^{2}\right).
	\end{equation}
	By noting that $\hbx_{0} = \bx_{0}^{0}$, recursively using the above inequality over $i$, and taking expectation, we prove that 
	\begin{equation}
		\small
			\mE\left[\left\|\hbx_{(iM + n)\eta} - \bx_{n}^{i}\right\|^{2}\right] \leq \cO(\eta). 
	\end{equation}
	On the other hand, we know that given $\hbx_{t_{i}} = \by$, the conditional probability $\hbx_{t}\mid \hbx_{t_{i}}$ follows conditional density  $\pi_{t\mid t_{i}}(\bx)$ satisfies 
	\begin{equation}
		\small
		\frac{\partial}{\partial{t}}\pi_{t\mid t_{i}}(\bx\mid \by) = \nabla\cdot\left[\pi_{t\mid t_{i}}(\bx\mid \by)\left(\nabla \log{\frac{d\pi_{t}}{d\mu}}(\bx) - \frac{\eta}{2}\nabla_{\bx}\cdot\Sigma_{\rm SVRG}(\by, \bx) - \frac{\eta}{2}\Sigma_{\rm SVRG}(\by, \bx)\nabla_{\bx}\log{\pi_{t_{i}, t}(\by, \bx)}\right)\right].
	\end{equation}
	Then, multiplying $\pi_{t_{i}}(\by)$ to the above equality, applying equality $\nabla_{\bx}\log{\pi_{t_{i}, t}(\by, \bx)} = \nabla_{\bx}\log{\pi_{t\mid t_{i}}(\bx\mid \by)}$, and taking integral over $\by$ implies our conclusion. 
\end{proof}
\subsection{Convergence of Riemannian SVRG Flow}\label{app:convergence of riemannian svrg flow}
Similar to the results of Riemannian SGD in Section \ref{app:convergence sgd}, we first give the convergence rate of continuous SVRG flow (stochastic ODE) in the Euclidean space, which helps understanding the results in Wasserstein space. First, to minimize $F(\bx) = 1/N\sum_{j=1}^{N}f_{\xi_{j}}(\bx)$, by generalizing the formulation in \eqref{eq:svrg flow on manifold}, the SVRG flow in Euclidean space is   
\begin{equation}\label{eq:svrg ode flow in euclidean space}
	\small
	d\bx_{t} = -\nabla F(\bx_{t}) - \frac{\eta}{2}\nabla\cdot\Sigma_{\rm SVRG}(\bx_{t_{i}}, \bx_{t}) - \frac{\eta}{2}\Sigma_{\rm SVRG}(\bx_{t_{i}}, \bx_{t})\nabla\log{\pi_{t}(\bx_{t})}dt, \qquad t_{i} \leq t \leq t_{i + 1};
\end{equation}
with $\Sigma_{\rm SVRG}$ defined as
\begin{equation}\label{eq:svrg variance euclidean space}
	\small
	\Sigma_{\rm SVRG}(\by, \bx) = \mE_{\xi, \bx_{t_{i}}}\left[(\nabla f_{\xi}(\bx) - \nabla f_{\xi}(\by) + \nabla F(\by) - \nabla F(\bx))(\nabla f_{\xi}(\bx) - \nabla f_{\xi}(\by) + \nabla F(\by) - \nabla F(\bx))^{\top}\right],
\end{equation} 
and $\pi_{t}$ is the density of $\bx_{t}$, $\xi$ is uniform distribution over $\{\xi_{i}\}$. Then the convergence rate and computational complexity of \eqref{eq:svrg ode flow in euclidean space} is presented in the following Theorem.  
\begin{theorem}\label{thm:convergence of svrg on euclidean}
	For $\bx_{t}$ in \eqref{eq:svrg ode flow in euclidean space}, learning rate $\eta = \cO(N^{-2/3})$, $\Delta = t_{1} - t_{0} = \cdots = t_{I} - t_{I -1} = \cO(1 / \sqrt{\eta})$, and $\eta T = I\Delta$, if $f_{\xi}(\bx)$ is Lipschitz continuous with coefficient $L_{1}$ and has Lipschitz continuous gradient with coefficient $L_{2}$, then 
	\begin{equation}
		\small
		\frac{1}{\eta T}\sum\limits_{i = 1}^{I}\int_{t_{i}}^{t_{i + 1}}\mE\left[\|\nabla F(\bx_{t})\|^{2}\right]dt \leq \frac{2(\mE\left[F(\bx_{0})\right] - \inf_{\bx} F(\bx))}{\eta T},
	\end{equation}
	On the other hand, by properly taking hyperparameters, the computational complexity of SVRG flow is of order $\cO(N^{2/3}/\epsilon)$, when $\min_{0\leq t \leq \eta T} \mE\left[\|\nabla F(\bx_{t})\|^{2}\right] \leq \epsilon$. Further more, when $F(\bx)$ satisfies PL inequality \eqref{eq:pl inequality} with coefficient $\gamma$, we have 
	\begin{equation}
		\small
		\mE\left[F(\bx_{\eta T}) - \inf F(\bx)\right] \leq e^{-\gamma \eta T}\mE\left[F(\bx_{0}) - \inf_{\bx}F(\bx)\right].
	\end{equation} 
	The computational complexity of SVRG flow is of order $\cO((N + \gamma^{-1}N^{2/3})\log{\epsilon^{-1}})$ to make $\mE\left[F(\bx_{\eta T}) - \inf F(\bx)\right] \leq \epsilon$. 
\end{theorem}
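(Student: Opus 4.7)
The plan is to transport the Riemannian SGD analysis (Theorem \ref{thm:convergence riemannian sgd}) into the purely Euclidean setting with $\Sigma_{\rm SVRG}$ in place of $\Sigma_{\rm SGD}$, then exploit the key structural property of SVRG: the variance $\tr(\Sigma_{\rm SVRG}(\by,\bx))$ scales with the squared displacement $\|\bx-\by\|^2$, so that a sufficiently short epoch length $\Delta = \cO(1/\sqrt{\eta})$ makes the variance correction a controllable fraction of $\|\nabla F\|^2$. By Lemma \ref{lem:equivalence}, \eqref{eq:svrg ode flow in euclidean space} has the same density as the SDE $d\bx_t = -\nabla F(\bx_t)\,dt + \sqrt{\eta}\,\Sigma_{\rm SVRG}^{1/2}(\bx_{t_i}, \bx_t)\,dW_t$ on $[t_i,t_{i+1}]$; conditioning on $\bx_{t_i}$ and applying Ito to $F(\bx_t)$ yields
\[ \frac{d}{dt}\mE[F(\bx_t)] = -\mE[\|\nabla F(\bx_t)\|^2] + \frac{\eta}{2}\mE\!\left[\tr\!\left(\Sigma_{\rm SVRG}(\bx_{t_i},\bx_t)\nabla^2 F(\bx_t)\right)\right], \]
exactly in the spirit of \eqref{eq:sgd upper bound for F}. (The same identity also follows directly from the ODE via the integration-by-parts trick used for \eqref{eq:sgd upper bound for F}; the $\nabla\cdot\Sigma_{\rm SVRG}$ drift cancels the resulting boundary term.)

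Next I would bound the correction pointwise in $t$. The control-variate form of \eqref{eq:svrg variance euclidean space} together with the Lipschitz gradient gives $\tr(\Sigma_{\rm SVRG}(\by,\bx)) \leq \mE_\xi[\|\nabla f_\xi(\bx)-\nabla f_\xi(\by)\|^2] \leq L_2^2\|\bx-\by\|^2$, and $|\tr(\Sigma_{\rm SVRG}\nabla^2 F)| \leq L_2\tr(\Sigma_{\rm SVRG})$. To convert this into a bound driven by $\|\nabla F\|^2$ I would apply Cauchy--Schwarz and Ito isometry to $\bx_t - \bx_{t_i} = -\int_{t_i}^t\nabla F(\bx_s)ds + \int_{t_i}^t\sqrt{\eta}\Sigma_{\rm SVRG}^{1/2}dW_s$, obtaining the self-referential inequality
\[ \mE[\|\bx_t - \bx_{t_i}\|^2] \leq 2\Delta\!\int_{t_i}^t\!\mE[\|\nabla F(\bx_s)\|^2]ds + 2\eta L_2^2\!\int_{t_i}^t\!\mE[\|\bx_s - \bx_{t_i}\|^2]ds, \]
and close the loop by Gronwall, noting that $\eta\Delta = \cO(\sqrt{\eta})$ keeps the multiplier $e^{2\eta L_2^2\Delta} = \cO(1)$. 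The resulting displacement bound $\mE[\|\bx_t - \bx_{t_i}\|^2] \leq C\Delta\int_{t_i}^{t_{i+1}}\mE[\|\nabla F\|^2]ds$ then controls the correction term.

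Integrating the Ito identity over $[t_i,t_{i+1}]$ and substituting the displacement bound shows the correction is at most $C'\eta L_2^3\Delta^2\int_{t_i}^{t_{i+1}}\mE[\|\nabla F\|^2]ds$; picking the constant in $\Delta = c/\sqrt{\eta}$ so that $C'\eta L_2^3\Delta^2 \leq 1/2$ absorbs the correction into half of the gradient-norm term, giving the per-epoch decrease
\[ \mE[F(\bx_{t_{i+1}})] - \mE[F(\bx_{t_i})] \leq -\tfrac{1}{2}\int_{t_i}^{t_{i+1}}\mE[\|\nabla F(\bx_s)\|^2]ds. \]
Telescoping over $i = 0,\dots,I-1$ yields the first (non-convex) conclusion. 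For the PL case, substituting $\mE[\|\nabla F\|^2] \geq 2\gamma\tilde h(t)$ with $\tilde h(t) = \mE[F(\bx_t)-\inf F]$ into the per-epoch descent gives $\tilde h(t_{i+1}) - \tilde h(t_i) \leq -\gamma\int_{t_i}^{t_{i+1}}\tilde h(s)ds$; using the same correction control to rule out growth of $\tilde h$ within the epoch lets one replace the integral by $\Delta\tilde h(t_{i+1})$, producing $\tilde h(t_{i+1}) \leq (1+\gamma\Delta)^{-1}\tilde h(t_i)$, which iterated over $I = \eta T/\Delta$ epochs gives the announced $e^{-\gamma\eta T}$ rate. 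Complexity bookkeeping is then mechanical: each epoch costs one full gradient ($N$ evals) plus $M = \Delta/\eta$ stochastic gradients, with $\eta = N^{-2/3}$ balancing $M = N$; plugging $\eta T = \cO(1/\epsilon)$ gives $I = \cO(N^{-1/3}/\epsilon)$ epochs and total cost $\cO(N^{2/3}/\epsilon)$, while $\eta T = \cO(\gamma^{-1}\log\epsilon^{-1})$ yields $\cO((N+\gamma^{-1}N^{2/3})\log\epsilon^{-1})$.

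The main technical obstacle will be the absorption step: the constant $c$ in $\Delta = c/\sqrt{\eta}$ has to be small enough that the Gronwall-amplified variance correction is at most half the gradient-norm term, yet large enough that each epoch covers nontrivial continuous time; the two constraints are compatible only because $\eta\Delta = \cO(\sqrt{\eta})\to 0$ decouples them. The PL subtlety is converting the per-epoch integral inequality into pointwise geometric decay without invoking convexity of $F$, which requires a separate lemma showing $\tilde h$ does not increase in expectation on $[t_i, t_{i+1}]$; this follows from the same displacement bound, but writing it out cleanly without circularity is the most delicate piece of the proof.
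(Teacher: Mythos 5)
Your non-convex half is sound and takes a genuinely different route from the paper. The paper builds a Lyapunov function $R_t(\bx)=F(\bx)+\frac{c_t}{2}\|\bx-\bx_{t_0}\|^2$ with a time-varying coefficient solving $(\beta+\eta L_2)c_t+\eta L_1L_2+c_t'=0$, and the boundary conditions $c_{t_0}=\sqrt{\eta}$, $c_{t_1}=0$, $\beta=\sqrt{\eta}$ are what force $\Delta=\cO(1/\sqrt{\eta})$; you instead close a self-referential displacement bound by Gronwall and absorb the It\^{o} correction into half of the gradient term by tuning the constant in $\Delta=c/\sqrt{\eta}$. Both give the same per-epoch descent and telescoping; yours is more elementary (no auxiliary ODE for $c_t$), and your variance bound $\tr(\Sigma_{\rm SVRG}(\by,\bx))\leq L_2^2\|\bx-\by\|^2$ is cleaner than the paper's $L_1L_2\|\bx-\by\|^2$. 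The tension you worry about (the constant $c$ being large enough) is immaterial: $c$ only needs to be a fixed constant depending on $L_2$, and the order of the complexity count is unchanged.

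The PL half has a genuine gap, and it is exactly the step you flag as delicate. The lemma you need --- that $\tilde h(t)=\mE[F(\bx_t)-\inf F]$ does not increase on $[t_i,t_{i+1}]$ --- is false as a monotonicity statement: $\frac{d}{dt}\tilde h=-\mE[\|\nabla F(\bx_t)\|^2]+\frac{\eta}{2}\mE[\tr(\Sigma_{\rm SVRG}\nabla^2F)]$ can be positive when the gradient is small but the intra-epoch displacement (hence the variance) is not. What your displacement bound actually yields is a quantitative substitute of the form $\tilde h(t_{i+1})\leq\tilde h(t)+\kappa\bigl(\tilde h(t_i)-\tilde h(t_{i+1})\bigr)$ with $\kappa=\cO(c^2L_2^3)$, and feeding this into your integral inequality gives a per-epoch contraction factor $\frac{1+\gamma\Delta\kappa}{1+\gamma\Delta(1+\kappa)}$, which saturates at the constant $\frac{\kappa}{1+\kappa}$ when $\gamma\Delta\gg1$. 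Since the theorem allows $\gamma\gg\sqrt{\eta}$ while you keep $\Delta=c/\sqrt{\eta}$ fixed, in that regime your route can only deliver a constant-factor decrease per epoch, i.e.\ decay like $e^{-c'\sqrt{\eta}\,\eta T}$ after $I=\eta T/\Delta$ epochs, which is far weaker than the claimed $e^{-\gamma\eta T}$ (in the benign regime $\gamma\Delta\leq1$ you still only get $e^{-c\gamma\eta T}$ with $c<1$, a cosmetic loss, but the saturation is not cosmetic). The paper sidesteps both issues by putting the integrating factor into the Lyapunov coefficient: it re-solves $(2\gamma+\beta+\eta L_2)c_t+\eta L_1L_2+c_t'=0$, which gives $\frac{\partial}{\partial t}\mE[R_t-\inf_{\bx}F(\bx)]\leq-\gamma\,\mE[R_t-\inf_{\bx}F(\bx)]$ directly (no integral-to-endpoint conversion is needed) and, crucially, forces the epoch length to adapt to $\Delta=\min\{\cO(1/\sqrt{\eta}),\cO(1/\gamma)\}$ in the PL case; this adapted $\Delta$ is also what drives the $\cO((N+\gamma^{-1}N^{2/3})\log\epsilon^{-1})$ bookkeeping. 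To repair your argument you would need both the quantitative no-increase substitute and an explicitly $\gamma$-dependent epoch length, at which point you have essentially reconstructed the paper's continuous-time Lyapunov argument.
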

\begin{proof}
	Let us define the Lyapunov function, for $t\in [t_{i}, t_{i + 1}]$ (with out loss of generality, let $i = 0$), 
	\begin{equation}
		\small
		R_{t}(\bx) = F(\bx) + \frac{c_{t}}{2}\|\bx - \bx_{t_{0}}\|^{2}. 
	\end{equation}
	Then we have for $\bx_{t}$ defined in \eqref{eq:svrg ode flow in euclidean space}
	\begin{equation}\label{eq:svrg drt}
		\small
		d{R_{t}(\bx_{t})} = \left\langle\nabla F(\bx_{t}), d\bx_{t}\right\rangle + \frac{c_{t}^{\prime}}{2}\|\bx_{t} - \bx_{t_{0}}\|^{2}dt + c_{t}\left\langle\bx_{t} - \bx_{t_{0}}, d\bx_{t}\right\rangle.
	\end{equation}
	Due to \eqref{eq:svrg ode flow in euclidean space}, $\mE\left[\|\bx - \mE[\bx]\|^{2}\right] \leq \mE\left[\|\bx\|^{2}\right]$, and Lemma \ref{lem:trace inequality} we have 
	\begin{equation}\label{eq:svrg ode f descent}
		\small
		\begin{aligned}
			\mE& \left[\left\langle\nabla F(\bx_{t}), \frac{d\bx_{t}}{dt}\right\rangle\right] = -\mE\left[\|\nabla F(\bx_{t})\|^{2}\right] + \frac{\eta}{2}\mE\left[\tr\left(\Sigma_{\rm SVRG}(\bx_{t_{0}}, \bx_{t})\right)\nabla^{2} F(\bx_{t})\right] \\
			& \leq -\mE\left[\|\nabla F(\bx_{t})\|^{2}\right] + \frac{\eta\lambda_{\max}(\nabla^{2} F(\bx_{t}))}{2}\mE\left[\tr(\Sigma_{\rm SVRG}(\bx_{t_{0}}, \bx_{t}))\right] \\
			& = -\mE\left[\|\nabla F(\bx_{t})\|^{2}\right] + \frac{\eta\lambda_{\max}(\nabla^{2} F(\bx_{t}))}{2}\mE\left[\left\|\nabla f_{\xi}(\bx_{t}) - \nabla f_{\xi}(\bx_{t_{0}}) + \nabla F(\bx_{t_{0}}) - \nabla F(\bx_{t})\right\|^{2}\right] \\
			& \leq -\mE\left[\|\nabla F(\bx_{t})\|^{2}\right] + \frac{\eta L_{2}}{2}\mE\left[\left\|\nabla f_{\xi}(\bx_{t}) - \nabla f_{\xi}(\bx_{t_{0}})\right\|^{2}\right] \\
			& \leq -\mE\left[\|\nabla F(\bx_{t})\|^{2}\right] + \frac{\eta L_{2}^{2}}{2}\mE\left[\left\|\bx_{t} - \bx_{t_{0}}\right\|^{2}\right].
		\end{aligned}
	\end{equation}
	On the other hand, by Young's inequality, for some $\beta > 0$, 
	\begin{equation}\label{eq:svrg ode distance descent}
		\small
		\begin{aligned}
			\mE\left[\left\langle\bx_{t} - \bx_{t_{0}}, \frac{d\bx_{t}}{dt}\right\rangle\right] & = \mE\left[-\langle\bx_{t} - \bx_{t_{0}}, \nabla F(\bx_{t})\rangle\right] + \frac{\eta}{2}\mE\left[\tr\left(\Sigma_{\rm SVRG}(t_{0}, \bx_{t})\right)\right] \\
			& \leq \frac{\beta}{2}\mE\left[\|\bx_{t} - \bx_{t_{0}}\|^{2}\right] + \frac{1}{2\beta}\mE\left[\left\|\nabla F(\bx_{t})\right\|^{2}\right] + \frac{\eta L_{2}}{2}\mE\left[\|\bx_{t} - \bx_{t_{0}}\|^{2}\right].
		\end{aligned}
	\end{equation}
	By plugging \eqref{eq:svrg ode f descent} and \eqref{eq:svrg ode distance descent} into \eqref{eq:svrg drt}, we have 
	\begin{equation}\label{eq:svrg ode drt upper bound}
		\small
		\frac{\partial}{\partial{t}}\mE\left[R_{t}(\bx_{t})\right] \leq -\left(1 - \frac{c_{t}}{2\beta}\right)\mE\left[\|\nabla F(\bx_{t})\|^{2}\right] + \left[\frac{(\beta + \eta L_{2})c_{t}}{2} + \frac{\eta L_{1}L_{2}}{2} + \frac{c_{t}^{\prime}}{2}\right]\mE\left[\|\bx_{t} - \bx_{t_{0}}\|^{2}\right].
	\end{equation}
	By making 
	\begin{equation}\label{eq:ct ode}
		\small
		(\beta + \eta L_{2})c_{t} + \eta L_{1}L_{2} + c_{t}^{\prime} = 0, 
	\end{equation}
	we get 
	\begin{equation}
		\small
		c_{t_{1}} = e^{-(\beta + \eta L_{2})(t_{1} - t_{0})}c_{t_{0}} - \frac{\eta L_{1}L_{2}}{\beta + \eta L_{2}}\left(1 - e^{-(\beta + \eta L_{2})(t_{1} - t_{0})}\right). 
	\end{equation}
	By taking $c_{t_{0}} = \sqrt{\eta}, c_{t_{1}} = 0, \beta = \sqrt{\eta}$ we get 
	\begin{equation}\label{eq:svrg delta t}
		\small
		t_{1} - t_{0} = \frac{\log\left({1 + \sqrt{\eta} / L_{2} + 1 / L_{1}L_{2}}\right)}{\sqrt{\eta} + \eta L_{2}} \leq \cO\left(\frac{1}{\sqrt{\eta}}\right),
	\end{equation}
	when $\eta \to 0$. On the other hand, due to $c_{t}^{\prime} < 0$ and \eqref{eq:ct ode}, we have 
	\begin{equation}\label{eq:descent lemma for svrg ode}
		\small
		\begin{aligned}
			\frac{1}{2(t_{1} - t_{0})}\int_{t_{0}}^{t_{1}}\mE\left[\|\nabla F(\bx_{t})\|^{2}\right]dt 
			& = \left(\frac{1}{t_{1} - t_{0}}\right)\int_{t_{0}}^{t_{1}}\left(1 - \frac{c_{t_0}}{2\beta}\right)\mE\left[\|\nabla F(\bx_{t})\|^{2}\right]dt\\
			& \leq \left(\frac{1}{t_{1} - t_{0}}\right)\int_{t_{0}}^{t_{1}}\left(1 - \frac{c_{t}}{2\beta}\right)\mE\left[\|\nabla F(\bx_{t})\|^{2}\right]dt \\
			& \leq \frac{\mE\left[R_{t_{0}}(\bx_{t_{0}})\right] - \mE\left[R_{t_{1}}(\bx_{t_{1}})\right]}{t_{1} - t_{0}} \\
			& = \frac{\mE\left[F(\bx_{t_{0}}) - F(\bx_{t_{1}})\right]}{t_{1} - t_{0}}. 
		\end{aligned}
	\end{equation}
	Thus, for any $T = t_{I}$ and $\Delta = t_{1} - t_{0} = \cdots = t_{I} - t_{I - 1}$ defined in \eqref{eq:svrg delta t}, we have 
	\begin{equation}
		\small
		\frac{1}{I\Delta}\sum\limits_{i = 1}^{I}\int_{t_{i}}^{t_{i + 1}}\mE\left[\|\nabla F(\bx_{t})\|^{2}\right]dt \leq \frac{2(\mE\left[F(\bx_{t})\right] - \inf_{\bx} F(\bx))}{T},
	\end{equation} 
	which leads to the required convergence rate. Next, let us check the computational complexity of it. Due to $I\Delta = \eta T = \cO(\epsilon^{-1})$, $\Delta = \cO(1 / \sqrt{\eta})$, and Proposition \ref{pro:riemannian svrg flow}, we should running the SVRG for $I = \cO(\sqrt{\eta} / \epsilon)$ epochs with $M = \Delta / \eta$ steps in each epoch in Algorithm \ref{alg:discrete riemannian svrg}. Besides, note that for each epoch, the computational complexity is of order $\cO(M + N) = \cO(\Delta / \eta + N)$. Then, by taking $\eta = \cO(N^{-2/3})$, the computational complexity of SVRG flow is of order 
	\begin{equation}
		\small
		I\cO(M + N)= I\cO(\Delta / \eta + N) = \cO\left(\frac{\sqrt{\eta}}{\epsilon}\left(\eta^{-\frac{3}{2}} + N\right)\right) = \cO\left(\frac{N^{\frac{2}{3}}}{\epsilon}\right). 
	\end{equation}
	\par
	Next, let us check the results when $F(\bx)$ satisfies PL inequality with coefficient $\gamma$. We will follow the above notations in the rest of this proof. For any $t \in [t_{0}, t_{1}]$, we can reconstruct the $c_{t}$ in Lyapunov function \eqref{eq:svrg drt} such that 
	\begin{equation}
		\small
		(2\gamma + \beta + \eta L_{2})c_{t} + \eta L_{1}L_{2} + c_{t}^{\prime} = 0, 
	\end{equation}
	which implies 
	\begin{equation}
		\small
		c_{t_{1}} =  e^{-(2\gamma + \beta + \eta L_{2})(t_{1} - t_{0})}c_{t_{0}} - \frac{\eta L_{1}L_{2}}{2\gamma + \beta + \eta L_{2}}\left(1 - e^{-(2\gamma + \beta + \eta L_{2})(t - t_{0})}\right). 
	\end{equation}
	Similarly, by taking $c_{t_{0}} = \sqrt{\eta}, c_{t_{1}} = 0, \beta = \sqrt{\eta}$, we get
	\begin{equation}\label{eq:svrg pl delta t}
		\small
		t_{1} - t_{0} = \frac{\log\left({1 + 2\gamma/\sqrt{\eta}L_{1}L_{2} + \sqrt{\eta} / L_{2} + 1 / L_{1}L_{2}}\right)}{2\gamma + \sqrt{\eta} + \eta L_{2}} = \min\left\{\cO\left(\frac{1}{\sqrt{\eta}}\right), \cO\left(\frac{1}{\gamma}\right)\right\} ,
	\end{equation}
	when $\eta \to 0$. Plugging this into \eqref{eq:svrg ode drt upper bound}, combining PL inequality and the monotonically decreasing property of $c_{t}$, we have
	\begin{equation}
		\small
		\begin{aligned}
			\frac{\partial}{\partial{t}}\mE\left[R_{t}(\bx_{t}) - \inf_{\bx}F(\bx)\right] & \leq -2\gamma\left(1 - \frac{c_{t}}{2\beta}\right)\left(\mE\left[F(\bx_{t})\right] - \inf_{\bx}F(\bx)\right) - \gamma c_{t}\mE\left[\|\bx_{t} - \bx_{t_{0}}\|^{2}\right] \\
			& \leq \gamma\left(\mE\left[F(\bx_{t})\right] - \inf_{\bx}F(\bx)\right) - \gamma c_{t}\mE\left[\|\bx_{t} - \bx_{t_{0}}\|^{2}\right] \\
			& = -\gamma \mE\left[R_{t}(\bx_{t}) - \inf_{\bx}F(\bx)\right].
		\end{aligned}
	\end{equation}
	Hence 
	\begin{equation}
		\small
		\mE\left[F(\bx_{t_{1}}) - \inf_{\bx}F(\bx)\right] \leq e^{-\gamma(t_{1} - t_{0})}\mE\left[F(\bx_{t_{0}}) - \inf_{\bx}F(\bx)\right].
	\end{equation}
	Then for any $T = t_{m}$ and $\Delta = t_{i + 1} - t_{i} = \cdots = t_{1} - t_{0}$, we have 
	\begin{equation}
		\small
		\mE\left[F(\bx_{T}) - \inf_{\bx}F(\bx)\right] \leq e^{-\gamma T}\mE\left[F(\bx_{0}) - \inf_{\bx}F(\bx)\right],
	\end{equation}
	which implies the exponential convergence rate of SVRG flow under PL inequality. 
	\par
	On the other hand, let us check the computational complexity of it under PL inequality. As can be seen, to make $\mE[F(\bx_{t_{1}}) - \inf_{\bx}F(\bx)] \leq \epsilon$, we should take $I\Delta = \eta T = \cO(\gamma^{-1}\log{\epsilon^{-1}})$. Due to \eqref{eq:svrg pl delta t}, the SVRG flow should be conducted for $I = \max\{\cO(\sqrt{\eta}\gamma^{-1}\log{\epsilon^{-1}}), \cO(\log{\epsilon^{-1}})\}$ epochs with $M = \Delta / \eta$ steps in each epoch, and the computational complexity for each epoch is of order $\cO(M + N) = \cO(\Delta / \eta + N)$. So that the total computational complexity is of order 
	\begin{equation}
		\small
		I\cO(\Delta / \eta + N) = \max\{\cO(\sqrt{\eta}\gamma^{-1}\log{\epsilon^{-1}}), \cO(\log{\epsilon^{-1}})\}(\min\left\{\cO(1 / \sqrt{\eta}), \cO(1 / \gamma)\right\}/\eta + N),
	\end{equation} 
	If $\gamma^{-1} \geq N^{1/3}$, we take $\eta = \cO(N^{-2/3})$, the above equality becomes $\cO((N + \gamma^{-1}N^{2/3})\log{\epsilon^{-1}})$. On the other hand, when $\gamma^{-1} \leq N^{1/3}$, and $\eta = \cO(N^{-2/3})$, the above equality is also $\cO((N + \gamma^{-1}N^{2/3})\log{\epsilon^{-1}})$. 
\end{proof}
\par
Next, we prove the convergence rate of Riemannian SVRG flow as mentioned in main body of this paper. The following theorem is the formal statement of Theorem \ref{thm:convergence of svrg on manifold}. 
\begin{theorem}\label{thm:convergence of svrg on manifold general}
	Let $\pi_{t}$ in \eqref{eq:svrg flow on manifold}, $\Delta = t_{1} - t_{0} = \cdots = t_{I} - t_{I - 1} = \cO(1 / \sqrt{\eta})$, and $\eta T = I\Delta$ for $I$ epochs. Then, if Assumption \ref{ass:stochastic continuous} and $\mE_{\pi_{t_{i}, t}}[\tr(\nabla^{2}\log{(d\pi_{t}/d\mu)}\Sigma_{\rm SVRG})] \leq \lambda_{t}\mE_{\pi_{t_{i}, t}}[\tr(\Sigma_{\rm SVRG})]$ holds for any $t$ and $\lambda_{t}$ is a polynomial of $t$, then  
	\begin{equation}
		\small
		\begin{aligned}
			\frac{1}{\eta T}\sum\limits_{i = 1}^{I}\int_{t_{i}}^{t_{i + 1}}\left\|\grad D_{KL}(\pi_{t}\parallel \mu)\right\|^{2}_{\pi_{t}}dt \leq \frac{2D_{KL}(\pi_{0}\parallel \mu)}{\eta T}.
		\end{aligned}
	\end{equation}
	On the other hand, by taking $\eta = \cO(N^{-2/3})$, the computational complexity of Riemannian SVRG flow is of order $\cO(N^{2/3}/\epsilon)$ when $\min_{0\leq t \leq \eta T} \left\|\grad D_{KL}(\pi_{t}\parallel \mu)\right\|^{2}_{\pi_{t}} \leq \epsilon$.
	\par
	Furthermore, when $F(\pi) = D_{KL}(\pi \parallel \mu)$ satisfies log-Sobolev inequality \eqref{eq:log Sobolev inequality}, we have 
	\begin{equation}
		\small
		D_{KL}(\pi_{\eta T}\parallel \mu) \leq e^{-\gamma \eta T}D_{KL}(\pi_{0}\parallel \mu).
	\end{equation} 
	Besides that, it takes $\cO((N + \gamma^{-1}N^{2/3})\log{\epsilon^{-1}})$ computational complexity to make $D_{KL}(\pi_{\eta T}\parallel \mu)\leq \epsilon$. 
\end{theorem}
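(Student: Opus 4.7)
The plan is to mimic, in Wasserstein space, the Lyapunov argument used for the Euclidean SVRG flow in Theorem \ref{thm:convergence of svrg on euclidean}. Specifically, for $t\in[t_i,t_{i+1}]$ I would define
\begin{equation*}
R_t = D_{KL}(\pi_t\parallel\mu) + \frac{c_t}{2}\,\mE_{\pi_{t_i,t}}\bigl[\|\bx-\by\|^2\bigr],
\end{equation*}
where $(\by,\bx)\sim\pi_{t_i,t}$ is the joint law associated to the SVRG flow \eqref{eq:svrg flow on manifold} and $c_t$ is a non-increasing coefficient to be chosen. The goal is to differentiate $R_t$ along the flow, show that choosing $c_t$ via a suitable ODE kills the $\|\bx-\by\|^2$ piece, and read off a clean ``descent lemma'' per epoch analogous to \eqref{eq:descent lemma for svrg ode}.

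First I would compute $\frac{d}{dt}D_{KL}(\pi_t\parallel\mu)$ by plugging \eqref{eq:svrg flow on manifold} into $\int(1+\log\pi_t-\log\mu)\partial_t\pi_t$. Integration by parts in $\bx$ gives the usual $-\|\grad D_{KL}(\pi_t\parallel\mu)\|_{\pi_t}^2$ plus cross terms; the terms involving $\nabla\log\pi_{t_i,t}$ and $\nabla\cdot\Sigma_{\rm SVRG}$ combine, after a second integration by parts, into $\tfrac{\eta}{2}\mE_{\pi_{t_i,t}}[\tr(\nabla^2\log(d\pi_t/d\mu)\Sigma_{\rm SVRG})]$. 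The working hypothesis $\mE_{\pi_{t_i,t}}[\tr(\nabla^2\log(d\pi_t/d\mu)\Sigma_{\rm SVRG})]\le\lambda_t\mE_{\pi_{t_i,t}}[\tr(\Sigma_{\rm SVRG})]$ then replaces this by $\lambda_t\eta\,\mE[\tr(\Sigma_{\rm SVRG})]$, and Assumption \ref{ass:stochastic continuous} (Lipschitz $\nabla\log\mu_\xi$) yields $\tr(\Sigma_{\rm SVRG}(\by,\bx))\le 4L_2^2\|\bx-\by\|^2$ via the definition \eqref{eq:svrg variance} and the identity $\nabla\log\mu_\xi(\bx)-\nabla\log\mu_\xi(\by)-\nabla\mE_\xi[\log\mu_\xi](\bx-\by)=\cO(L_2\|\bx-\by\|)$ in $L^2$-norm.

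Next I would differentiate the tracking term. By Lemma \ref{lem:equivalence}, \eqref{eq:svrg flow on manifold} corresponds to an SDE with drift $\nabla\log(d\mu/d\pi_t)(\bx_t)$ and noise $(\sqrt{\eta}\Sigma_{\rm SVRG}^{1/2},\sqrt{2}\bI)dW_t$, so It\^o's formula gives $\frac{d}{dt}\mE[\|\bx_t-\by_{t_i}\|^2]=2\mE\langle\bx_t-\by_{t_i},\nabla\log(d\mu/d\pi_t)\rangle+\eta\mE[\tr\Sigma_{\rm SVRG}]+2d$. Young's inequality bounds the inner product by $\tfrac{1}{2\beta}\|\grad D_{KL}\|_{\pi_t}^2+\tfrac{\beta}{2}\|\bx_t-\by_{t_i}\|^2$, and again $\tr(\Sigma_{\rm SVRG})\le 4L_2^2\|\bx_t-\by_{t_i}\|^2$. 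Collecting, I obtain
\begin{equation*}
\tfrac{d}{dt}\mE[R_t]\le -\Bigl(1-\tfrac{c_t}{2\beta}\Bigr)\|\grad D_{KL}(\pi_t\parallel\mu)\|_{\pi_t}^2+\Bigl[\tfrac{(\beta+4\eta L_2^2(1+\lambda_t))c_t}{2}+\tfrac{\eta L_1L_2}{2}+\tfrac{c_t'}{2}\Bigr]\mE[\|\bx_t-\by_{t_i}\|^2]+\text{const}\cdot\eta d.
\end{equation*}
Choosing $c_t$ so that the bracketed coefficient vanishes and fixing $c_{t_0}=\sqrt{\eta}$, $c_{t_{i+1}}=0$, $\beta=\sqrt{\eta}$ forces $\Delta=t_{i+1}-t_i=\cO(1/\sqrt{\eta})$, exactly as in \eqref{eq:svrg delta t}; telescoping over epochs and using $R_{t_i}=D_{KL}(\pi_{t_i}\parallel\mu)$ at the endpoints yields \eqref{eq:riemannian svrg convergence nonconvex}. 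For the log-Sobolev part, I change the ODE for $c_t$ to include an extra $2\gamma c_t$ term so that log-Sobolev converts $-\|\grad D_{KL}\|^2_{\pi_t}$ into $-2\gamma D_{KL}(\pi_t\parallel\mu)$, giving $\frac{d}{dt}\mE[R_t]\le-\gamma\mE[R_t]$ and hence \eqref{eq:riemannian svrg convergence pl} after telescoping. The computational complexities follow the same bookkeeping as in Theorem \ref{thm:convergence of svrg on euclidean}: each epoch costs $\cO(N+\Delta/\eta)$ (one full gradient plus $M=\Delta/\eta$ stochastic steps), and the choice $\eta=\cO(N^{-2/3})$ balances both terms.

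The hard part will be the double integration by parts on the Fokker--Planck side: because \eqref{eq:svrg flow on manifold} involves the conditional density $\pi_{t_i,t}(\by,\bx)$ rather than a plain marginal, the $\nabla_\bx\log\pi_{t_i,t}$ factor in the flow must be combined with $\nabla\log(d\pi_t/d\mu)$ in a way that produces a Hessian trace rather than an unbounded score term, and this is exactly why the auxiliary inequality involving $\lambda_t$ is imposed on the initial measure $\pi_0$. Once that reduction is in place, the remainder of the argument is a direct Wasserstein transcription of the Euclidean Lyapunov computation.
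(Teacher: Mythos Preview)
Your overall strategy matches the paper's proof exactly: the same Lyapunov function $R_t = D_{KL}(\pi_t\parallel\mu) + \tfrac{c_t}{2}\mE_{\pi_{t_i,t}}[\|\bx-\by\|^2]$, the same integration-by-parts reduction of the KL derivative to $-\|\grad D_{KL}\|^2_{\pi_t} + \tfrac{\eta}{2}\mE_{\pi_{t_i,t}}[\tr(\nabla^2\log(d\pi_t/d\mu)\,\Sigma_{\rm SVRG})]$, the same ODE for $c_t$ with $c_{t_i}=\sqrt{\eta}$, $c_{t_{i+1}}=0$, $\beta=\sqrt{\eta}$ forcing $\Delta=\cO(1/\sqrt{\eta})$, and the same complexity bookkeeping and log-Sobolev modification.

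There is, however, a concrete error in how you differentiate the tracking term. You write that by Lemma~\ref{lem:equivalence} the flow \eqref{eq:svrg flow on manifold} corresponds to an SDE with drift $\nabla\log(d\mu/d\pi_t)$ \emph{and} noise $(\sqrt{\eta}\Sigma_{\rm SVRG}^{1/2},\sqrt{2}\bI)$. This double-counts the score: the SDE underlying \eqref{eq:svrg flow on manifold} is \eqref{eq:svrg sde}, whose drift is $\nabla\log\mu$ with that noise; the $-\nabla\log\pi_t$ piece of $\grad D_{KL}$ appears only \emph{after} converting the $\sqrt{2}\bI\,dW_t$ into drift via Lemma~\ref{lem:equivalence}. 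Your It\^o computation therefore produces a spurious $+2d$, and since it is weighted by $c_t\le\sqrt{\eta}$ (not $\eta$), the residual you label ``$\text{const}\cdot\eta d$'' is really $\cO(\sqrt{\eta}\,d)$ and survives in the final bound after integrating over $\Delta=\cO(1/\sqrt{\eta})$. The paper avoids this entirely by differentiating $\mE[\|\bx_t-\bx_{t_i}\|^2]$ along the \emph{deterministic} velocity field $\bh(\pi_t)$ of the Fokker--Planck equation itself, i.e.\ $\tfrac{d}{dt}G(\pi_{t_i,t})=\mE[\langle\bx_t-\bx_{t_i},\,d\bx_t/dt\rangle]$ with $d\bx_t/dt=-\bh(\pi_t)(\bx_t)$; since $\bh$ already contains $\grad D_{KL}(\pi_t\parallel\mu)$ explicitly and there is no It\^o correction, Young's inequality plus the same integration by parts gives directly $\tfrac{1}{2\beta}\|\grad D_{KL}\|_{\pi_t}^2+(\tfrac{\beta}{2}+\tfrac{\eta L_2}{2})G$ with no dimensional constant. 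With that one fix (use the probability-flow ODE, not It\^o on the SDE) your outline goes through; a minor secondary point is that the $\lambda_t$-term coming from the KL differentiation enters the bracket as a standalone $\tfrac{\eta\lambda_t L_2}{2}$, not as part of the $c_t$ coefficient as you wrote it.
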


\begin{proof}
	Let us consider the Lyapunov function for union probability measure $\pi_{t_{0}, t}$ with $t_{0}\leq t \leq t_{1}$
	\begin{equation}
		\small
		R_{t}(\pi_{t_{0}, t}) = D_{KL}(\pi_{t}\parallel \mu) + \frac{c_{t}}{2}\int \left\|\by - \bx\right\|^{2}\pi_{t_{0}, t}(\by, \bx) = D_{KL}(\pi_{t}\parallel \mu) + \frac{c_{t}}{2}G(\pi_{t_{0}, t}).
	\end{equation}
	Then 
	\begin{equation}\label{eq:svrg manifold drt}
		\small
		\begin{aligned}
			\frac{\partial}{\partial{t}}R_{t}(\pi_{t_{0}, t}) & = \left\langle\grad D_{KL}(\pi_{t}\parallel \mu), \bh(\pi_{t})\right\rangle_{\pi_{t}} + \frac{c_{t}^{\prime}}{2}G(\pi_{t_{0}, t}) + \frac{c_{t}}{2}\frac{\partial}{\partial{t}}G(\pi_{t_{0}, t}), 
		\end{aligned}
	\end{equation}
	where $\partial{\pi_{t}}/\partial{t} = \nabla\cdot\left(\pi_{t}\bh(\pi_{t})\right)$ is used to simplify the notations. Then due to \eqref{eq:svrg flow on manifold}, Lemma \ref{lem:trace inequality} and similar induction to \eqref{eq:svrg ode f descent}
	\begin{equation}\label{eq:svrg manifold descent}
		\small
		\begin{aligned}
			\left\langle\grad D_{KL}(\pi_{t}\parallel \mu), \bh(\pi_{t})\right\rangle_{\pi_{t}}& = -\left\|\grad D_{KL}(\pi_{t}\parallel \mu)\right\|_{\pi_{t}}^{2} \\
			& + \int \left\langle\nabla \log{\frac{d\pi_{t}}{d\mu}}(\bx), \frac{\eta}{2}\int \pi_{t_{0}, t}(\by, \bx)\nabla_{\bx}\cdot \Sigma_{\rm SVRG}(\by, \bx)\right\rangle d\by d\bx\\
			& + \int \left\langle \nabla \log{\frac{d\pi_{t}}{d\mu}}(\bx), \frac{\eta}{2}\int\pi_{t_{0}, t}(\by, \bx)\Sigma_{\rm SVRG}(\by, \bx)\nabla_{\bx}\log{\pi_{t_{0}, t}(\by, \bx)}\right\rangle d\by d\bx \\
			& = -\left\|\grad D_{KL}(\pi_{t}\parallel \mu)\right\|_{\pi_{t}}^{2} + \frac{\eta}{2}\mE_{\pi_{t_{0}, t}}\left[\tr\left(\nabla^{2}\log{\frac{d\pi_{t}}{d\mu}}(\bx)\Sigma_{\rm SVRG}(\by, \bx)\right)\right] \\
			& \leq -\left\|\grad D_{KL}(\pi_{t}\parallel \mu)\right\|_{\pi_{t}}^{2} + \frac{\eta\lambda_{t}}{2}\mE\left[\left\|\nabla \log{\mu_{\xi}}(\bx_{t_{0}}) - \nabla\log{\mu_{\xi}(\bx_{t})}\right\|^{2}\right] \\
			& \leq -\left\|\grad D_{KL}(\pi_{t}\parallel \mu)\right\|_{\pi_{t}}^{2} + \frac{\eta L_{2} \lambda_{t}}{2}G(\pi_{t_{0}, t}), 
		\end{aligned}
	\end{equation}
	where the last inequality is due to the Lipschitz continuity of $\nabla \log{\mu_{\xi}(\bx)}$. Similarly, by Fokker-Planck equation, we get 
	\begin{equation}\label{eq:svrg manifold partial G}
		\small
		\begin{aligned}
			\frac{\partial}{\partial{t}}G(\pi_{t_{0}, t}) & = \frac{\partial}{\partial{t}}\mE\left[\|\bx_{t} - \bx_{t_{0}}\|^{2}\right] \\
			& = \mE\left[\left\langle\bx_{t} - \bx_{t_{0}}, \frac{d\bx_{t}}{dt}\right\rangle\right] \\
			& = \mE\left[\left\langle\bx_{t} - \bx_{t_{0}}, \bh(\pi_{t}(\bx_{t}))\right\rangle\right] \\
			& \leq \frac{1}{2\beta}\|\grad D_{KL}(\pi_{t}\parallel \mu)\|_{\pi_{t}}^{2} + \frac{\beta}{2}G(\pi_{t_{0}, t}) + \frac{\eta}{2}\mE_{\pi_{t_{0}, t}}\left[\tr\left(\Sigma_{\rm SVRG}(\bx_{t_{0}}, \bx_{t})\right)\right] \\
			& \leq \frac{1}{2\beta}\|\grad D_{KL}(\pi_{t}\parallel \mu)\|_{\pi_{t}}^{2} + \left(\frac{\beta}{2} + \frac{\eta L_{2}}{2}\right)G(\pi_{t_{0}, t}).
		\end{aligned}
	\end{equation}
	Combining \eqref{eq:svrg manifold drt}, \eqref{eq:svrg manifold partial G} and \eqref{eq:svrg manifold descent}, we get 
	\begin{equation}
		\small
		\frac{\partial}{\partial{t}}R_{t}(\pi_{t_{0}, t}) \leq -\left(1 - \frac{c_{t}}{2\beta}\right)\left\|\grad D_{KL}(\pi_{t}\parallel \mu)\right\|^{2}_{\pi_{t}} + \left[\frac{(\beta + \eta L_{2})c_{t}}{2} + \frac{\eta\lambda_{t} L_{2}}{2} + \frac{c_{t}^{\prime}}{2}\right]G(\pi_{t_{0}, t}).
	\end{equation}
	By taking 
	\begin{equation}
		\small
		(\beta + \eta L_{2})c_{t} + \eta\lambda_{t} L_{2} + c_{t}^{\prime} = 0,
	\end{equation}
	which implies 
	\begin{equation}\label{eq:ct1}
		\small
		c_{t_{1}} = e^{-(\beta + \eta L_{2})(t_{1} - t_{0})}c_{t_{0}} - \int_{t_{0}}^{t_{1}}\eta L_{2}\lambda_{t}e^{-(\beta + \eta L_{2})(t - t_{0})}dt.
	\end{equation}
	Without loss of generality, let  
	\begin{equation}
		\small
		\begin{aligned}
			a_{p} & = \eta L_{2}\int_{t_{0}}^{t_{1}}(t - t_{0})^{p}e^{-(\beta + \eta L_{2})(t - t_{0})}dt \\
			& = -\frac{\eta L_{2}}{\beta + \eta L_{2}}(t_{1} - t_{0})^{p}e^{-(\beta + \eta L_{2})(t - t_{0})}dt + \eta L_{2}\int_{t_{0}}^{t_{1}}p(t - t_{0})^{p - 1}e^{-(\beta + \eta L_{2})(t - t_{0})}dt \\
			& = pa_{p - 1} -\frac{\eta L_{2}}{\beta + \eta L_{2}}(t_{1} - t_{0})^{p}e^{-(\beta + \eta L_{2})(t - t_{0})} \\
			& = \cdots \\
			& = -\frac{\eta L_{2}}{\beta + \eta L_{2}}\left(1 - e^{-(\beta + \eta L_{2})(t_{1} - t_{0})}\mathrm{Poly}(t_{1} - t_{0}, p)\right), 
		\end{aligned}
	\end{equation}
	where $\mathrm{Poly}(t_{1} - t_{0}, p)$ is a $p$-th order polynomial of $t_{1} - t_{0}$. W.l.o.g, let  
	\begin{equation}
		\small
		\lambda_{t} = \lambda_{t_{0}} + \mathrm{Poly}(t - t_{0}, p) = \lambda_{t_{0}} + \sum_{i=1}^{p}b_{i}(t - t_{0})^{p}. 
	\end{equation}
	Then we have 
	\begin{equation}
		\small
		\begin{aligned}
			\int_{t_{0}}^{t_{1}}\eta L_{2}\lambda_{t}& e^{-(\beta + \eta L_{2})(t - t_{0})}dt = \lambda_{t_{0}}a_{0} + \sum_{i=1}^{p}a_{i}b_{i}\\
			& = \frac{\eta L_{2}\lambda_{t_{0}}}{\beta + \eta L_{2}}\left(1 - e^{-(\beta + \eta L_{2})(t_{1} - t_{0})}\right) - \frac{\eta L_{2}}{\beta + \eta L_{2}}\left(\sum_{i=1}^{p}b_{i} + e^{-(\beta + \eta L_{2})(t_{1} - t_{0})}\mathrm{Poly}(t_{1} - t_{0}, p)\right) \\
			& = \frac{\eta L_{2}}{\beta + \eta L_{2}}\left(\lambda_{t_{0}} - \sum_{i=1}^{p}b_{i} - \mathrm{Poly}(t_{1} - t_{0}, p)e^{-(\beta + \eta L_{2})(t_{1} - t_{0})}\right)
		\end{aligned}
	\end{equation}
	By invoking $c_{t_{0}} = \sqrt{\eta}, c_{t_{1}} = 0$, $\beta = \sqrt{\eta}$, and the above equality into \eqref{eq:ct1} we get
	\begin{equation}
		\small
		t_{1} - t_{0} = \frac{1}{\sqrt{\eta} + \eta L_{2}}\log{\frac{1 + \sqrt{\eta}L_{2} + L_{2}\mathrm{Poly}(t_{1} - t_{0}, p)}{L_{2}(\lambda_{0} - \sum_{i=1}^{p}b_{i})}},
	\end{equation}
	which implies $t_{1} - t_{0} = \cO(1 / \sqrt{\eta})$ as in \eqref{eq:svrg delta t} (note that the value of $b_{i}$ are automatically adjusted to make the above equality meaningful), by taking $\eta\to 0$. Thus, similar to \eqref{eq:descent lemma for svrg ode}, we get 
	\begin{equation}
		\small
		\frac{1}{2(t_{1} - t_{0})}\int_{t_{0}}^{t_{1}}\left\|\grad D_{KL}(\pi_{t}\parallel \mu)\right\|^{2}dt \leq \frac{D_{KL}(\pi_{t_{0}}\parallel \mu) - D_{KL}(\pi_{t_{1}}\parallel \mu)}{t_{1} - t_{0}}.
	\end{equation}
	Then the two conclusions are similarly obtained as in Theorem \ref{thm:convergence of svrg on euclidean} by taking $\eta = \cO(N^{-2/3})$. 
\end{proof}
Notably, the imposed ``proper'' condition on $\pi_{0}$ can be implied by the polynomial upper bound to the spectral norm of Hessian. This is because, from Lemma \ref{lem:trace inequality} and semi-positive definite property of $\Sigma_{\rm SVRG}$, we know 
\begin{equation}
	\small
	\mE_{\pi_{t_{i}, t}}\left[\nabla^{2}\log{(d\pi_{t}/d\mu)}\Sigma_{\rm SVRG}\right] \leq \mE_{\pi_{t_{i, t}}}\left[\lambda_{\max}\left(\nabla^{2}\log{(d\pi_{t}/d\mu)}\right)\tr(\Sigma_{\rm SVRG})\right].
\end{equation}
Then, we may take
\begin{equation}
	\small
	\lambda_{t} = \frac{\mE_{\pi_{t_{i, t}}}\left[\lambda_{\max}\left(\tr(\nabla^{2}\log{(d\pi_{t}/d\mu)}\right)\tr(\Sigma_{\rm SVRG})\right]}{\mE_{\pi_{t_{i, t}}}\left[\tr(\Sigma_{\rm SVRG})\right]}.
\end{equation}
Due to the formulation of Riemannian SVRG \eqref{eq:svrg flow on manifold}, the density $\pi_{t}$ only depends on $\pi_{0}$ and $\mu$. Therefore, under properly chosen $\pi_{0}$, the obtained $\pi_{t}$ can satisfy the imposed condition on the spectral norm. Besides that, we do not impose any restriction on the order of $\lambda_{t}$, while the polynomial function class can approximate any function so that it can be extremely large. Therefore, the imposed polynomial order of $\lambda_{t}$ can be easily satisfied. 

\section{More than Riemannian PL inequality}\label{sec:More than Riemannian PL inequality}
We observe that Theorems \ref{thm:convergence riemannian gd}, \ref{thm:convergence riemannian sgd}, and \ref{thm:convergence of svrg on manifold} demonstrate the nice log-Sobolev inequality \eqref{eq:log Sobolev inequality} improves the convergence rates into global ones. Therefore, we briefly discuss the condition in this section. As mentioned in Section \ref{sec:Preliminaries}, the log-Sobolev inequality is indeed the Riemannian PL inequality in Riemannian manifold. To see this, for a function $f$ defined on $\bbR^{d}$, the PL inequality is that for any global minima $\bx^{*}$ of it, we have  
\begin{equation}\label{eq:pl inequality}
	\small
	2\gamma(f(\bx) - f(\bx^{*})) \leq \|\nabla f(\bx)\|^{2}  \qquad \mathrm{PL}
\end{equation} 
holds for any $\bx$. The PL inequality indicates that all local minima are global minima. It is a nice property that guarantees the global convergence in Euclidean space \citep{karimi2016linear}. Naturally, we can generalize it into Wasserstein space. To this end, let $F(\pi) = D_{KL}(\pi\parallel \mu)$,  the sole global minima is $\pi = \mu$ with $F(\mu) = 0$. Then, in Wasserstein space, the PL inequality \eqref{eq:pl inequality} is generalized to 
\begin{equation}\label{eq:riemannian pl}
	\small
	2\gamma D_{KL}(\pi \parallel \mu) \leq \|\grad D_{KL}(\pi \parallel \mu)\|_{\pi}^{2}, \qquad \mathrm{Riemannian\ PL}. 
\end{equation}
which is log-Sobolev inequality \eqref{eq:log Sobolev inequality} due to \eqref{eq:riemannian gradient of KL}. Thus, our Theorems \ref{thm:convergence riemannian gd}, \ref{thm:convergence riemannian sgd}, and \ref{thm:convergence of svrg on manifold} indicate that Riemannian PL inequality guarantees the global convergence on manifold. 
\par
Moreover, if $f$ in \eqref{eq:pl inequality} has Lipschitz continuous gradient, the properties of quadratic growth (QG) and error bound (EB) are equivalent to the PL inequality \citep{karimi2016linear}
\begin{equation}\label{eq:QG}
	\small
	f(\bx) - f(\bx^{*}) \geq \frac{\gamma}{2} \|\bx - \bx^{*}\|^{2} \qquad \mathrm{QG},
\end{equation}
\begin{equation}
	\small
	\|\nabla f(\bx)\| \geq \gamma\|\bx - \bx^{*}\| \qquad \mathrm{EB}, 
\end{equation}
so that global convergence. In Wasserstein space, the two properties are generalized as  
\begin{equation}
	\small
	D_{KL}(\pi\parallel \mu) \geq \frac{\gamma}{2} \sW_{2}^{2}(\pi, \mu) \qquad \mathrm{Riemannian\ QG},
\end{equation}
\begin{equation}
	\small
	\left\|\nabla\log{\frac{d\pi}{d\mu}}\right\|_{\pi} \geq \gamma \sW_{2}(\pi, \mu) \qquad \mathrm{Riemannian\ EB}.
\end{equation}
Then, the relationship between the three properties in Wasserstein space is illustrated by the following proposition. This proposition is from \citep{otto2000generalization}, and we prove it to make this paper self-contained.  
\begin{restatable}{proposition}{plinequalityrelationship}[Otto-Vallani]\citep{otto2000generalization}
	Let $F(\pi)$ be $D_{KL}(\pi\parallel \mu)$, then Riemannian PL $\Rightarrow$ Riemannian QG, Riemannian PL $\Rightarrow$ Riemannian EB.
\end{restatable}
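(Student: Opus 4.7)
The plan is to follow the classical Otto--Villani argument, adapted to the Riemannian formalism already set up in the paper. The Riemannian PL inequality \eqref{eq:riemannian pl} is simply the log-Sobolev inequality, and the key tool will be the Riemannian gradient flow \eqref{eq:lagevin FP} of $F(\pi)=D_{KL}(\pi\parallel \mu)$, which I will use as an auxiliary curve connecting an arbitrary $\pi$ to $\mu$. Once Riemannian QG is established, Riemannian EB will follow by a short algebraic combination of PL and QG.

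For PL $\Rightarrow$ QG, fix $\pi_0=\pi$ and let $\pi_t$ solve the Riemannian gradient flow from Proposition \ref{pro:riemannian gradient}. Along the flow,
\begin{equation}
\frac{d}{dt}D_{KL}(\pi_t\parallel \mu) \;=\; -\|\grad D_{KL}(\pi_t\parallel \mu)\|_{\pi_t}^{2},
\end{equation}
and by Theorem \ref{thm:convergence riemannian gd} the divergence decays exponentially, so $\pi_t\to\mu$ as $t\to\infty$. Because the tangent vector to $\pi_t$ is $-\grad D_{KL}(\pi_t\parallel \mu)$ (the moving direction in the F--P equation \eqref{eq:fp equation}), the Wasserstein length of the whole curve bounds $\sW_2(\pi,\mu)$:
\begin{equation}
\sW_2(\pi,\mu) \;\leq\; \int_0^\infty \|\grad D_{KL}(\pi_t\parallel \mu)\|_{\pi_t}\, dt.
\end{equation}
Next I invoke PL in the form $\|\grad D_{KL}(\pi_t\parallel \mu)\|_{\pi_t}\geq \sqrt{2\gamma\, D_{KL}(\pi_t\parallel \mu)}$, together with the chain rule
\begin{equation}
\frac{d}{dt}\sqrt{D_{KL}(\pi_t\parallel \mu)} \;=\; -\frac{\|\grad D_{KL}(\pi_t\parallel \mu)\|_{\pi_t}^{2}}{2\sqrt{D_{KL}(\pi_t\parallel \mu)}} \;\leq\; -\sqrt{\tfrac{\gamma}{2}}\,\|\grad D_{KL}(\pi_t\parallel \mu)\|_{\pi_t},
\end{equation}
and integrate from $0$ to $\infty$. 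The left-hand side gives $-\sqrt{D_{KL}(\pi\parallel \mu)}$ (using $\pi_t\to\mu$), and the right-hand side majorises $\sqrt{\gamma/2}\,\sW_2(\pi,\mu)$. Squaring yields Riemannian QG with constant $\gamma/2$.

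For PL $\Rightarrow$ EB, I simply chain PL and QG: QG gives $D_{KL}(\pi\parallel \mu)\geq \tfrac{\gamma}{2}\sW_2^2(\pi,\mu)$, and PL gives $D_{KL}(\pi\parallel \mu)\leq \tfrac{1}{2\gamma}\|\grad D_{KL}(\pi\parallel \mu)\|_\pi^2$, so
\begin{equation}
\tfrac{\gamma}{2}\,\sW_2^2(\pi,\mu) \;\leq\; \tfrac{1}{2\gamma}\,\|\grad D_{KL}(\pi\parallel \mu)\|_\pi^{2},
\end{equation}
which after taking square roots is exactly Riemannian EB with constant $\gamma$.

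The main obstacle is the gradient-flow step: I need to justify (i) that the Riemannian gradient flow from $\pi$ indeed reaches $\mu$ (which is provided by the exponential contraction in Theorem \ref{thm:convergence riemannian gd}, ensuring integrability of the gradient norm in $L^1(dt)$ via Cauchy--Schwarz and the finite $D_{KL}(\pi\parallel \mu)$), and (ii) that the Wasserstein distance is bounded by the length of the curve, which is a standard consequence of viewing $\pi_t$ as the pushforward of an SODE with velocity field $-\grad D_{KL}(\pi_t\parallel \mu)$. Given these, the algebraic manipulation using PL and the chain rule for $\sqrt{D_{KL}}$ completes the proof cleanly.
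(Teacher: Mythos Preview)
Your proposal is correct and follows essentially the same Otto--Villani argument as the paper. The only cosmetic difference is that the paper runs the gradient flow of $G(\pi)=\sqrt{D_{KL}(\pi\parallel\mu)}$ (for which $\|\grad G\|_\pi^2\geq\gamma/2$ under PL, so $\mu$ is reached in finite time), whereas you run the gradient flow of $D_{KL}$ itself and integrate to $t=\infty$; these are time-reparametrizations of the same curve and both yield $\sqrt{D_{KL}(\pi\parallel\mu)}\geq\sqrt{\gamma/2}\,\sW_2(\pi,\mu)$, after which your PL\,+\,QG chaining for EB matches the paper's observation that EB follows once QG is in hand.
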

This proposition indicates that Riemannian PL inequality implies the other two conditions, but not vice-versa. This is because the Lipschitz continuity of Riemannian gradient i.e., $\|\grad D_{KL}(\pi \parallel \mu)\|_{\pi}^{2} \leq L\sW_{2}^{2}(\pi, \mu)$ does not hold as in Euclidean space \citep{villani2009optimal} (Wasserstein distance is weaker than the KL divergence). 
\begin{proof}
	If Riemannian PL $\Rightarrow$ Riemannian QG then Riemannian PL $\Rightarrow$ Riemannian EB is naturally proved. Next, we prove the first claim. Let $G(\pi) = \sqrt{F(\pi)}$, then by chain-rule and Riemannian PL inequality, 
	\begin{equation}
		\small
		\left\|\grad G(\pi)\right\|_{\pi}^{2} = \left\|\frac{\grad D_{KL}(\pi\parallel \mu)}{2D^{\frac{1}{2}}_{KL}(\pi\parallel \mu)}\right\|^{2}_{\pi} \geq \frac{\gamma}{2}. 
	\end{equation}
	Then let us consider 
	\begin{equation}
		\small
		\begin{dcases}
			\frac{\partial}{\partial{t}}\pi_{t} = \Exp_{\pi_{t}}[-\grad G(\pi_{t})] = \nabla\cdot (\pi_{t}\grad G(\pi_{t})), \\
			\pi_{0} = \pi
		\end{dcases}
	\end{equation}
	where the last equality can be similarly proved as in \eqref{pro:riemannian gradient}. Then 
	\begin{equation}
		\small
		\begin{aligned}
			G(\pi_{0}) - G(\pi_{T}) & = \int_{0}^{T} \frac{\partial}{\partial{t}}G(\pi_{t})dt \\
			& = \int_{0}^{T} \mathrm{Dev}G(\pi_{t})[-\grad G(\pi_{t})]dt \\	
			& = \int_{0}^{T} \left\|-\grad G(\pi_{t})\right\|^{2}_{\pi_{t}}dt  \\
			& \geq \int_{0}^{T} \frac{\gamma}{2}dt \\
			& = \frac{\gamma T}{2}. 
		\end{aligned}
	\end{equation}
	Due to this, and $G(\pi) \ge 0$, there exists some $T$ such that $\pi_{T} = \mu$. Since $\pi_{t}$ is a curvature satisfies $\pi_{0} = \pi$ and $\pi_{T} = \mu$, due to the definition of Wasserstein distance is geodesic distance, we have 
	\begin{equation}
		\small
		\sW_{2}(\pi, \mu) = \sW_{2}(\pi_{0}, \pi_{T}) \leq \int_{0}^{T} \|\grad G(\pi_{t})\|_{\pi_{t}} dt .
	\end{equation}
	Thus, 
	\begin{equation}
		\small
		G(\pi_{0}) = G(\pi_{0}) - G(\pi_{T}) = \int_{0}^{T} \left\|-\grad G(\pi_{t})\right\|^{2}_{\pi_{t}}dt \geq \sqrt{\frac{\gamma}{2}}\int_{0}^{T}\geq  \|\grad G(\pi_{t})\|_{\pi_{t}} dt \geq \sqrt{\frac{\gamma}{2}}\sW_{2}(\pi, \mu),
	\end{equation}
	which implies our conclusion. 
\end{proof}
	
\end{document}